\newtheorem{theorem}{Theorem}[section]
\newtheorem*{theorem*}{Theorem}
\newtheorem{lemma}[theorem]{Lemma}
\newcommand\numberthis{\addtocounter{equation}{1}\tag{\theequation}}
\newcounter{note}[section]
\def\sse{\subseteq}
\newcommand{\pr}{\mathbf{P}} 
\newcommand{\E}{\mathbb{E}}
\newcommand{\I}{\mathcal{I}}
\newcommand{\A}{\mathcal{A}}
\newcommand{\cS}{\mathcal{S}}
\newcommand{\param}{\eta}
\newcommand{\ignore}[1]{}
\newcommand{\paren}[1]{\ensuremath{\left(#1\right)}\xspace}
\newcommand{\half}{\frac{1}{2}}
\newcommand{\B}{\mathcal{B}}
\newcommand{\pcomp}{$\mathtt{PCOMP}$}
\newcommand{\scomp}{$\mathtt{SCOMP}$}
\newcommand{\rscomp}{$\mathtt{R}$-$\mathtt{SCOMP}$}
\newenvironment{tbox}{\begin{tcolorbox}[
		enlarge top by=5pt,
		enlarge bottom by=5pt,
		 breakable,
		 boxsep=0pt,
                  left=4pt,
                  right=4pt,
                  top=10pt,
                  arc=0pt,
                  boxrule=1pt,toprule=1pt,
                  colback=white
                  ]%%
	}
{\end{tcolorbox}}
\newcommand{\textbox}[2]{
{
\begin{tbox}
\textbf{#1}
{#2}
\end{tbox}
}
}
\newif\ifConfVersion
\title{Batched Dueling Bandits} 
\author{Arpit Agarwal \and Rohan Ghuge \and Viswanath Nagarajan}
\begin{document}

\maketitle

\begin{abstract}
    The $K$-armed dueling bandit problem, where the feedback is in the form of noisy pairwise comparisons, has been widely studied. Previous works have only focused on the sequential setting where the policy adapts after every comparison. However, in many applications such as search ranking and recommendation systems, it is preferable to perform comparisons in a limited number of \emph{parallel  batches}. We study the  \emph{batched $K$-armed dueling bandit} problem under two standard settings: (i) existence of a Condorcet winner, and (ii) strong stochastic transitivity and  stochastic  triangle inequality. For both settings, we obtain algorithms with a smooth trade-off between the number of batches and regret. Our regret bounds  match the best known sequential regret bounds (up to poly-logarithmic factors),  using only a logarithmic number of batches. We complement our regret analysis with a nearly-matching lower bound. Finally, we also validate our theoretical results via experiments on synthetic and real data.
\end{abstract}

\section{Introduction}

% intro
The $K$-armed dueling bandits problem has been widely studied in  machine learning due to its applications in search ranking, recommendation systems, sports ranking, etc.\ \cite{YueJo11,YueBK+12,Urvoy+13,Ailon+14,Zoghi+14,Zoghi+15,Zoghi+15a,Dudik+15,Jamieson+15,Komiyama+15a,Komiyama+16,Ramamohan+16,ChenFr17}. 
It is a variation of the traditional stochastic bandit problem in which feedback is obtained in the form of pairwise preferences.
This problem falls under the umbrella of \emph{preference learning} \cite{WirthAN+17},
where the goal is to learn from relative feedback (in our case, given two alternatives, which of the two is preferred). Designing learning algorithms for such relative feedback becomes crucial in domains where qualitative feedback is easily obtained, but real-valued feedback would be arbitrary or not interpretable. We illustrate this using the web-search ranking application.

% search ranking application
Web-search ranking is an example of a complex information retrieval system, where the goal is to provide a list (usually \emph{ranked}) of candidate documents to the user of the system in response to a query~\cite{RadlinskiKJ02, Joachims02, YueJoachims09, HofmannWR13}.
Modern day search engines  comprise hundreds of parameters which are used to output a ranked list in response to a query. 
However, manually tuning these parameters can sometimes be infeasible, and online learning frameworks (based on user feedback) have been invaluable in automatically tuning these parameters~\cite{Liu09}. 
These methods do not affect user experience, enable the system to continuously learn about user preferences, and thus continuously adapt to user behavior.
For example, given two rankings $\ell_1$ and $\ell_2$, they can be interleaved and presented to the user in such a way that clicks indicate which of the two rankings is more preferable to the user~\cite{RadlinskiKJ02}.
The availability of such pairwise comparison data 
motivates the study of learning algorithms that exploit such relative feedback.

% motivate use of parallelizability / limited batches 
Previous learning algorithms have focused on a \emph{fully adaptive} setting; in the web-ranking application this corresponds to 
the learning algorithm updating its parameters after each query. Such updates might be impractical in large systems for the following reasons.  If the parameters are fine-tuned for  each user  and  users make multiple queries in a short time, such continuous updates  require a lot of computational power. Even if users are assigned to a small number of classes (and parameters are fine-tuned for each user-class), multiple users from the same class may simultaneously query the system, making it
impractical  to adapt after each interaction.

% introduce batched dueling bandits problem 
Motivated by this, we introduce the \emph{batched $K$-armed dueling bandits problem} (or, batched dueling bandits), where the learning algorithm is only allowed to \emph{adapt a limited number of times}.
Specifically, the algorithm uses at most $B$ \emph{adaptive rounds} and 
in each round it 
commits to a fixed \emph{batch} of 
pairwise comparisons.
The feedback 
for a batch is received simultaneously, 
and the algorithm chooses the next batch based on this (and previous) feedback.

We design four algorithms, namely \pcomp, \scomp, \scomp2 and \rscomp, for batched dueling bandits under a finite time-horizon $T$. We analyze the regret of \pcomp \ under the {Condorcet} assumption, and that of 
the others
under the {strong stochastic transitivity} (SST) and {stochastic triangle inequality} (STI) assumptions. In all cases, we obtain a smooth trade-off between the expected regret and the number of batches, $B$. We complement our upper bound with a nearly matching lower bound on the expected regret. Finally, we run computational experiments to validate our theoretical results.

\subsection{Preliminaries}
The \emph{$K$-armed dueling bandits} problem \cite{YueBK+12} is an online optimization problem, where the goal is to find the best among $K$ bandits $\B = \{b_1, \ldots, b_K\}$ using noisy pairwise comparisons with low \emph{regret}.
In the traditional multi-armed bandit problem \cite{AuerCF02}, an \emph{arm} (or equivalently, bandit) $b_j$ can be pulled at each time-step $t$, which generates a random reward from an unknown stationary distribution with expected value $\mu_j$. 
However, in the $K$-armed dueling bandits problem, each iteration comprises a noisy comparison between two bandits (possibly the same), say $(b_i, b_j)$. 
The outcome of the comparison is an independent random variable, and the probability of picking $b_i$ over $b_j$ is a constant denoted $P_{i, j} = \frac{1}{2} + \epsilon_{i, j}$ where $\epsilon_{i, j} \in (-\frac{1}{2}, \frac{1}{2})$. 
Here $\epsilon_{i, j}$ can be thought of as a measure of distinguishability between the two bandits,
and we use $b_i \succ b_j$ when $\epsilon_{i, j} > 0$. We also refer to $\epsilon_{i, j}$ as the \emph{gap} between $b_i$ and $b_j$.

Throughout the paper, we let $b_1$ refer to the best bandit. To further simplify notation, we define $\epsilon_j = \epsilon_{1, j}$; that is, the gap between $b_1$ and $b_j$. We define the \emph{regret} per time-step as follows: suppose bandits $b_{t_1}$ and $b_{t_2}$ are chosen in iteration $t$, then the regret $r(t) = \frac{\epsilon_{t_1} + \epsilon_{t_2}}{2}$. 
The cumulative regret up to time $T$ is $R(T) = \sum_{t=1}^T r(t)$, where $T$ is the time horizon, and it's assumed that $K \leq T$.  
The cumulative regret can be equivalently stated as $R(T) = \frac{1}{2}\sum_{j=1}^{K} T_j \epsilon_{j}$, where $T_j$ denotes the number comparisons involving $b_j$.
We define $\epsilon_{\min} = \min_{j:\epsilon_j>0}\epsilon_j$ to be the smallest non-zero gap of any bandit with $b_1$.
We say that bandit $b_i$ is a \emph{Condorcet winner} if, and only if, $P_{i, j} \geq \frac{1}{2}$ for all $j \in \B \setminus \{i\}$. Furthermore, we say that the probabilistic comparisons exhibit {\em strong stochastic transitivity} (SST) if there exists an ordering, denoted by $\succeq$, over arms such that for every triple $b_i \succeq b_j \succeq b_k$, we have $ \epsilon_{i, k} \geq \max\{\epsilon_{i,j}, \epsilon_{j, k}\}, $
and exhibits  {\em stochastic triangle inequality} (STI)
if  for every triple $b_i \succeq b_j \succeq b_k$,
$\epsilon_{i, k} \leq \epsilon_{i,j} + \epsilon_{j, k}. $
\ifConfVersion
\else
\fi 

\subsection{Batch Policies}
In traditional bandit settings, actions are performed \emph{sequentially}, utilizing  the results of \emph{all prior actions} in determining the next action. In the batched setting, the algorithm must commit to a round (or \emph{batch}) of actions to be performed  \emph{in parallel}, and can only observe the results after all   actions in the batch have been performed. More formally, in round $r = 1, 2, \ldots$, the algorithm must decide the comparisons to be performed; afterwards \emph{all}  outcomes of the comparisons in batch $r$ are received. The algorithm can then, \emph{adaptively}, select the next batch of comparisons. However, it can use at most a given number, $B$, of batches.

The batch sizes can be chosen \emph{non-adaptively} (fixed upfront) or \emph{adaptively}. In an adaptive policy the batch sizes may even depend on previous observations of the algorithm. An adaptive policy is more powerful than a non-adaptive policy, and may suffer a smaller regret. In this paper, we focus on such adaptive policies. Furthermore, note that the total number of comparisons (across all batches) must sum to $T$. We assume that the values of $T$ and $B$ are known. Observe that when $T = B$, we recover the fully sequential setting.

\subsection{Results and Techniques}

\ifConfVersion
{
\begin{table*}[t]
    \centering
    \caption{A summary of our results}
    {\renewcommand{\arraystretch}{2}%
    \begin{tabular}{|c|c|cc|c|}
        \hline
         \multirow{2}{*}{Setting} &{Fully Adaptive} & \multicolumn{2}{c|}{\textbf{Our Algorithms}}  & \textbf{Our Lower Bound} \\
        & (prior work) & Regret & Rounds & (for $B$ rounds) \\
        \hline
        Condorcet & $O\left(K\frac{\log{T}}{\epsilon_{\min}}\right) + O\left(\frac{K^2}{\epsilon_{\min}}\right)$ &  $O\left(\frac{ K^2T^{1/B} \log(T)}{\epsilon_{\min}}\right)$ & $B$ &  $\Omega\left(\frac{K T^{1/B}}{B^2\epsilon_{\min}} \right) $\\ 
        \hline
        SST + STI & $ O\left(\frac{K\log(T)}{\epsilon_{\min}}\right)$ & $O\left(\frac{ KBT^{1/B} \log(T)}{\epsilon_{\min}}\right)$ & $2B+1$ &  $\Omega\left( \frac{K T^{1/B}}{B^2\epsilon_{\min}} \right) $ \\
        \hline
    \end{tabular}
    }
    \label{table:results}
\end{table*}
}

\else

\begin{table}
    \centering
    {\renewcommand{\arraystretch}{2}%
    \begin{tabular}{|c|c|cc|c|}
        \hline
         \multirow{2}{*}{Setting} &{Fully Adaptive} & \multicolumn{2}{c|}{\textbf{Our Algorithms}}  & \textbf{Our Lower Bound} \\
        & (prior work) & Regret & Rounds & (for $B$ rounds) \\
        \hline
        Condorcet & $O\left(K\frac{\log{T}}{\epsilon_{\min}}\right) + O\left(\frac{K^2}{\epsilon_{\min}}\right)$ &  $O\left(\frac{ K^2T^{1/B} \log(T)}{\epsilon_{\min}}\right)$ & $B$ &  $\Omega\left(\frac{K T^{1/B}}{B^2\epsilon_{\min}} \right) $\\ 
        \hline
        \multirow{1}{*}{SST + STI} & \multirow{1}{*}{$ O\left(\frac{K\log(T)}{\epsilon_{\min}}\right)$} & $O\left(\frac{ KBT^{1/B} \log(T)}{\epsilon_{\min}}\right)$ & $2B+1$ &  \multirow{1}{*}{$\Omega\left( \frac{K T^{1/B}}{B^2\epsilon_{\min}} \right) $} \\
        \hline
    \end{tabular}
    }
    \caption{A summary of our results}
    \label{table:results}
\end{table}
\fi

% first set of results + thm for condorcet
We provide a summary of our results in \Cref{table:results}. Our first result is as follows.
\begin{theorem}\label{thm:condorcet-init}
For any integer $B > 1$, there is an algorithm for batched dueling bandits 
that uses at most $B$ rounds, and if the  instance admits a Condorcet winner, 
the expected regret is bounded by
$$ \E[R(T)] \leq 3KT^{1/B} \log\left(6TK^2B\right) \sum_{j: \epsilon_{j} > 0} \frac{1}{\epsilon_{j}}.$$
\end{theorem}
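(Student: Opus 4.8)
The plan is to design a batched elimination algorithm that maintains a set of active candidate arms and, in each round, runs an all-pairs comparison tournament among the active arms with a carefully chosen (geometrically increasing) per-pair budget. Concretely, I would partition the horizon into $B$ rounds and let the number of comparisons per active pair in round $r$ be roughly $m_r = T^{r/B}$ (up to logarithmic factors needed for confidence), so that $\prod_r$-style telescoping makes the total number of comparisons at most $T$. Within a round, for every ordered pair $(b_i,b_j)$ of currently active arms, compare them $m_r$ times and form the empirical estimate $\widehat P_{i,j}$; using a Hoeffding bound with confidence radius $c_r = \sqrt{\log(6TK^2B)/(2m_r)}$, declare $b_i$ to ``beat'' $b_j$ if $\widehat P_{i,j} > \tfrac12 + c_r$. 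An arm is eliminated at the end of the round if some other active arm beats it. Because the instance has a Condorcet winner $b_1$ with $P_{1,j}\ge \tfrac12$ for all $j$, a union bound over all $\le K^2 B$ pairs and all rounds shows that, with probability at least $1 - 1/(3TK)$ or so, $b_1$ is never eliminated, and moreover no arm $b_j$ survives a round in which $c_r < \epsilon_j/2$ roughly (since then $b_1$'s empirical score against $b_j$ exceeds the threshold).

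The regret bound then follows by charging comparisons to arms. I would argue that arm $b_j$ (with $\epsilon_j>0$) can remain active only for as long as the confidence radius $c_r$ exceeds a constant times $\epsilon_j$; since $c_r$ shrinks geometrically, this caps the ``last round'' $r_j$ in which $b_j$ participates, and the number of comparisons involving $b_j$ is dominated by its participation in round $r_j$, which is at most $K \cdot m_{r_j} = K \cdot T^{r_j/B}$ (comparing against the $\le K$ other active arms). The condition $c_{r_j} \gtrsim \epsilon_j$ translates, via $c_r^2 \asymp \log(6TK^2B)/m_r$, into $m_{r_j} \lesssim \log(6TK^2B)/\epsilon_j^2$, and then $T^{1/B}\cdot m_{r_j-1}\ge m_{r_j}$ gives $m_{r_j}\le T^{1/B}\log(6TK^2B)/\epsilon_j^2$. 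Hence $T_j \le K\,T^{1/B}\log(6TK^2B)/\epsilon_j^2$, and plugging into $R(T) = \tfrac12\sum_j T_j\epsilon_j$ yields $R(T)\le \tfrac{K}{2}T^{1/B}\log(6TK^2B)\sum_{j:\epsilon_j>0}1/\epsilon_j$ on the good event; the bad event contributes at most $O(T\cdot\Pr[\text{bad}])$ which is lower order, and the constants can be absorbed to get the stated factor $3$.

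Two points need care. First, the bookkeeping of per-pair budgets must genuinely telescope to $\le T$ total comparisons while keeping $m_r \asymp T^{r/B}$: I would set the budgets so that the final round $B$ does the bulk of the work, i.e.\ allocate $m_r$ comparisons per pair in round $r$ with $\sum_{r=1}^B K^2 m_r \le T$, which is consistent with $m_r = \Theta(T^{r/B}/K^2)$ since the geometric sum is dominated by its last term; the leading constant in the regret then picks up a factor accounting for the $K^2$ pairs versus the $K$ pairs touching a fixed arm, matching the $K$ (not $K^2$) in the theorem because we charge each surviving arm only its own incident comparisons. Second — and this is the main obstacle — I must handle the case where $b_j$ is eliminated not by $b_1$ directly but by some other arm $b_i$ that itself later gets eliminated: I need to ensure the confidence event is strong enough that \emph{whenever} $c_r$ is small relative to $\epsilon_j$, arm $b_j$ is eliminated by \emph{someone} (it suffices that $b_1$ is still active, which the good event guarantees, and that $b_1$ beats $b_j$ empirically, which holds once $c_r<\epsilon_j/2$). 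So the argument hinges on the invariant ``$b_1$ survives all rounds,'' which is exactly what the union bound over Hoeffding failures delivers; everything else is the geometric-budget accounting sketched above.
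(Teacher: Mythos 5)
Your proposal is correct and follows essentially the same route as the paper's own proof of this theorem: the all-pairs elimination algorithm with per-pair budgets $c_r \approx T^{r/B}$, a Hoeffding-plus-union-bound good event under which the Condorcet winner $b_1$ is never eliminated, and the charging argument that bounds each suboptimal arm's comparison count by $K$ times its comparisons with $b_1$ through its last surviving round. The only cosmetic difference is that the paper does not rescale the budgets by $K^2$ but simply halts when the total comparison budget $T$ is exhausted, which does not affect the analysis.
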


% corollary for condorcet: worst-case
The above bound is an instance-dependent bound. To obtain an instance-independent bound, recall that $\epsilon_{\min} = \min_{j:\epsilon_j > 0}\epsilon_j$. We get that the expected worst-case regret is bounded by 
$$  \E[R(T)] \leq  \frac{ 3K^2T^{1/B} \log\left(6TK^2B\right))}{\epsilon_{\min}}.$$

In the sequential setting, \cite{Zoghi+14, Komiyama+15a} achieve a  bound of $O\left(K\frac{\log{T}}{\epsilon_{\min}}\right) + O\left(\frac{K^2}{\epsilon_{\min}}\right)$ on the expected regret in the worst-case.
When $B = \log(T)$, our worst-case regret is at most $$\E[R(T)] \leq 3K^2\log(6TK^2B)/\epsilon_{\min} = O(K^2\log(T)/\epsilon_{\min}),$$ which nearly matches the best-known bound in the sequential setting.
Our algorithm in Theorem~\ref{thm:condorcet-init} proceeds by performing all pairwise comparisons in an \emph{active set} of bandits, and gradually eliminating sub-optimal bandits. This algorithm is straightforward, and its analysis follows that of~\cite{EsfandiariKM+21} for batched stochastic multi-armed bandits. 
Although this is a simple result, it is an important step  for our main results, described next. 

% second set of results 
% idea/technique
Our main results are 
when the instance satisfies the SST and STI conditions. 
These conditions impose a structure on the pairwise preference probabilities, and we are able to exploit this additional structure to obtain improved bounds.

% instance dep bound 
\begin{theorem}\label{thm:sst-sti-dep-init}
For any integer $B > 1$, there is an algorithm for batched dueling bandits
that uses at most $B+1$ rounds, and if the instance satisfies the SST and STI 
assumptions,
 the expected regret is bounded by $$ \E[R(T)] = \sum_{j: \epsilon_{j} > 0}O\left(\frac{\sqrt{K}T^{1/B} \log(T)}{\epsilon_{j}}\right). $$ 
\end{theorem}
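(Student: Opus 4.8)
The plan is to replace the all-pairs round-robin used by \pcomp\ (which is what costs a factor $K^2$) by an \emph{elimination scheme driven by a small reference pool}: in each round the surviving arms are compared only against a pool of $\Theta(\sqrt{K})$ ``currently best-looking'' arms, and SST+STI are invoked to show that such comparisons already certify which arms are suboptimal. The trade-off with the number of rounds is obtained exactly as in Theorem~\ref{thm:condorcet-init}: fix a geometric schedule $q_1 \le q_2 \le \cdots \le q_B$ with ratio $T^{1/B}$ for the number of times each chosen pair is dueled in round $r$, so that the total over the $B$ ``working'' rounds is $O(T)$, and use the extra $(B{+}1)$-st round to spend the remaining budget by repeatedly dueling the unique surviving arm with itself; if that arm is the Condorcet winner $b_1$ (as we will argue) this incurs no regret, and otherwise only a lower-order amount.

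Concretely, in round $r$ I would keep an active set $S_{r-1}$ and a reference set $\cR_{r-1}\subseteq S_{r-1}$ with $|\cR_{r-1}| = \Theta(\sqrt K)$ (chosen so the within-pool comparisons and the pool-vs-active comparisons are of the same order), compare every arm of $S_{r-1}$ against every arm of $\cR_{r-1}$ (and the pool internally) $q_r$ times, form Hoeffding confidence intervals for the relevant $P_{i,j}$, eliminate from $S_{r-1}$ every arm provably beaten by some reference arm, and recompute $\cR_r$. The analysis then has four parts. \textbf{(i) Good event:} a union bound over the $O(KB)$ distinct pairs ever compared, each at most $T$ times, makes all the confidence intervals we use valid with high probability — this is where the $\log(TK^2B)$-type factor enters. \textbf{(ii) The winner survives:} under SST, for $b_1\succeq b_i\succeq b_j$ we have $\epsilon_{1,j}\ge\max\{\epsilon_{1,i},\epsilon_{i,j}\}\ge 0$, so $P_{1,j}\ge\tfrac12$ for all $j$; hence on the good event $b_1$ is never eliminated, and once $q_r$ is large enough it is always among the top-looking arms, hence in $\cR_r$. \textbf{(iii) Fast elimination:} using STI, $\epsilon_{c,j}\ge\epsilon_j-\epsilon_c$ for any reference $c$ with $b_c\succeq b_j$, and using SST, $\epsilon_{c,j}\le\epsilon_j$; thus once $b_1\in\cR_r$, arm $b_j$ is eliminated at the first round with $q_r\gtrsim\log(TK^2B)/\epsilon_j^2$, which by the geometric schedule means $q_r = O\big(T^{1/B}\log(TK^2B)/\epsilon_j^2\big)$, so the total number of comparisons ever involving $b_j$ is $O\big(\sqrt K\, T^{1/B}\log(TK^2B)/\epsilon_j^2\big)$ (the $\sqrt K$ being the pool size). \textbf{(iv) Summation:} plugging into $R(T)=\tfrac12\sum_j T_j\epsilon_j$ and summing over $j$ with $\epsilon_j>0$ (arms with $\epsilon_j=0$ contribute nothing even if never eliminated) gives $\E[R(T)] = \sum_{j:\epsilon_j>0}O\big(\sqrt K\, T^{1/B}\log(T)/\epsilon_j\big)$.

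The main obstacle is part (iii), and in particular controlling the ``reference tax'': because the pool is not a priori optimal, comparing everyone against it both wastes comparisons (if $c$ is a poor reference, $\epsilon_{c,j}$ can be tiny even though $\epsilon_j$ is not — this is exactly why STI, and an argument that the pool quality improves geometrically across rounds, are needed) and directly charges the pool arms' own per-duel regret $\epsilon_c/2$ onto all $\Theta(\sqrt K\,|S_{r-1}|)$ comparisons of the round. Making the geometric improvement of the pool precise, and tuning the promotion/elimination thresholds so that a suboptimal pool arm is itself removed before it accumulates more than its share of regret, is the delicate step; it is also where the pool size $\Theta(\sqrt K)$ has to be pinned down, so that the round-robin \emph{inside} the pool (needed to shrink it) does not dominate. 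A secondary, more routine, difficulty is the bookkeeping ensuring the adaptive per-round budgets sum to at most $T$ while still reaching $q_r\approx T^{r/B}$ with an adaptively shrinking active set; this follows the template of Theorem~\ref{thm:condorcet-init} and of \cite{EsfandiariKM+21}, but must be redone in the present setting.
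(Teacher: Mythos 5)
Your proposal takes a genuinely different route from the paper, and the route has a gap at exactly the step you flag as ``delicate.'' The paper's algorithm (\scomp) does \emph{not} maintain an adaptively recomputed pool of best-looking arms. It samples a \emph{random} seed set $\cS$ once, including each arm independently with probability $1/\sqrt{K}$, compares every active arm against every seed arm, and adds a \emph{switching} rule: if some arm beats every seed arm by a $3\gamma_r$ margin, the algorithm concludes the whole seed set is suboptimal, restricts to $\A^* = \{b_j : \widehat{P}_{j,i} > \frac12+\gamma_r \ \forall i \in \cS\}$, and runs \pcomp\ there (this is where the $+1$ round comes from). The analysis then rests on a dichotomy that replaces your ``pool quality improves geometrically'' claim: either switching never triggers, in which case the best seed arm $b_{i^*_{\cS}}$ satisfies $\epsilon_{1,i^*_{\cS}} \le 4\gamma_r$ in every round $r$ (otherwise $b_1$ itself would trigger the switch) and hence serves as a good anchor so that any surviving $b_j$ has $\epsilon_j \le 8\gamma_r$ by SST/STI; or switching triggers, in which case $b_1 \in \A^*$ and $|\A^*| \le rank(b_{i^*_{\cS}})$, a geometric random variable with mean $\sqrt{K}$, so the all-pairs phase on $\A^*$ costs only $O(\sqrt{K}\,T^{1/B}\log(\cdot)\sum_j 1/\epsilon_j)$ by Theorem~\ref{thm:condorcet}.

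The concrete gaps in your version are two. First, your step (iii) assumes $b_j$ is eliminated as soon as $c_r \gtrsim \log(\cdot)/\epsilon_j^2$ \emph{and} $b_1 \in \cR_r$, but you give no bound on when $b_1$ enters the pool; before that happens, a reference arm $c$ with $\epsilon_{c,j} = \epsilon_j - \epsilon_c$ (STI tight) forces $b_j$ to survive until $c_r \approx \log(\cdot)/(\epsilon_j-\epsilon_c)^2 \gg \log(\cdot)/\epsilon_j^2$, and the regret accumulated in the interim is not controlled by $\sqrt{K}\,T^{1/B}/\epsilon_j$. Second, the ``reference tax'' you mention is not merely delicate but structurally unresolved: in the paper, the fact that an arm lands in $\cS$ with probability only $1/\sqrt{K}$ is what turns the $K\sum_\tau c_\tau$ comparisons of a seed arm into $\E[T_j] \le 2\sqrt{K}\sum_\tau c_\tau$; your deterministic, data-dependent promotion rule destroys this averaging, since a bad arm that happens to look good is placed in the pool with probability $1$ and then participates in $\Theta(K)$ comparisons per round. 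Without either the randomized seed set or the switching-to-all-pairs fallback, parts (ii)--(iv) of your outline do not yield the claimed bound.
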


% what happens in the worst-case 
The idea behind this algorithm is to first sample a ``sufficiently small'' \emph{seed set}, and then to perform all pairwise comparisons between the seed set and the active set to eliminate sub-optimal arms. The idea is to exploit the structure of pairwise probabilities so that we do not need to perform \emph{all} pairwise comparisons.
Additionally, if the seed set is found to be sub-optimal, we 
can construct a \emph{much smaller} active set; thus allowing us to switch to the pairwise comparison policy.
In the sequential setting, \cite{YueBK+12} obtain instance-dependent regret bounded by $\sum_{j: \epsilon_j > 0} O\left(\frac{\log(T)}{\epsilon_j}\right)$. Our result nearly matches this sequential bound (with an extra multiplicative factor of $\sqrt{K}$) when $B = \log(T)$. Observe that the worst-case regret of \cite{YueJo11} in the sequential setting is bounded by $ O\left(\frac{K\log(T)}{\epsilon_{\min}}\right)$, while we obtain  $\E[R(T)] \leq O\left(\frac{K\sqrt{K}T^{1/B}\log(T)}{\epsilon_{\min}}\right)$.

% idea for improvement - additional adaptivity + at most B batches
Next, we   improve the worst-case regret by reducing the comparisons performed as follows. We first perform pairwise comparisons amongst bandits in the seed set, and pick a candidate bandit. This candidate bandit is used to eliminate sub-optimal arms from the active set. Although selecting a candidate bandit each time requires additional adaptivity, 
we get a better bound on the worst-case expected regret by exploiting the fact that there can be at most $B$ candidate bandits.

% instance indep bound
\begin{theorem}\label{thm:sst-sti-indep-init}
For any integer $B > 1$, there is an algorithm for batched dueling bandits
that uses at most $2B+1$ rounds, and if the instance satisfies the SST and STI 
assumptions,
  the expected worst-case regret is bounded by
$$  \E[R(T)] = O\left(\frac{KB T^{1/B} \log(T)}{\epsilon_{\min}}\right). $$ 
\end{theorem}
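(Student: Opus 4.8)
The algorithm, call it $\rscomp$, runs in $B$ phases, each consisting of two batches, plus one final batch that commits any leftover budget to the empirically best surviving arm --- at most $2B+1$ rounds. It maintains an active set $S_r$ with $S_1=\B$. In phase $r$ it (i) samples a seed set $\cS_r\subseteq S_r$ uniformly at random of size $\Theta(\sqrt{|S_r|})$, runs a full round-robin inside $\cS_r$ with each pair compared $m_r$ times, and lets the candidate $c_r$ be the empirical winner; then (ii) compares $c_r$ against every arm of $S_r$, again $m_r$ times each, and removes from $S_r$ every arm that loses to $c_r$ with empirical margin exceeding a confidence radius $\gamma_r$. The counts lie on a geometric grid, $m_r\asymp T^{r/B}/K$ and correspondingly $\gamma_r\asymp\sqrt{K\log(TKB)}\,T^{-r/(2B)}$, so that the total number of comparisons over all phases is $O(T)$ and $\gamma_B\lesssim\sqrt{K\log(TKB)/T}$. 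I would first check this bookkeeping (round count, total budget, and the fallback regime $T^{1/B}<K$ where $m_1$ is taken to be $1$).

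Next I would set up the clean event: by Hoeffding and a union bound over the $O(BK^2)$ arm-pairs and phases, with probability at least $1-1/T$ every empirical preference used in phase $r$ is within $\gamma_r$ of its true value. On the complement, regret is at most $T$, contributing $O(1)$ in expectation, so it remains to bound regret on the clean event. There, three structural facts drive the argument. First, $b_1$ is never eliminated (it never loses to anyone by a real margin, hence never by an empirical margin exceeding $\gamma_r$), so $b_1\in S_r$ for all $r$. Second, the candidate is ``good'': since $\cS_r$ is a uniform random subset of $S_r$ of size $\Theta(\sqrt{|S_r|})$, a standard occupancy estimate shows that with probability $\ge 1-1/(TB)$ it contains an arm among the top $O(\sqrt{|S_r|}\log(TB))$ arms of $S_r$ in the SST order, so on the clean event $c_r$ --- essentially the SST-maximal arm of $\cS_r$ --- is itself among that top block; hence at most $O(\sqrt{|S_r|}\log(TB))$ arms beat $c_r$, and using STI ($\epsilon_j\le\epsilon_{c_r}+\epsilon_{c_r,j}$) together with the fact that the candidates' gaps are non-increasing across phases (an STI/SST chaining argument using that $c_{r-1}$ already pruned $S_r$) this controls both $|S_{r+1}|$ and the gaps surviving into $S_{r+1}$. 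Third, I would charge regret: in phase $r$ each active arm $b_j$ is compared $O(m_r)$ times in expectation (once against $c_r$, and with probability $\Theta(1/\sqrt{|S_r|})$ it lands in the seed set and then makes $\Theta(\sqrt{|S_r|}\,m_r)$ comparisons), so $\E[T_j]=O\bigl(\sum_{r\le \ell_j} m_r\bigr)$, where $\ell_j$ is the last phase in which $b_j$ is active, and $\E[R(T)]=\tfrac12\sum_j\epsilon_j\,\E[T_j]$.

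The main obstacle is the second fact, and specifically quantifying how long a suboptimal arm can survive: $b_j$ is guaranteed eliminated once the active candidate's gap drops below $\epsilon_j$ and the resolution $\gamma_r$ is below $\epsilon_j$, and controlling the first condition is delicate because $c_r$ is only certified relative to the seed set, not globally. The lever is exactly the one advertised: there are at most $B$ distinct candidates, their gaps are non-increasing, and the grid ratio is $T^{1/B}$; combining these gives $\ell_j=O(B)$ for the worst arm and $m_{\ell_j}=O(T^{1/B}\log(TKB)/\epsilon_{\min}^2)$, so $\E[R(T)]=\tfrac12\sum_j\epsilon_j\cdot O\bigl(\sum_{r\le\ell_j}m_r\bigr)$ sums to $O\bigl(KB\,T^{1/B}\log(TKB)/\epsilon_{\min}\bigr)$. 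Converting this outline into clean inequalities --- pinning down the exact seed-set size, the confidence radii, and the degenerate parameter regimes where the stated bound already exceeds $T$ --- is the routine but lengthy remainder.
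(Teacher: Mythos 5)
Your overall architecture (random seed set, candidate selected by a seed round-robin, candidate-versus-all elimination on a geometric grid, clean event, $b_1$ never eliminated, at most $B$ candidates) matches the paper's \scomp2, but there is a genuine gap at exactly the point you flag as ``delicate'': certifying the candidate globally. Your algorithm has no mechanism for the case where the candidate turns out to be bad, and the substitute you propose --- that the candidates' gaps are non-increasing across phases because each fresh seed set contains a top-ranked arm of the current active set --- is false. Consider the instance with $P_{1,j}=\frac{3}{4}$ for all $j\neq 1$ and $P_{i,j}=\frac{1}{2}$ for all $i,j\neq 1$; it satisfies SST and STI with $\epsilon_{\min}=\frac{1}{4}$. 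With probability $(1-\Theta(1/\sqrt{K}))^{B}\geq 1-\Theta(B/\sqrt{K})$ no seed set ever contains $b_1$, so every candidate has gap $\frac{1}{4}$; since all non-optimal arms are exactly tied with the candidate, no arm is ever eliminated on the clean event, the active set never shrinks, and each of the $\Theta(T)$ candidate-versus-all comparisons incurs regret at least $\frac{1}{8}$. The expected regret is therefore $\Omega(T)$, which exceeds the claimed $O(KBT^{1/B}\log T/\epsilon_{\min})$ whenever, say, $K\gg B^{2}$ and $KBT^{1/B}\log T=o(T)$. Your observation that only $O(\sqrt{|S_r|}\log(TB))$ arms beat $c_r$ does not help here: the damage comes from the arms tied with $c_r$ (which all survive) and from the comparisons against the bad candidate itself, and the final ``commit to the empirical best'' batch cannot undo regret already incurred.

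The paper closes this hole with a \emph{switching} rule: if any active arm defeats the current candidate by margin $5\gamma_r$, phase I is aborted, the algorithm restricts to the set $\A^*$ of arms that beat the candidate by margin $3\gamma_r$, and runs all-pairs \pcomp{} on $\A^*$. Two lemmas make this work. First, as long as switching has \emph{not} occurred, $b_1$ (never eliminated, hence active) has not defeated the candidate, so $\epsilon_{1,i^*_r}\leq 6\gamma_r$; this is the global certification that lets SST/STI convert ``not eliminated by the candidate'' into $\epsilon_j\leq 12\gamma_r$, and it also caps the regret charged to each of the at most $B$ candidates at $O(KT^{1/B}\log(1/\delta)/\epsilon_{\min})$, which is where the factor $B$ in the theorem comes from. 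Second, when switching does occur, $b_1\in\A^*$ and $|\A^*|\leq rank(b_{i^*_{\cS}})$, a geometric random variable with $\E[|\A^*|^{2}]\leq 2K$, so the all-pairs endgame costs only $O(KT^{1/B}\log(6TK^2B)/\epsilon_{\min})$ in expectation. Without some analogue of this detect-and-restrict step, a candidate-based elimination scheme cannot achieve the stated bound.
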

Thus, in $B = \log(T)$ rounds, our expected worst-case regret is bounded by $E[R(T)] \leq O\left(\frac{K \log^2(T)}{\epsilon_{\min}}\right)$ matching the best known result in the sequential setting up to an additional logarithmic factor.

% recursive algorithm

We also  improve the  instance-dependent regret bound in Theorem~\ref{thm:sst-sti-dep-init} by using a few additional rounds. In particular, using the approach in Theorem~\ref{thm:sst-sti-dep-init} along with recursion, we obtain:
\begin{theorem}\label{thm:sst-sti-general-init}
For any integers $B > 1$, $m \geq 0$ and parameter $\param \in (0, 1)$, there is an algorithm for batched dueling bandits
that uses at most $B+m$ rounds, and if the instance satisfies the SST and STI 
assumptions,
the expected regret is bounded by
$$  \E[R(T)] = O\left(m \cdot K^{\param} + K^{(1-\param)^{m}} \right) \, \cdot T^{1/B}\log(KTB)  \sum_{j:\epsilon_j > 0}  \frac{ 1}{\epsilon_j}.
$$ 
\end{theorem}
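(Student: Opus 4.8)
The plan is to iterate the seed-set elimination idea behind Theorem~\ref{thm:sst-sti-dep-init}, using a seed set of size $K^{\param}$ in place of $\sqrt{K}$, and to apply it recursively $m$ times before finishing with the pairwise-comparison algorithm of Theorem~\ref{thm:condorcet-init}. Maintain a nested family of active sets $\B = A_0 \supseteq A_1 \supseteq \cdots \supseteq A_m$. In peeling stage $i$ (for $i=0,\dots,m-1$) draw a uniformly random seed set $S_i \subseteq A_i$ with $|S_i| = \lceil |A_i|^{\param}\rceil$, run a geometric batched schedule (per-pair sample counts roughly $T^{1/B}, T^{2/B}, \dots, T$) on the pairs in $S_i\times A_i$, and delete from $A_i$ every arm beaten by some arm of $S_i$ beyond the current confidence margin; the survivors form $A_{i+1}$. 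After the $m$ peeling stages, run the pairwise algorithm of Theorem~\ref{thm:condorcet-init} on $A_m$ with its full $B$-round schedule and output the winner. If the geometric schedule is shared across stages, only the $m$ re-seedings cost fresh rounds, giving $B+m$ rounds in total; note that $m=0$ recovers Theorem~\ref{thm:condorcet-init} exactly.

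The first thing to prove is geometric shrinkage of the exponent: with high probability $|A_{i+1}| = \widetilde O\bigl(|A_i|^{1-\param}\bigr)$, and hence $|A_m| = \widetilde O\bigl(K^{(1-\param)^m}\bigr)$. Under SST the ordering $\succeq$ is consistent, so an arm of $A_i$ survives stage $i$ only if it is ranked above every arm of $S_i$; a standard order-statistics estimate shows that a uniform $\lceil|A_i|^{\param}\rceil$-subset of the totally ordered set $A_i$ contains an arm of rank $O\bigl(|A_i|^{1-\param}\log(KTB)\bigr)$ except with probability $1/\mathrm{poly}(KTB)$, and a Chernoff-plus-union bound over the $O(K^2B)$ comparisons guarantees that this domination is actually certified within the schedule. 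Every arm of larger rank is therefore removed in stage $i$.

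For correctness, since $b_1 \succeq b$ for every arm $b$, with high probability $b_1$ never loses a comparison by more than the confidence margin, so it survives all peeling stages and the final phase and is the arm returned; the union bound over $O(K^2 B)$ comparisons together with the $\log T$ accuracy per comparison produces the $\log(KTB)$ factor. For the regret, fix $b_j$ with $\epsilon_j > 0$. In any stage it survives into, $b_j$ is compared against at most $|S_i| \le K^{\param}$ seed arms, and by the usual batched-elimination argument (as in Theorems~\ref{thm:condorcet-init} and~\ref{thm:sst-sti-dep-init}) it is discarded once the per-pair count reaches $O\bigl(T^{1/B}\log(KTB)/\epsilon_j^2\bigr)$, so it incurs $O\bigl(K^{\param}T^{1/B}\log(KTB)/\epsilon_j^2\bigr)$ comparisons there; summing over the at most $m$ such stages, plus $O\bigl(|A_m| T^{1/B}\log(KTB)/\epsilon_j^2\bigr) = O\bigl(K^{(1-\param)^m}T^{1/B}\log(KTB)/\epsilon_j^2\bigr)$ from the final pairwise phase, yields
$$ T_j \;=\; O\!\left(\bigl(m\,K^{\param} + K^{(1-\param)^m}\bigr)\, T^{1/B}\log(KTB)\,/\,\epsilon_j^2\right). $$
Plugging this into $R(T) = \tfrac12\sum_{j:\epsilon_j>0}T_j\epsilon_j$ gives the claimed bound.

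The main obstacle is the shrinkage step. First, the logarithmic slack in the rank estimate compounds across the $m$ stages into a $\mathrm{polylog}(KTB)^{m}$ factor in $|A_m|$, which must be absorbed — e.g., by tightening the estimate or by mildly shrinking the effective value of $\param$, exploiting that $\param$ is a free parameter. Second, an arm whose gap to the leader of $S_i$ falls below the detection threshold of the schedule can survive stage $i$ without being ``high rank'', so the clean bound $|A_{i+1}| = \widetilde O(|A_i|^{1-\param})$ needs an extra argument; here STI is used — a small gap to a near-optimal seed arm forces a small gap to $b_1$, so such survivors add only lower-order regret and can be charged separately. Finally, the round count of $B+m$ (rather than $mB$) relies on reusing a single geometric schedule across all peeling stages, which requires care in how the active sets are updated between rounds.
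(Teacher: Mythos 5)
Your high-level plan (recursive seeded elimination with seed size $K^{\param}$, finishing with all-pairs) matches the paper's \rscomp, but there is a genuine gap in the shrinkage step, and it is exactly the point your ``main obstacle'' paragraph gestures at without resolving. You define $A_{i+1}$ as the \emph{survivors} of stage $i$ and claim $|A_{i+1}| = \widetilde{O}(|A_i|^{1-\param})$ with high probability. This is false: an arm survives a peeling stage whenever its gap to every seed arm is below the current detection threshold, regardless of its rank, so $A_{i+1}$ can contain essentially all of $A_i$ (e.g.\ when every non-optimal arm has the same small but nonzero gap to $b_1$ and zero gap to each other --- an instance satisfying SST and STI). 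Your proposed patch --- charge small-gap survivors separately because they force a small gap to $b_1$ --- controls the regret those arms incur \emph{themselves}, but it does not control the multiplicative factor $|A_m|$ that every arm pays in the final all-pairs phase: that phase costs $|A_m|\cdot\sum_{j\in A_m} O(T^{1/B}\log(\cdot)/\epsilon_j)$, and when $|A_m|$ is inflated by small-gap survivors the sum $\sum_j 1/\epsilon_j$ is large as well, so the product exceeds the claimed $K^{(1-\param)^m}$ multiplier. The paper avoids this by making the transition \emph{adaptive}: it stays in the seeded phase until some arm beats \emph{all} of $\cS$ by a margin, and only then recurses on $\A^* = \{b_j : \widehat{P}_{j,i} > \tfrac12+\gamma_r \ \forall b_i\in\cS\}$ --- the arms that \emph{defeat the entire seed set}, not the survivors. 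This gives $|\A^*|\le rank(b_{i^*_{\cS}})$ deterministically under the good event, and $rank(b_{i^*_{\cS}})$ is (truncated) geometric with mean $K^{1-\param}$; Jensen's inequality then yields $\E[|\A^*|^{(1-\param)^{m-1}}]\le K^{(1-\param)^m}$ in expectation, with no polylog slack to compound across levels. If no arm ever defeats the whole seed set, the algorithm simply never leaves the seeded phase, and the regret is $O(K^{\param})\sum_j 1/\epsilon_j \cdot T^{1/B}\log(\cdot)$ --- which is why the problematic instance above causes no trouble for \rscomp.

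A secondary, related issue is that your round accounting ($B+m$ rather than $mB$) ``relies on reusing a single geometric schedule across all peeling stages,'' but you never specify when a peeling stage ends. In the paper this is the same switching criterion: the recursion is triggered at the round $r^*$ where some arm defeats all of $\cS$, the child call resumes the schedule at $\tau=r^*$, and only that one repeated round costs an extra batch per recursion level. Without an explicit, adaptive stage-termination rule you cannot simultaneously get the $B+m$ round count and the per-arm bound $c_r = O(\log(1/\delta)/\epsilon_j^2)$ at the last round an arm is active. Your correctness argument for $b_1$ and the per-arm comparison-count accounting in each stage are otherwise in line with the paper's Lemmas~\ref{lem:b1}--\ref{lem:b3} and the $R_1$ bound in the proof of Theorem~\ref{thm:sst-sti-general}.
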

Thus, for any constant $\param \in (0, 1)$, 
setting $m = \frac{1}{\param}\log\left(\frac{1}{\param}\right)$, we obtain
expected regret bounded by
$$ \E[R(T)] = O\left( K^{\param} \,\, T^{1/B} \,\, \log(T)\right)\sum_{j:\epsilon_j > 0}\frac1{\epsilon_{j}}$$ 
in at most $B + \frac{1}{\param}\log\left(\frac{1}{\param}\right)$ rounds.
Conversely, given a value of $m$, we can appropriately select $\param$ to minimize the regret. Table~\ref{table:m-param} lists our instance-dependent regret bounds for some values of $m$. 

\begin{table}
    \centering
    {\renewcommand{\arraystretch}{2}%
    \begin{tabular}{|c|c|}
        \hline
        Rounds $B+m$  & Regret \\
        \hline
        $B+2$ &   $ K^{0.39} \,\, T^{1/B} \,\, \log(T) \,\, E$ \\
        \hline
        $B+3$ &  $K^{0.32} \,\,  T^{1/B} \,\, \log(T) \,\, E$  \\ 
        \hline
        $B+5$ &  $K^{0.25} \,\,  T^{1/B} \,\, \log(T) \,\, E$\\ 
        \hline
        $B+10$ &  $K^{0.17} \,\,  T^{1/B} \,\, \log(T) \,\, E$\\ 
        \hline
    \end{tabular}
    }
    \caption{Instance-dependent regret bounds vs. rounds in Theorem~\ref{thm:sst-sti-general-init}; here, $E=\sum_{j:\epsilon_j > 0}\frac1{\epsilon_{j}}$.}
    \label{table:m-param}
\end{table}
The idea behind this algorithm is to use a seed-set of size $K^\param$, and to recurse when the seed-set  is found to be sub-optimal. We bound the number of recursive calls by $m$ (which ensures that there are at most  $B+m$ rounds) and show that the active set shrinks by a shrinks by a power of $(1-\param)$ in each  recursive call (which is used to bound regret).

% complement with lower bound - idea for lower bound Gao et al. + result
Finally, we complement our upper bound results with a lower bound for the batched $K$-armed dueling bandits problem, even under the  SST and STI 
assumptions. 
\begin{theorem}\label{thm:lb-init}
Given an integer $B > 1$, and 
any algorithm that uses at most $B$ batches,
there exists an instance of the $K$-armed batched dueling bandit problem
that satisfies 
the 
SST and STI condition
such that the expected regret $$ \E[R(T)] = \Omega\left( \frac{K T^{1/B}}{B^2\epsilon_{\min}} \right) .$$
\end{theorem}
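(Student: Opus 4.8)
The plan is to adapt the standard batched-bandit lower bound construction (as in the works on batched multi-armed bandits) to the dueling setting, exploiting the fact that SST and STI are easy to enforce if we build the instance from a latent linear order with a single ``gap parameter'' per arm. The key idea is that an algorithm using only $B$ batches must, at the start of some batch, have committed to a grid of possible ``stopping times'' $t_0 = K \le t_1 \le \cdots \le t_B = T$ for how many comparisons it will have made; by a pigeonhole/averaging argument there is a consecutive pair with $t_{i}/t_{i-1} \ge (T/K)^{1/B} \approx T^{1/B}$. We then design a family of instances, indexed by which arm is the true Condorcet winner, in which arm $1$ (say) looks identical to a random ``special'' arm until enough comparisons have been performed to statistically distinguish a gap of size $\Delta$, where $\Delta$ is tuned to the relevant scale $t_{i-1}$. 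Concretely, in the $i$-th block we set the gap $\epsilon_{\min} \asymp \Delta_i := \sqrt{1/t_{i-1}}$ (truncated so it never exceeds the global $\epsilon_{\min}$, which we can take as a free small parameter of the theorem), so that distinguishing the special arm requires $\Omega(1/\Delta_i^2) = \Omega(t_{i-1})$ comparisons involving it — more than the algorithm is ``willing'' to spend before committing to batch $i$.

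First I would set up the instance family: fix a linear order $b_1 \succ b_2 \succ \cdots \succ b_K$ and assign each arm $b_j$ a value $v_j \in [0,1]$, defining $P_{i,j} = \tfrac12 + \tfrac12(v_i - v_j)$ clipped appropriately; with $v_1 = \Delta$, $v_2 = \cdots = v_K = 0$ one checks SST and STI hold automatically (any increasing reparametrization of a total order satisfies both, since $\epsilon_{i,k}$ is monotone along the order and the triangle inequality for differences is trivial). The ``hard'' instance $\mathcal{I}^{(a)}$ for $a \in \{2,\dots,K\}$ instead sets $v_a = 2\Delta$ (or swaps the roles so that $b_a$ is the Condorcet winner), keeping everything else at the base level; under $\mathcal{I}^{(a)}$ an algorithm that does not compare $b_a$ enough suffers regret $\Omega(\Delta)$ per comparison on the other $K-1$ arms. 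Next I would run the batching argument: condition on the transcript up to the end of batch $i-1$; the number $N_a$ of comparisons involving $b_a$ in batches $1,\dots,i$ is determined before the batch-$i$ feedback is seen, and summed over $a$ this is at most $2 t_i$, so the average over $a$ is $O(t_i/K)$. Using a KL / $\chi^2$ or Bretagnolle–Huber change-of-measure bound between the base instance and $\mathcal{I}^{(a)}$ restricted to the first $i$ batches, if $\Delta_i^2 t_i / K = o(1)$ then for a constant fraction of the $a$'s the algorithm cannot tell $\mathcal{I}^{(a)}$ from the base instance by the end of batch $i$, hence with constant probability it wastes all $\ge t_i - t_{i-1}$ comparisons of batch $i+1$ (wait—batch $i$ itself already) on the $K-1$ sub-optimal arms, incurring $\Omega(\Delta_i (t_i - t_{i-1}))$ regret. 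Choosing $\Delta_i \asymp \sqrt{K/t_i}$ and using $t_i \ge T^{1/B} t_{i-1}$ on the pigeonhole pair, a single well-chosen block contributes $\Omega(\Delta_i \cdot t_i) = \Omega(\sqrt{K t_i}) \ge \Omega(\sqrt{K \cdot T^{1/B} t_{i-1}})$; more carefully, summing the guaranteed regret over all $B$ blocks and optimizing gives the stated $\Omega(K T^{1/B}/(B^2 \epsilon_{\min}))$ once we rescale so that the smallest gap used equals $\epsilon_{\min}$ — the factor $B^2$ arises from (a) the $1/B$ loss in the pigeonhole step that identifies the good block and (b) the fact that we must split the ``$\Delta$ budget'' across blocks so each individual $\Delta_i$ is a $1/B$ fraction of what a single-block argument would allow.

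The main obstacle, and the step I would spend the most care on, is the change-of-measure computation in the \emph{adaptive-batch-size} setting: because the algorithm may choose how long each batch is based on past feedback, ``the number of comparisons of $b_a$ before batch $i$'' is a random, data-dependent quantity, so I cannot simply apply a fixed-sample-size KL bound. The clean way around this is to define the grid $t_0,\dots,t_B$ via a \emph{worst-case} over the algorithm's possible behavior (or equivalently to note that an adaptive-batch-size policy with $B$ batches is a convex combination of non-adaptive-grid policies, since the grid is determined by feedback that itself takes boundedly many values), and then to invoke the divergence bound conditionally on the realized grid, using the chain rule for KL divergence over the batched transcript together with the fact that within each batch the comparisons are non-adaptive (so Wald-type identities apply cleanly to $\E[N_a]$). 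A secondary technical point is ensuring the clipped preference probabilities stay in $(\tfrac12-\tfrac12, \tfrac12+\tfrac12)$ and that all the $\Delta_i \le 1/2$, which is automatic once $T \ge K$; and finally one must verify that the instance realizing the bound can be taken to have \emph{its} $\epsilon_{\min}$ equal to the $\epsilon_{\min}$ appearing in the statement, which is just a matter of presenting the theorem with $\epsilon_{\min}$ as an adjustable parameter and choosing $T$ (or $\Delta$) accordingly.
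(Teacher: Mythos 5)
Your overall architecture is the same as the paper's (and as the Gao et al.\ batched MAB lower bound it adapts): a null instance plus a family of ``one special arm'' instances at $B$ gap scales $\Delta_j \asymp \frac{1}{B}\sqrt{K/T_{j-1}}$ on the grid $T_j = T^{j/B}$, a Pinsker/KL change of measure averaged over which arm is special, and the observation that a one-special-arm preference matrix trivially satisfies SST and STI. However, there is a genuine gap at the step you yourself flag as the main obstacle: handling adaptively chosen batch lengths. Your first proposed fix --- that an adaptive-batch-size policy is a convex combination of non-adaptive-grid policies --- is unsound: the realized grid depends on the feedback (which takes exponentially many values), and even granting a mixture decomposition, a per-grid lower bound does not transfer to the mixture because the hard instance depends on the grid. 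Your second fix (condition on the realized grid and use the KL chain rule) is closer but is not carried out. The paper's actual device is different and is the crux of the proof: it defines crossing events $A_j = \{t_{j'} < T_{j'}\ \forall j' < j,\ t_j \ge T_j\}$ for the \emph{fixed} grid $T_j = T^{j/B}$, notes that these partition the sample space so $\sum_j \Pr_F(A_j) = 1$, and --- critically --- that $A_j$ is measurable with respect to the observations up to time $T_{j-1}$ only (since on $A_j$ the first $j-1$ batches fit inside $T_{j-1}$ comparisons and the length of batch $j$ is decided from that prefix). This is what lets a fixed-sample-size KL bound at horizon $T_{j-1}$ control $\Pr(A_j)$ under every instance, yielding some scale $j$ with average crossing probability $p_j \ge 1/(2B)$. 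Without this (or an equally precise substitute), the pigeonhole step of your argument does not go through for adaptive batch sizes.

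A secondary, smaller gap: converting ``the algorithm cannot distinguish instance $\mathcal{I}^{(a)}$ from the base instance'' into regret requires care, because under the base/null instance all arms are tied and the regret is zero --- so indistinguishability from the null alone proves nothing about regret. The paper handles this by introducing, at the selected scale $j$, a secondary family $Q_{j,k,l}$ with two gap levels ($b_l$ at gap $2\Delta_j$, $b_k$ at gap $\Delta_j$) and applying a Fano-type tree inequality (Lemma 3 of Gao et al.) over a star graph centered at $k$ to lower-bound, on the event $A_j$, the probability of not playing the optimal pair $(b_l,b_l)$ during the $T_j$-length prefix; each such step costs $\Delta_j$, giving $\Omega(\Delta_j T_j/B)$. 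Your sketch asserts the conclusion of this step (``wastes all comparisons on the $K-1$ sub-optimal arms'') without a mechanism that works simultaneously for a constant fraction of the $K$ candidate winners; some version of the averaging/Fano argument over the family is needed, not just a pairwise two-point test against the null.
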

The above lower bound shows that the $T^{1/B}$ dependence in our upper bounds is necessary.
Note that the above lower bound also applies to the more general Condorcet winner setting.
The proof is similar to the lower bound proof in~\cite{Gao+19} for batched multi-armed bandits.
The main novelty in our proof is the design of a family of hard instances with different values of $\epsilon_{\min}$'s
that satisfy the SST and STI conditions.

\ifConfVersion
We defer further discussion and proof of \Cref{thm:lb-init} to \Cref{sec:lb}.
\else
\fi

\section{Related Work}\label{sec:related-work}

% related work on dueling bandits
The dueling bandits problem has been widely studied in recent years;
we mention the most relevant works here and refer the 
reader to \cite{SuiZHY18} for a more comprehensive survey.
This problem was first studied by 
\cite{YueBK+12} under the SST and STI setting.
The authors gave 
a worst-case regret upper bound of $\widetilde{O}(K\log T/\epsilon_{\min})$ and provided a matching lower bound.
\cite{YueJo11} considered a slightly more general version of the 
SST and STI setting and achieved an instance-wise optimal 
regret upper bound of $\sum_{j: \epsilon_j > 0} O\left(\frac{\log (T)}{\epsilon_{j}}\right)$.
\cite{Urvoy+13} studied this problem 
under the Condorcet winner setting and 
proved a $O(K^2\log T/\epsilon_{\min})$ regret upper bound, which was 
improved by \cite{Zoghi+14} 
to  $O(K^2/\epsilon_{\min}) + \sum_{j: \epsilon_j > 0}O( \log T/\epsilon_{j})$.
\cite{Komiyama+15a} achieved a similar but tighter KL divergence-based bound, which is shown to be \emph{asymptotically instance-wise optimal} (even in terms constant factors).
There are also other works that improve the dependence on $K$ in the upper bound, but 
suffer a worse dependence on $\epsilon_j$s \cite{Zoghi+15a}.
This problem has also been studied 
%There is also work on this problem 
under other noise models such as utility based models \cite{Ailon+14}
and other notions of regret \cite{ChenFr17}.
Alternate notions of winners
such as Borda winner \cite{Jamieson+15}, Copeland winner \cite{Zoghi+15, Komiyama+16, WuLiu16}, and von Nuemann winner  \cite{Dudik+15} have also been considered.
There are also several works on extensions of dueling bandits that allow multiple arms to be compared at once \cite{Sui+17, Agarwal+20, SahaG19}.

All of the aforementioned works on the dueling bandits problem are limited to the sequential setting.
To the best of our knowledge, ours is the first work that considers the batched setting for dueling bandits.
However, batched processing for the stochastic multi-armed bandit problem has been investigated in the past few years.
A special case when there are two bandits was studied by~\cite{PerchetRC+16}. They obtain a worst-case regret bound of $O\left(\left(\frac{T}{\log(T)}\right)^{1/B}\frac{\log(T)}{\epsilon_{\min}}\right)$.
\cite{Gao+19} studied the general problem and obtained a worst-case regret bound of 
$O\left(\frac{K\log(K)T^{1/B}\log(T)}{\epsilon_{\min}}\right)$, which was later improved by \cite{EsfandiariKM+21} to $O\left(\frac{KT^{1/B} \log(T)}{\epsilon_{\min}}\right)$. Furthermore, \cite{EsfandiariKM+21} obtained an instance-dependent regret bound of $\sum_{j:\epsilon_j > 0}T^{1/B}O\left(\frac{\log(T)}{\epsilon_j}\right)$.
Our results for batched dueling bandits are of a similar flavor; that is, we get a similar dependence on
$T$ and $B$.
\cite{EsfandiariKM+21} also give batched algorithms for stochastic linear bandits and adversarial multi-armed bandits.

% other related work on parallel submod opt, stoch submod cover, etc
Adaptivity and batch processing has been recently studied for stochastic submodular cover~\cite{GolovinK-arxiv, AAK19, EsfandiariKM19, GGN21}, and for various stochastic ``maximization'' problems such as knapsack~\cite{DGV08,BGK11}, matching~\cite{BGLMNR12,BehnezhadDH20}, probing~\cite{GN13} and orienteering~\cite{GuhaM09,GuptaKNR15,BansalN15}.
Recently, there have also been several results examining the role of adaptivity in (deterministic) submodular optimization; e.g. ~\cite{BalkanskiS18,BalkanskiBS18,BalkanskiS18b,BalkanskiRS19,ChekuriQ19}.

\section{Algorithms for Batched Dueling Bandits}\label{sec:algs}

In this section, we present three algorithms, namely \pcomp, \scomp \ and \scomp2, 
for the $K$-armed batched dueling bandits problem. 
Recall that given a set of $K$ bandits (or arms) $\B = \{b_1, \ldots, b_K\}$, and a positive integer $B \leq T$, we wish to find a sequence of $B$ batches of noisy comparisons with low regret.
Given bandits $b_i$ and $b_j$, $P_{i, j} = \frac{1}{2} + \epsilon_{i, j}$ denotes the probability of $b_i$ winning over $b_j$. 
The first algorithm, termed \pcomp, proceeds by performing all-pairs comparisons 
amongst bandits in an \emph{active} set, and gradually eliminating sub-optimal bandits. The other two algorithms, termed \scomp \ and \scomp$2$, first select a (sufficiently small) \emph{seed} set $\cS \subset \B$, and eliminate bandits in an \emph{active} set by successively comparing them to (all or few) bandits in $\cS$.
If the seed set $\cS$ is itself found to be \emph{sub-optimal} in a subsequent round, then these algorithms call the  
all-pairs  algorithm \pcomp \ over the remaining \emph{active} arms.

Before describing our algorithms in detail we will set 
up some basic notation. 
We will denote by $\A$ the set of \emph{active} arms, i.e.\ arms that have not been eliminated. 
We will use index $r$ for rounds or batches.  
At the end of each round $r$, our algorithms compute a fresh estimate of the pairwise probabilities based on the feedback from comparisons in round $r$ as:
\begin{align}
\label{eq:pair_est}
\widehat{P}_{i, j} = \frac{\# b_i\text{ wins against } b_j \text{ in round } r}{\# \text{comparisons of } b_i \text{ and } b_j  \text{ in round } r }
    \,.
\end{align}
If a pair $(b_i, b_j)$ is compared in round $r$, it is compared $c_r = \lfloor q^r \rfloor$ times. In round $r$, the parameter $\gamma_r = \sqrt{{\log\left(\frac{1}{\delta}\right)}/{2c_r}}$ is used to eliminate bandits from the active set (the specific elimination criteria depends on the algorithm).

\ifConfVersion
\begin{algorithm}
\caption{\pcomp (\textsc{All Pairs Comparisons})}
\label{alg:all-comp}
\begin{algorithmic}[1]
\STATE \textbf{Input:} Bandits $\B$, time-horizon $T$, rounds $B$, comparison parameters $q$ and $\tau$
\STATE $K \gets |\B|$, $\delta \gets \frac{1}{6TK^2B}$, active bandits $\A \gets \B$, $c_r \gets  \lfloor q^{r+ \tau -1} \rfloor$, $\gamma_r \gets\sqrt{{\log(1/\delta)}/{2c_r}}$, $r \gets 1$
\WHILE{number of comparisons $\leq T$}
\STATE for all $(b_i, b_j) \in \A^2$, perform $c_r$ comparisons and compute $\widehat{P}_{i,j}$ using Eq(\ref{eq:pair_est}). 
\IF{$\exists$ $b_i, b_j$ such that $\widehat{P}_{i,j} > \frac{1}{2} + \gamma_r$}
\STATE $\A \gets \A \setminus \{b_j\}$ %\Comment{delete $b_j$ from $\A$}
\ENDIF
\STATE $r \gets r+1 $
\ENDWHILE{}
\end{algorithmic}
\end{algorithm}

\else
\begin{algorithm}
\caption{\pcomp (\textsc{All Pairs Comparisons Algorithm})}
\label{alg:all-comp}
\begin{algorithmic}[1]
\State \textbf{Input:} Bandits $\B$, time-horizon $T$, rounds $B$, comparison parameters $q$ and $\tau$
\State $K \gets |\B|$, $\delta \gets \frac{1}{6TK^2B}$, active bandits $\A \gets \B$, $c_r \gets  \lfloor q^{r+ \tau -1} \rfloor$, $\gamma_r \gets\sqrt{{\log(1/\delta)}/{2c_r}}$, $r \gets 1$
\While{number of comparisons $\leq T$}
\State for all $(b_i, b_j) \in \A^2$, perform $c_r$ comparisons and compute $\widehat{P}_{i,j}$ using Eq(\ref{eq:pair_est}). 
\If{$\exists$ $b_i, b_j$ such that $\widehat{P}_{i,j} > \frac{1}{2} + \gamma_r$}
\State $\A \gets \A \setminus \{b_j\}$ \Comment{delete $b_j$ from $\A$}
\EndIf
\State $r \gets r+1 $
\EndWhile
\end{algorithmic}
\end{algorithm}
\fi

\subsection{All Pairs Comparisons } \label{subsec:all-comp}
We first describe the \pcomp \ algorithm. This algorithm takes as input the set of bandits $\B$, time-horizon $T$, rounds $B$ and comparison parameters $q$ and $\tau$.
We will set the parameters $q = T^{1/B}$
and $\tau = 1$, unless otherwise specified.\footnote{We allow general parameters $q$ and $\tau$ in order to allow \pcomp \ to be used in conjunction with other policies.}
In  round $r \in [B]$, this algorithm compares each 
pair $(b_i, b_j) \in \A^2$ for $c_r$ times. 
It then computes fresh estimates of the pairwise probabilities $\widehat{P}_{i,j}$ for all $(b_i, b_j) \in \A^2$.
If, for some bandit $b_j$, there exists bandit $b_i$ 
such that $\widehat{P}_{i,j} > \frac{1}{2} + \gamma_r$, then bandit $b_j$ is eliminated from $\A$.
We provide the pseudo-code in Algorithm~\ref{alg:all-comp}.

\ifConfVersion
The following theorem (see \Cref{sec:full-proofs} for proof) describes the regret bound obtained by \pcomp \ under the Condorcet assumption, and formalizes \Cref{thm:condorcet-init}.
\else
The following theorem (proved in \S\ref{sec:proofs}) describes the regret bound obtained by \pcomp \ under the Condorcet assumption, and formalizes \Cref{thm:condorcet-init}.
\fi

\begin{theorem}\label{thm:condorcet}
Given any set $\B$ of $K$ bandits, time-horizon $T$, rounds $B$, parameters $q = T^{1/B}$ and $\tau = 1$,
the expected regret of \pcomp \ for the batched $K$-armed dueling bandits problem under the Condorcet 
assumption is at most
$$ \E[R(T)] \leq 3KT^{1/B} \log\left(6TK^2B\right) \sum_{j: \epsilon_{j} > 0} \frac{1}{\epsilon_{j}}.$$
Setting $\epsilon_{\min} := \min_{j: \epsilon_j > 0} \epsilon_j$, we get 
$$ \E[R(T)] \leq  \frac{ 3K^2T^{1/B} \log\left(6TK^2B\right)}{\epsilon_{\min}}. $$
\end{theorem}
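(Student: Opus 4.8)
The plan is to follow the standard elimination-algorithm template, adapted to the dueling setting via the Condorcet assumption. First I would set up the clean event. For a fixed pair $(b_i,b_j)$ compared $c_r$ times in round $r$, Hoeffding's inequality gives $\pr\big[|\widehat P_{i,j} - P_{i,j}| > \gamma_r\big] \le 2\delta$ with $\gamma_r = \sqrt{\log(1/\delta)/(2c_r)}$. Taking a union bound over all pairs (at most $K^2$) and all rounds (at most $B$), and using $\delta = \frac{1}{6TK^2B}$, the probability that any estimate ever deviates by more than its $\gamma_r$ is at most $2\delta K^2 B = \frac{1}{3T}$. Call the complementary event $\calE$ the ``good event.''

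Next I would argue correctness on $\calE$: the Condorcet winner $b_1$ is never eliminated. Indeed $b_1$ is eliminated only if some $b_i$ has $\widehat P_{i,1} > \frac12 + \gamma_r$; but on $\calE$ this forces $P_{i,1} > \frac12$, contradicting $P_{i,1}\le \frac12$ for the Condorcet winner. Then I would bound how long a suboptimal arm $b_j$ (with $\epsilon_j>0$) survives. On $\calE$, as long as $b_1\in\A$, once round $r$ satisfies $2\gamma_r < \epsilon_j$, i.e. $c_r > \frac{2\log(1/\delta)}{\epsilon_j^2}$, we have $\widehat P_{1,j} \ge P_{1,j} - \gamma_r = \frac12 + \epsilon_j - \gamma_r > \frac12 + \gamma_r$, so $b_j$ is eliminated at the end of that round. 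Since $c_r = \lfloor q^r\rfloor$ with $q = T^{1/B}$, the smallest such round $r_j$ satisfies $q^{r_j} = O\big(\log(1/\delta)/\epsilon_j^2\big)$, hence $c_{r_j} \le q^{r_j} = O\big(q \cdot \log(1/\delta)/\epsilon_j^2\big)$ — the extra factor $q = T^{1/B}$ coming from the geometric growth of batch sizes. Therefore the total number of comparisons involving $b_j$ is at most $\sum_{r\le r_j} |\A| \cdot c_r \le K \cdot O(c_{r_j}) = O\big(K T^{1/B}\log(1/\delta)/\epsilon_j^2\big)$, using that the geometric sum is dominated by its last term and $|\A|\le K$.

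Then I would assemble the regret. Recall $R(T) = \frac12\sum_j T_j \epsilon_j$ where $T_j$ is the number of comparisons involving $b_j$. On $\calE$, $T_j = O\big(K T^{1/B}\log(1/\delta)/\epsilon_j^2\big)$ for each $j$ with $\epsilon_j>0$ (and arms with $\epsilon_j = 0$ contribute zero regret), so $\frac12\sum_j T_j\epsilon_j = O\big(K T^{1/B}\log(1/\delta)\sum_{j:\epsilon_j>0}\frac{1}{\epsilon_j}\big)$; plugging in $\log(1/\delta) = \log(6TK^2B)$ and tracking constants carefully yields the claimed $3KT^{1/B}\log(6TK^2B)\sum_{j:\epsilon_j>0}\frac1{\epsilon_j}$. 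Off $\calE$, which has probability at most $\frac{1}{3T}$, the regret is trivially at most $T$ (each step contributes at most $1$), contributing at most $\frac13$ to the expectation, which is absorbed into the constant. Finally, the instance-independent bound follows by replacing each $\frac{1}{\epsilon_j}$ with $\frac{1}{\epsilon_{\min}}$ and the sum over at most $K$ terms by $\frac{K}{\epsilon_{\min}}$.

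The main obstacle I anticipate is the bookkeeping of constants and the geometric-sum argument that converts ``the algorithm stops comparing $b_j$ after round $r_j$'' into the stated bound: one must be careful that $c_{r_j}$ picks up exactly one factor of $q = T^{1/B}$ relative to the threshold $\log(1/\delta)/\epsilon_j^2$, that the sum $\sum_{r\le r_j} c_r \le \frac{q}{q-1} c_{r_j} \le 2 c_{r_j}$ for $q\ge 2$ (and handling $q$ close to $1$, i.e. large $B$, separately or via $\frac{q}{q-1}c_{r_j} = O(c_{r_j})$ when $c_{r_j}$ is large), and that the factor $|\A|\le K$ and the final constant $3$ all come out as stated. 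A secondary subtlety is ensuring the ``number of comparisons $\le T$'' loop guard does not cause an issue — the analysis shows each bad arm is eliminated well before its comparison budget is exhausted, and correctness of $b_1$ guarantees the algorithm does not terminate prematurely in a way that inflates regret.
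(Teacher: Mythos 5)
Your proposal is correct and follows essentially the same route as the paper's proof: a Hoeffding-plus-union-bound clean event with $\delta = \tfrac{1}{6TK^2B}$, the observation that the Condorcet winner is never eliminated and therefore serves as the anchor that removes each suboptimal $b_j$ once $2\gamma_r < \epsilon_j$, the geometric-sum argument $\sum_{\tau \le r+1} c_\tau = O(q\, c_r)$ with $c_r \le 2\log(1/\delta)/\epsilon_j^2$ (the paper likewise invokes $B \le \log T$ so that $q \ge e$, resolving the small-$q$ worry you flag), the factor $|\A| \le K$ from all-pairs comparisons, and absorbing the bad event's $O(1)$ contribution. The only cosmetic difference is that the paper phrases the per-arm bound via $T_j \le K\cdot T_{1,j}$ with $r$ the last surviving round rather than your first-elimination round $r_j$, which is the same computation.
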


\subsection{Seeded Comparisons Algorithms}
In this section, we present two algorithms for the batched dueling bandits problem, namely \scomp \ and \scomp$2$.
The algorithms work in \emph{two phases}:
\begin{itemize}
    \item In the first phase, the algorithms sample a \emph{seed set} $\cS$ by including  each bandit from $\B$ \emph{independently}  with probability $1/\sqrt{K}$. 
    %f size $\sqrt{K}$ uniformly at random from $\B$. 
    This seed set %and use it %candidate arms to
    is used to
    eliminate bandits from the active set $\A$. 
    \item Under  certain {\em switching} criteria, the algorithms enter the second phase which involves running algorithm \pcomp\ on some of the remaining bandits. 
\end{itemize}
The algorithms differ in how the candidate set is used to eliminate active bandits in the first phase. 

In \scomp, \emph{all} pairwise comparisons between $\cS$ (seed set) and $\A$ (active bandits) are performed. Specifically, 
in round $r$, every active bandit is compared with every bandit  
in $\cS$ for $c_r$ times. If, for some bandit $b_j$, there exists bandit $b_i$ 
such that $\widehat{P}_{i,j} > \frac{1}{2} + 3\gamma_r$, then bandit $b_j$ is eliminated (from $\A$ as well as $\cS$); note that the elimination criteria here is stricter than in \pcomp. If, in some round $r$, there exists bandit $b_j$ such that $b_j$ eliminates \emph{all} bandits $b_i \in \cS$, then the algorithm constructs a set $\A^* = \{b_j \in \A \mid \widehat{P}_{j, i} > \frac{1}{2} + \gamma_r \text{ for all } b_i \in \cS \}$, and invokes \pcomp\ on bandits $\A^*$ with starting batch $r$. 
This marks the beginning of the second phase, which continues until time $T$. 
We provide the pseudocode in \Cref{alg:sst-sti-dep}.

\ifConfVersion
\begin{algorithm}
\caption{\scomp (\textsc{Seeded Comparisons})}
\label{alg:sst-sti-dep}
\begin{algorithmic}[1]
\STATE \textbf{Input:} Bandits $\B$, time-horizon $T$, rounds $B$
\STATE $q \gets T^{1/B}$, $\delta \gets \frac{1}{6TK^2B}$, active bandits $\A \gets \B$, $c_r \gets  \lfloor q^{r} \rfloor$, $\gamma_r \gets\sqrt{{\log(1/\delta)}/{2c_r}}$, $r \gets 1$
\STATE $\cS \gets $ add elements from $\B$ into $\cS$ w.p. $1/\sqrt{K}$
\WHILE{number of comparisons $\leq T$} 
\STATE for all $(b_i, b_j) \in \cS \times \A$, compare $b_i$ and $b_j$ for $c_r$ times and compute $\widehat{P}_{i,j}$
\IF{$\exists b_i \in \cS$, $b_j \in \A$, $\widehat{P}_{i, j} > \frac{1}{2} + 3\gamma_r$} 
\STATE $\A \gets \A \setminus \{b_j\}$, $\cS \gets \cS \setminus \{b_j\}$ 
\ENDIF
\IF{$\exists b_j$ such that $\widehat{P}_{j, i} > \frac{1}{2} + 3\gamma_r$ for all $b_i \in \cS$} 
\STATE construct set $\A^* = \{ b_j \in \A \mid \widehat{P}_{j, i} > \frac{1}{2} + \gamma_r \text{ for all } b_i \in \cS \}$
\STATE $r^* \gets r$, $T^* \gets \text{\# comparisons until round } r^*$, \textbf{break}
\ENDIF
\STATE $r \gets r+1$
\ENDWHILE
\STATE run \pcomp$(\A^*, T - T^*, q, r^*)$ 
\end{algorithmic}
\end{algorithm}
\else
\begin{algorithm}
\caption{\scomp (\textsc{Seeded Comparisons Algorithm})}
\label{alg:sst-sti-dep}
\begin{algorithmic}[1]
\State \textbf{Input:} Bandits $\B$, time-horizon $T$, rounds $B$
\State $q \gets T^{1/B}$, $\delta \gets \frac{1}{6TK^2B}$, active bandits $\A \gets \B$, $c_r \gets  \lfloor q^{r} \rfloor$, $\gamma_r \gets\sqrt{{\log(1/\delta)}/{2c_r}}$, $r \gets 1$
\State $\cS \gets $ add elements from $\B$ into $\cS$ w.p. $1/\sqrt{K}$
\While{number of comparisons $\leq T$} \Comment{phase I}
\State for all $(b_i, b_j) \in \cS \times \A$, compare $b_i$ and $b_j$ for $c_r$ times and compute $\widehat{P}_{i,j}$
\If{$\exists b_i \in \cS$, $b_j \in \A$, $\widehat{P}_{i, j} > \frac{1}{2} + 3\gamma_r$} \Comment{elimination}
\State $\A \gets \A \setminus \{b_j\}$, $\cS \gets \cS \setminus \{b_j\}$ 
\EndIf
\If{$\exists b_j$ such that $\widehat{P}_{j, i} > \frac{1}{2} + 3\gamma_r$ for all $b_i \in \cS$} \Comment{switching}
\State construct set $\A^* = \{ b_j \in \A \mid \widehat{P}_{j, i} > \frac{1}{2} + \gamma_r \text{ for all } b_i \in \cS \}$
\State $r^* \gets r$, $T^* \gets \text{\# comparisons until round } r^*$, \textbf{break}
\EndIf
\State $r \gets r+1$
\EndWhile
\State run \pcomp$(\A^*, T - T^*,  q, r^*)$ \Comment{phase II}
\end{algorithmic}
\end{algorithm}
\fi

\ifConfVersion
We obtain the following result, which formalizes \Cref{thm:sst-sti-dep-init}, when the given instance satisfies 
SST and STI.
\else
We obtain the following result (proved in \S\ref{sec:proofs}) when the given instance satisfies SST and STI. This formalizes \Cref{thm:sst-sti-dep-init}.
\fi

\begin{theorem}\label{thm:sst-sti-dep}
Given any set $\B$ of $K$ bandits, time-horizon $T$, parameter $B$,
\scomp \  uses at most $B+1$ batches, and has expected regret bounded by $$ \E[R(T)] = \sum_{j: \epsilon_{j} > 0}O\left(\frac{\sqrt{K}T^{1/B} \log(T)}{\epsilon_{j}}\right) $$ under the strong stochastic transitivity and stochastic triangle inequality assumptions.
\end{theorem}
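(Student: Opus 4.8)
The plan is to analyze the two phases separately, conditioned on a global "clean" event that all empirical pairwise estimates are accurate. First I would define, for each round $r$ in which a pair $(b_i,b_j)$ is compared, the event that $|\widehat P_{i,j} - P_{i,j}| \le \gamma_r$; by Hoeffding's inequality this fails with probability at most $2\delta$, and since $\delta = \frac{1}{6TK^2B}$ and there are at most $K^2$ pairs and at most $B$ rounds (plus the \pcomp\ rounds, which the choice of $\delta$ already accounts for), a union bound shows the clean event holds with probability at least $1 - \frac{1}{T}$ or so. On the complement we bound regret trivially by $T \cdot \frac12 \le T$, contributing a lower-order term. So from now on assume the clean event.

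Next I would establish correctness: on the clean event, the best bandit $b_1$ is never eliminated. If $b_1$ were eliminated in the first phase, some $b_i \in \cS$ would satisfy $\widehat P_{i,1} > \frac12 + 3\gamma_r$, hence $P_{i,1} > \frac12 + 2\gamma_r > \frac12$, contradicting optimality of $b_1$. In the second phase the elimination threshold is $\frac12 + \gamma_r$ but the same argument works (here one needs $b_1 \in \A^*$, which holds because $b_1$ beats everyone, so $\widehat P_{1,i} > \frac12 - \gamma_r$ is not quite the criterion — one must check $\widehat P_{1,i} > \frac12 + \gamma_r$; this requires $\epsilon_{1,i}$ large enough, so I would instead argue that if the switching round $r^*$ occurs then $c_{r^*}$ is large enough that every bandit with a non-negligible gap has been separated, and $\A^*$ effectively contains $b_1$ together with only near-optimal bandits). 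The key structural point, where SST and STI enter, is the following: if $b_j$ is active at the start of round $r$ (in phase I), then it has not been eliminated by any $b_i \in \cS$, so $P_{i,j} \le \frac12 + 4\gamma_r$ for every seeded $b_i$; I would then use SST/STI to convert this into a bound on $\epsilon_j = \epsilon_{1,j}$. Concretely, if the seed set $\cS$ contains some bandit $b_i$ with $b_i \succeq b_j$ (equivalently $\epsilon_{i,j} \ge 0$), then by STI $\epsilon_j = \epsilon_{1,j} \le \epsilon_{1,i} + \epsilon_{i,j}$ and by SST $\epsilon_{1,i} \le \epsilon_{1,j}$... — the clean way is: with $b_i \in \cS$ ranked above $b_j$, SST gives $\epsilon_{i,j} \ge \epsilon_{\text{something}}$, and combined with the fact that $\widehat P_{i,j}$ did not trigger elimination we get $\epsilon_{i,j} = O(\gamma_r)$, and then STI bounds $\epsilon_j$ in terms of $\epsilon_{i,j}$ plus the (small) gap of $b_i$ itself. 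So the heart of the argument is a sampling claim: with probability $1 - 1/K$ (say), the random seed set $\cS$ "covers" the active set in the sense that for the top $O(\sqrt K \log K)$ bandits in the SST order, at least one lands in $\cS$ — this is a standard coupon/balls-in-bins computation since each of the top $\sqrt K \log K$ bandits is included independently w.p. $1/\sqrt K$. Given this, any bandit $b_j$ still active at round $r$ with $\epsilon_j \gg \gamma_r$ must be ranked below all $\sqrt K \log K$ top bandits, but then one of those top bandits is in $\cS$ and, being itself close to optimal (gap $O(\gamma_r)$ once $r$ is large) would have eliminated $b_j$ via STI — contradiction. Hence $b_j$ active at round $r$ forces $\epsilon_j = O(\gamma_r) = O(\sqrt{\log(1/\delta)/c_r})$, i.e. $b_j$ is eliminated within $O(\log(\log(1/\delta)/\epsilon_j^2)/\log q) = O(\log_q(\cdot))$ rounds.

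For the regret bookkeeping I would split $R(T) = \frac12 \sum_j T_j \epsilon_j$. In phase I, bandit $b_j$ participates only in comparisons against the $|\cS| = O(\sqrt K)$ seeded bandits (in expectation), for at most $r_j := $ the round by which $\gamma_r < c\epsilon_j$; the number of comparisons involving $b_j$ up to round $r_j$ is $\sum_{r \le r_j} |\cS|\, c_r = O(\sqrt K \cdot q^{r_j}) = O(\sqrt K \cdot T^{1/B} / \epsilon_j^2 \cdot \log(1/\delta))$ using $q = T^{1/B}$ and the geometric sum (the last term dominates, inflated by the factor $q = T^{1/B}$). Multiplying by $\epsilon_j$ and summing gives $\sum_j O(\sqrt K T^{1/B}\log(1/\delta)/\epsilon_j)$, and $\log(1/\delta) = O(\log(TKB)) = O(\log T)$ gives the claimed bound for phase I. For phase II, the point is that when we switch, the constructed set $\A^*$ is small: every $b_j \in \A^*$ beats every seeded bandit, and since (on the clean, covered event) $\cS$ contains one of the top $\sqrt K \log K$ bandits, $\A^*$ can only contain bandits that are near-optimal, in fact $|\A^*| = O(\sqrt K \log K)$ or all its members have tiny gap; then Theorem 4.2 (the \pcomp\ guarantee) applied with $K \to |\A^*|$, horizon $T - T^*$, and starting batch $r^*$ gives regret $O(|\A^*| \cdot T^{1/B} \log T \sum_{j \in \A^*} 1/\epsilon_j)$, which is absorbed into the same bound because $|\A^*|$ is $O(\sqrt K)$ and the surviving gaps are exactly the $\epsilon_j$'s already counted. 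Finally, the batch count is $B+1$: phase I uses rounds $1,\dots,r^*$ and phase II continues \pcomp\ from batch $r^*$ for the remaining horizon, and since \pcomp\ with parameter $q = T^{1/B}$ exhausts $T$ comparisons in $B$ rounds, restarting it midway costs at most one extra round.

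The main obstacle I expect is the coupling between the random seed set and the adaptive elimination process: the set $\A$ evolves depending on outcomes, so one has to be careful that the "covering" event for $\cS$ is defined purely in terms of the fixed SST ranking (not the random $\A$), so that its probability $1 - O(1/K)$ is genuinely independent of the algorithm's trajectory. Handling that cleanly — and making precise exactly how large a prefix of the SST order $\cS$ must hit, and how the STI inequality transfers an "$b_i$ does not beat $b_j$ empirically" fact into "$\epsilon_j$ is small" — is the delicate part; everything else (Hoeffding, geometric sums, invoking Theorem 4.2) is routine.
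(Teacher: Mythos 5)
Your overall scaffolding (clean event via Hoeffding and union bound, the decomposition $R = R_1 + R_2$, the geometric sum $\sum_{\tau \le r+1} c_\tau = O(q\, c_r)$, and invoking the \pcomp\ guarantee for phase II) matches the paper, but the heart of your phase-I argument has a genuine gap. You try to show that an active $b_j$ at round $r$ has $\epsilon_j = O(\gamma_r)$ by arguing that $\cS$ "covers" the top $O(\sqrt K \log K)$ bandits of the SST order and that such a top-ranked bandit is "itself close to optimal (gap $O(\gamma_r)$ once $r$ is large)." That last assertion is unjustified: rank and gap are unrelated, so a bandit in the top $\sqrt K \log K$ of the ordering can have an arbitrarily large $\epsilon_{1,i}$, and your STI step $\epsilon_{1,j} \le \epsilon_{1,i} + \epsilon_{i,j}$ then gives nothing. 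The missing idea is that the \emph{switching criterion} is exactly what certifies the seed set's quality: let $b_{i^*_\cS}$ be the best bandit in $\cS$ (whatever it happens to be). If switching has not occurred by the end of round $r$, then $b_1$ (which is never deleted) has failed to defeat all of $\cS$, i.e.\ $\min_{i\in\cS}\widehat P_{1,i} \le \tfrac12 + 3\gamma_r$; by SST the minimizer of $P_{1,i}$ over $\cS$ is $b_{i^*_\cS}$, so on the clean event $\epsilon_{1,i^*_\cS} \le 4\gamma_r$. Combined with the fact that $b_{i^*_\cS}$ did not eliminate $b_j$ (so $\epsilon_{i^*_\cS,j} \le 4\gamma_r$), SST/STI give $\epsilon_j \le 8\gamma_r$ with no covering event at all. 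This also removes the coupling worry you flag at the end, and avoids the additive $T/K$ loss your $1-O(1/K)$ covering event would otherwise contribute, which is not absorbed by the claimed instance-dependent bound.

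Two smaller points. Your argument that $b_1\in\A^*$ is left hanging; the paper closes it with SST: switching means some $b_{j^*}$ satisfies $\widehat P_{j^*,i} > \tfrac12 + 3\gamma_r$ for all $i\in\cS$, hence $P_{j^*,i} > \tfrac12+2\gamma_r$, hence $P_{1,i}\ge P_{j^*,i} > \tfrac12+2\gamma_r$ by SST, hence $\widehat P_{1,i} > \tfrac12+\gamma_r$, so $b_1$ passes the $\A^*$ test. And for phase II you should bound $|\A^*|$ not by a covering argument but by observing that every $b_j\in\A^*$ strictly beats $b_{i^*_\cS}$, so $|\A^*| \le \mathrm{rank}(b_{i^*_\cS})$, which is (stochastically dominated by) a geometric random variable with success probability $1/\sqrt K$ and hence has expectation at most $\sqrt K$; plugging this into the \pcomp\ bound finishes phase II.
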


Observe that 
this
gives a worst-case regret bound of $O\left(\frac{K\sqrt{K}T^{1/B}\log(T)}{\epsilon_{\min}}\right)$ for \scomp \ under SST and STI. { We can improve this by sampling each bandit from $\B$ independently into the seed set with probability $K^{-2/3}$: this gives a worst-case regret bound of $O\left(\frac{K^{4/3}T^{1/B}\log(T)}{\epsilon_{\min}}\right)$ in $B+1$ rounds}.
To further improve this worst-case bound, we \emph{add more rounds of adaptivity} in \scomp \ to obtain \scomp2. Specifically, each round $r$ in the first phase is divided into two rounds of adaptivity. 
\begin{itemize}
    \item In the first round $r^{(1)}$, pairwise comparisons among the bandits in $\cS$ are performed, and an undefeated $b_{i^*_r}$ is selected as a \emph{candidate}. We say that $b_i$ \emph{defeats} $b_j$ if $\widehat{P}_{i, j} > \frac{1}{2} + \gamma_r$
    \item In the second round $r^{(2)}$, the candidate $b_{i_r^*}$ is used to eliminate active bandits. A bandit $b_j$ is eliminated if $\widehat{P}_{i_r^*, j} > \frac{1}{2} + 5\gamma_r$.
\end{itemize}
The switching criterion in \scomp2 is different from that of \scomp. Here, if in some round $r$, there is a  bandit $b_j$ such that $b_j$ eliminates $b_{i^*_r}$, then the algorithm constructs set $\A^* = \{b_j \in \A \mid \widehat{P}_{j, i_r^*} > \frac{1}{2} + 3\gamma_r\}$, and invokes \pcomp\ on bandits $\A^*$ with starting batch $r$. %$(\A^*, T, B, r^*)$, the second phase. 
See \Cref{alg:sst-sti-indep} for a formal description.

\ifConfVersion
\begin{algorithm}
\caption{\scomp$2$ (\textsc{Seeded Comparisons $2$})}
\label{alg:sst-sti-indep}
\begin{algorithmic}[1]
\STATE \textbf{Input:} Bandits $\B$, time-horizon $T$, rounds $B$
\STATE $q \gets T^{1/B}$, $\delta \gets \frac{1}{6TK^2B}$,  active bandits $\A \gets \B$, $c_r \gets  \lfloor q^{r} \rfloor$, $\gamma_r \gets\sqrt{{\log(1/\delta)}/{2c_r}}$, $r \gets 1$
\STATE $\cS \gets $ add elements from $\B$ into $\cS$ w.p. $1/\sqrt{K}$
\WHILE{number of comparisons $\leq T$} 
\STATE  $r^{(1)}$: compare all pairs in $\cS$ for $c_r$ times;  get $\widehat{P}_{i,j}$.
\STATE candidate $b_{i^*_r} \gets $ any  bandit  $i\in \cS$ with $\max_{j\in \cS}\widehat{P}_{j,i}\le \frac12+\gamma_r$.  
\STATE  $r^{(2)}$: for all $b_j \in \A$, compare $b_{i^*_r}$ and $b_j$ for $c_r$ times and  compute $\widehat{P}_{i^*_r,j}$. 
\IF{$\exists b_j \in \A$, $\widehat{P}_{i_r^*, j} > \frac{1}{2} + 5\gamma_r$}
\STATE $\A \gets \A \setminus \{b_j\}$, $\cS \gets \cS \setminus \{b_j\}$ 
\ENDIF
\IF{ $\exists b_j$ such that $\widehat{P}_{j, i^*_{r}} > \frac{1}{2} + 5\gamma_r$}
\STATE construct set $\A^* = \{ b_j \in \A \mid \widehat{P}_{j, i^*_{r}} > \frac{1}{2} + 3\gamma_r \}$
\STATE $r^* \gets r$, $T^* \gets \text{\# comparisons until round } r^*$, \textbf{break}
\ENDIF
\STATE $r \gets r+1$
\ENDWHILE
\STATE run \pcomp$(\A^*, T-T^*, q, r^*)$ 
\end{algorithmic}
\end{algorithm}
\else
\begin{algorithm}
\caption{\scomp$2$ (\textsc{Seeded Comparisons Algorithm $2$})}
\label{alg:sst-sti-indep}
\begin{algorithmic}[1]
\State \textbf{Input:} Bandits $\B$, time-horizon $T$, rounds $B$
\State $q \gets T^{1/B}$, $\delta \gets \frac{1}{6TK^2B}$,  active bandits $\A \gets \B$, $c_r \gets  \lfloor q^{r} \rfloor$, $\gamma_r \gets\sqrt{{\log(1/\delta)}/{2c_r}}$, $r \gets 1$
\State $\cS \gets $ add elements from $\B$ into $\cS$ with probability $1/\sqrt{K}$
\While{number of comparisons $\leq T$} \Comment{phase I}
\State  $r^{(1)}$: compare all pairs in $\cS$ for $c_r$ times and  compute $\widehat{P}_{i,j}$.
\State candidate $b_{i^*_r} \gets $ any  bandit  $i\in \cS$ with $\max_{j\in \cS}\widehat{P}_{j,i}\le \frac12+\gamma_r$.   \Comment{extra batch}
\State  $r^{(2)}$: for all $b_j \in \A$, compare $b_{i^*_r}$ and $b_j$ for $c_r$ times and  compute $\widehat{P}_{i^*_r,j}$. 
\If{$\exists b_j \in \A$, $\widehat{P}_{i_r^*, j} > \frac{1}{2} + 5\gamma_r$} \Comment{elimination}
\State $\A \gets \A \setminus \{b_j\}$, $\cS \gets \cS \setminus \{b_j\}$ 
\EndIf
\If{ $\exists b_j$ such that $\widehat{P}_{j, i^*_{r}} > \frac{1}{2} + 5\gamma_r$}\Comment{switching}
\State construct set $\A^* = \{ b_j \in \A \mid \widehat{P}_{j, i^*_{r}} > \frac{1}{2} + 3\gamma_r \}$
\State $r^* \gets r$, $T^* \gets \text{\# comparisons until round } r^*$, \textbf{break}
\EndIf
\State $r \gets r+1$
\EndWhile
\State run \pcomp$(\A^*, T-T^*, q, r^*)$ \Comment{phase II}
\end{algorithmic}
\end{algorithm}
\fi

\ifConfVersion
We show that \scomp2 obtains an improved worst-case regret bound (at the cost of additional adaptivity) over \scomp \ when the given instance satisfies SST and STI, thus proving \Cref{thm:sst-sti-indep-init}.
\else
In \S\ref{sec:proofs}, we prove that \scomp2 obtains an improved worst-case regret bound (at the cost of additional adaptivity) over \scomp \ when the given instance satisfies SST and STI, thus proving \Cref{thm:sst-sti-indep-init}.
\fi

\begin{theorem}\label{thm:sst-sti-indep}
Given any set $\B$ of $K$ bandits, time-horizon $T$ and parameter $B$,
\scomp2 \  uses at most $2B+1$ batches, and has worst-case expected regret bounded by $$ \E[R(T)] = O\left(\frac{KBT^{1/B} \log(T)}{\epsilon_{\min}}\right) $$ under   strong stochastic transitivity and stochastic triangle inequality, where
$\epsilon_{\min} := \min_{j: \epsilon_j > 0} \epsilon_j$. 
\end{theorem}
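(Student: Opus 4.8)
The plan is to analyze \scomp2 by defining a single "good event" $\mathcal{E}$ under which all empirical estimates $\widehat P_{i,j}$ computed in any round $r$ are within $\gamma_r$ of the true $P_{i,j}$, and then argue that conditioned on $\mathcal{E}$, (i) the best bandit $b_1$ is never eliminated, (ii) every bandit that survives many rounds has a small gap, and (iii) the switch to \pcomp\ happens only when the active set is already "cheap." By Hoeffding's inequality applied to each of the at most $c_r \le q^r$ comparisons of each of the at most $K^2$ pairs over at most $B$ rounds, and with $\delta = \tfrac{1}{6TK^2B}$, a union bound gives $\Pr[\neg\mathcal{E}] = O(1/T)$; since regret is always at most $T$, the contribution of $\neg\mathcal{E}$ to $\E[R(T)]$ is $O(1)$, which is absorbed. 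Everything below is conditioned on $\mathcal{E}$.

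First I would establish correctness of the elimination and candidate steps. Since $b_1$ is a Condorcet winner (implied by SST), $\widehat P_{j,1} \le \tfrac12 + \gamma_r$ for every $j$, so $b_1$ is never eliminated in $r^{(2)}$ and is never the $b_j$ triggering the switch; also $b_1$ always satisfies the candidate condition, so a candidate $b_{i^*_r}$ always exists. Next, the key structural claim: if $b_{i^*_r}$ is the candidate in round $r$ and was not eliminated, then (using that it beat or tied every seed bandit, SST, and STI) its gap $\epsilon_{i^*_r}$ is controlled — specifically $P_{1, i^*_r} \le \tfrac12 + O(\gamma_r)$. Then a bandit $b_j$ surviving round $r$ has $\widehat P_{i^*_r, j} \le \tfrac12 + 5\gamma_r$, so $\epsilon_{i^*_r, j} \le 6\gamma_r$, and STI gives $\epsilon_j = \epsilon_{1,j} \le \epsilon_{1,i^*_r} + \epsilon_{i^*_r,j} = O(\gamma_r)$. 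Hence after round $r$ every active (non-optimal) bandit has gap $O(\gamma_r) = O(\sqrt{\log(1/\delta)/q^r})$. Conversely, any bandit with gap $\epsilon_j$ that is still active in round $r$ with $\gamma_r \ll \epsilon_j$ contributes $c_r \le q^r$ comparisons in $r^{(2)}$ and $c_r$ comparisons in $r^{(1)}$ only if it is in $\cS$; the per-round regret it generates is $O(q^r \epsilon_j)$, and this happens only while $q^r = O(\log(1/\delta)/\epsilon_j^2)$, so summing the geometric series the total phase-I regret charged to $b_j$ is $O(T^{1/B}\log(1/\delta)/\epsilon_j)$ — but I need the $\sqrt K$-free, worst-case form, so instead I would charge: each round $r^{(2)}$ costs $|\A_r| \cdot c_r$ comparisons with per-comparison regret $\le \gamma_{r-1}$-ish, i.e.\ $O(K \cdot q^r \cdot \sqrt{\log(1/\delta)/q^{r-1}}) = O(K\sqrt{q}\,\sqrt{q^r \log(1/\delta)})$, and round $r^{(1)}$ costs $|\cS| \cdot c_r$ with $|\cS| \le K$ similarly; since $q^r$ only ranges up to $q^B = T$ and there are $\le B$ phase-I rounds each of a distinct scale, the dominant round gives $O(K B \sqrt{q}\,\sqrt{T\log(1/\delta)})$? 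This is not quite the claimed bound, so the accounting must instead be done round-by-round against the number of surviving arms: in the round where $\gamma_r \asymp \epsilon_{\min}$ we still have up to $K$ arms, giving $O(K q^r \epsilon_{\min}) = O(K \cdot \frac{\log(1/\delta)}{\epsilon_{\min}^2}\cdot \epsilon_{\min} \cdot q) = O\big(\frac{K q \log(1/\delta)}{\epsilon_{\min}}\big) = O\big(\frac{K T^{1/B}\log(T)}{\epsilon_{\min}}\big)$, and earlier rounds sum geometrically to the same order; the extra factor $B$ in the theorem comes from there being $\le B$ distinct candidate bandits, each of which may itself have gap up to $O(\gamma_{r})$ and thus each candidate's own comparisons in $r^{(1)}$ across all its appearances contribute an extra $O\big(\frac{T^{1/B}\log T}{\epsilon_{\min}}\big)$, and there are $B$ of them — I will make this precise when writing the full proof.

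Finally I would handle phase II: when the switch occurs in round $r^*$, the triggering bandit $b_j$ beat $b_{i^*_{r^*}}$ by more than $5\gamma_{r^*}$, so (on $\mathcal{E}$, with SST/STI) $b_{i^*_{r^*}}$ has gap $\epsilon_{i^*_{r^*}} = O(\gamma_{r^*})$ — but then $\A^*$ only contains bandits that empirically beat $b_{i^*_{r^*}}$ by $3\gamma_{r^*}$, i.e.\ true gap $\epsilon_{i^*_{r^*}, \cdot} \ge \gamma_{r^*}$ against a near-optimal arm, which forces $|\A^*|$ to be small — in fact each arm in $\A^*$ is "$\gamma_{r^*}$-close to optimal," so running \pcomp$(\A^*, T-T^*, q, r^*)$ and invoking Theorem~\ref{thm:condorcet} (with the shifted start index $\tau = r^*$) bounds the phase-II regret by $O\big(|\A^*|^2 q^{r^*}\cdot \max_{j\in\A^*}\epsilon_j \cdot \log(1/\delta)\big)$-type terms that are again $O\big(\frac{KBT^{1/B}\log T}{\epsilon_{\min}}\big)$ since $|\A^*| \le K$ and $q^{r^*} \le T$ but $\max_j \epsilon_j = O(\gamma_{r^*})$ makes the product telescope correctly. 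The round count is immediate: at most $B$ phase-I rounds each split into $r^{(1)}, r^{(2)}$ gives $2B$, plus the \pcomp\ call reuses the count so at most $2B+1$ total. The main obstacle is the regret accounting in phase I: getting exactly $KB$ rather than $K\sqrt K$ or $K B^2$ requires carefully separating the regret due to (a) non-candidate active arms against the current candidate, which pays $K$, and (b) the candidate arms themselves in the all-pairs-in-$\cS$ step, which pays an extra factor $B$ because there are $B$ candidates but only $K$ total arms — I will need the observation that $\sum_r |\cS_r| \le K + (\text{number of rounds})$ is too weak and instead bound the $r^{(1)}$ regret by noting each candidate $b_{i^*_r}$ has gap $O(\gamma_r)$ and $|\cS_r|\le K$, so round $r^{(1)}$ regret is $O(K c_r \gamma_r)$, and these sum (over the $\le B$ rounds, geometrically in the doubling scale of $c_r$) to $O(K c_B \gamma_B \cdot B / \text{something})$; pinning down this last sum is the crux.
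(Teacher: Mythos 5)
Your overall architecture matches the paper's: condition on a good event where all $\widehat{P}_{i,j}$ are $\gamma_r$-accurate, show $b_1$ is never eliminated, show every arm surviving a non-switching round $r$ has $\epsilon_j = O(\gamma_r)$ hence $c_r = O(\log(1/\delta)/\epsilon_j^2)$, and split the regret into phase I (further split by candidate vs.\ non-candidate arms) and phase II. However, there are two genuine gaps. First, the candidate-regret accounting that you yourself flag as ``the crux'' is left in a form that fails: bounding the round-$r$ candidate regret by $O(K c_r \gamma_r) = O(K\sqrt{c_r\log(1/\delta)})$ and summing gives a $K\sqrt{T\log(1/\delta)}$-type bound, which does not imply the claimed $O(KBT^{1/B}\log T/\epsilon_{\min})$ (e.g.\ when $\epsilon_{\min}$ is a constant). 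The correct move — which you actually perform earlier in your own paragraph for the $r^{(2)}$ step but then abandon — is to use both directions: $\epsilon_{i^*_r}\ge\epsilon_{\min}$ and $c_{r-1} = O(\log(1/\delta)/\epsilon_{i^*_r}^2)$, so that $Kc_r\epsilon_{i^*_r} = KT^{1/B}c_{r-1}\epsilon_{i^*_r} = O(KT^{1/B}\log(1/\delta)/\epsilon_{\min})$ \emph{per round}, and the factor $B$ then comes simply from there being one candidate per round over at most $B$ rounds. Note also that the dominant candidate cost is the $r^{(2)}$ step (the candidate meets all $\le K$ active arms), not the all-pairs-in-$\cS$ step as you suggest.

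Second, your phase-II argument is incorrect as stated. The claim that each arm in $\A^*$ being ``$\gamma_{r^*}$-close to optimal \ldots forces $|\A^*|$ to be small'' is false: all $K$ arms could be within $\gamma_{r^*}$ of optimal and all land in $\A^*$, in which case \pcomp\ on $\A^*$ costs $\Theta(|\A^*|^2 T^{1/B}\log(1/\delta)/\epsilon_{\min}) = \Theta(K^2 T^{1/B}\log(1/\delta)/\epsilon_{\min})$, exceeding the claimed $KB$ bound whenever $K\gg B$; near-optimality of the arms in $\A^*$ does not help because the \pcomp\ regret bound scales as $\sum_j 1/\epsilon_j$, which is \emph{large} for near-optimal arms. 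The paper's control of phase II is probabilistic and relies on the random seed set: every arm in $\A^*$ must strictly beat the best seed arm $b_{i^*_{\cS}}$, so $|\A^*|\le \mathrm{rank}(b_{i^*_{\cS}})$, which is (truncated) geometric with success probability $1/\sqrt{K}$, giving $\E[|\A^*|^2]\le 2K$ and hence $\E[R_2] = O(KT^{1/B}\log(TKB)/\epsilon_{\min})$. Without this step (or a substitute), the theorem's bound does not follow.
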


\ifConfVersion
The proofs of Theorems~\ref{thm:sst-sti-dep} and \ref{thm:sst-sti-indep} can be found in \Cref{sec:full-proofs}.
\else
\fi

\ifConfVersion
\section{Regret Analysis}\label{sec:full-proofs}
\else
\section{Regret Analysis}\label{sec:proofs}
\fi
We present the regret analysis for the algorithms described in \S\ref{sec:algs} in this section.  
We first prove the following lemma which will be used in the analysis of all three algorithms.

\begin{lemma}\label{lem:confidence}
For any batch $r \in [B]$, and for any pair $b_i, b_j$ that are compared $c_r$ times, we have $$ \pr\left(|P_{i, j} - \widehat{P}_{i, j}| > \gamma_r \right) \leq 2\delta, $$ where $\gamma_r = \sqrt{\log(\frac{1}{\delta})/2c_r}$.
\end{lemma}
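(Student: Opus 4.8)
The plan is to apply a standard Hoeffding bound. First I would fix the batch $r$ and the pair $(b_i, b_j)$, and recall that $\widehat{P}_{i,j}$ is an empirical average of $c_r$ independent $\{0,1\}$-valued outcomes, each with mean $P_{i,j}$, as defined in \eqref{eq:pair_est}. (Strictly speaking $c_r = \lfloor q^r\rfloor$ or $\lfloor q^{r+\tau-1}\rfloor$ is an integer, which is all we need; if the pair is compared exactly $c_r$ times the average is over $c_r$ i.i.d.\ Bernoulli$(P_{i,j})$ variables.)

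Next I would invoke Hoeffding's inequality: for the average $\widehat{P}_{i,j}$ of $c_r$ i.i.d.\ random variables bounded in $[0,1]$ with mean $P_{i,j}$, and for any $\lambda > 0$,
$$ \pr\left(|\widehat{P}_{i,j} - P_{i,j}| > \lambda\right) \le 2\exp\left(-2 c_r \lambda^2\right). $$
Then I would substitute $\lambda = \gamma_r = \sqrt{\log(1/\delta)/(2c_r)}$, so that $2 c_r \gamma_r^2 = \log(1/\delta)$, giving $2\exp(-2 c_r \gamma_r^2) = 2\exp(-\log(1/\delta)) = 2\delta$. This yields $\pr(|P_{i,j} - \widehat{P}_{i,j}| > \gamma_r) \le 2\delta$, which is exactly the claimed bound.

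There is no real obstacle here; the only minor point worth a sentence is that the statement is written for "any pair $b_i,b_j$ compared $c_r$ times" in "any batch $r$", so the lemma is a per-pair, per-round statement and the $2\delta$ is not yet union-bounded over pairs or rounds — that union bound (over at most $K^2$ pairs and $B$ rounds, hence the choice $\delta = \frac{1}{6TK^2B}$) is deferred to the per-algorithm analyses. So the proof is essentially one line: "This is immediate from Hoeffding's inequality applied to the $c_r$ i.i.d.\ comparison outcomes, with the deviation parameter $\gamma_r$ chosen so that $2c_r\gamma_r^2 = \log(1/\delta)$."
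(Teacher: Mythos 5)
Your proof is correct and is essentially identical to the paper's: both apply Hoeffding's inequality to the empirical mean of the $c_r$ i.i.d.\ comparison outcomes and note that the choice $2c_r\gamma_r^2 = \log(1/\delta)$ makes the bound equal to $2\delta$. Your remark that the union bound over pairs and rounds is deferred to the algorithm-specific analyses matches how the paper uses this lemma (in Lemma~\ref{lem:good-event}).
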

\begin{proof}
Note that $\E[\widehat{P}_{i, j}] = P_{i, j}$, and applying Hoeffding's inequality gives  
$$ \pr\left(|\widehat{P}_{i, j} - P_{i, j}| >\gamma_r\right) \leq 2\exp\left(-2c_r \cdot \gamma_r^2\right) = 2\delta. $$
\end{proof}

We analyze the regret of our algorithms under a \emph{good} event, $G$. 
We show that the $G$ occurs with high probability; in the event that $G$ does not occur (denoted $\overline{G}$), we incur a regret of $T$.
Towards defining $G$, we say that an estimate $\widehat{P}_{i,j}$ at the end of batch $r$ is \emph{correct} if $|\widehat{P}_{i, j} - P_{i, j}| \leq \gamma_r$. 
We say that $G$ occurs if every estimate in every batch is correct.
\begin{lemma}\label{lem:good-event}
The probability that every estimate in every batch of \pcomp, \scomp, and \scomp2 is correct is at least $1 - 1/T$.
\end{lemma}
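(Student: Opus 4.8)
The plan is a union bound layered on top of Lemma~\ref{lem:confidence}. Recall that for a pair $(b_i,b_j)$ compared $c_r$ times in batch $r$, the estimate $\widehat{P}_{i,j}$ is \emph{correct} when $|\widehat{P}_{i,j}-P_{i,j}|\le\gamma_r$, and by Lemma~\ref{lem:confidence} it fails to be correct with probability at most $2\delta$. The good event $G$ is precisely the intersection, over all batches and all pairs compared in that batch, of the events that the corresponding estimate is correct. So it suffices to count the number of such estimates and apply a union bound.

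First I would bound the number of estimates produced. In every batch of each of the three algorithms, the compared pairs form a subset of $\B\times\B$, so at most $\binom{K}{2}\le K^2$ estimates are formed per batch; for \scomp2 this already accounts for both the within-seed comparisons done in sub-round $r^{(1)}$ and the candidate-versus-active comparisons in $r^{(2)}$, since both range over pairs of the $K$ bandits. The number of batches is at most $B$ for \pcomp, at most $B+1$ for \scomp, and at most $2B+1$ for \scomp2 (these counts are part of Theorems~\ref{thm:condorcet}, \ref{thm:sst-sti-dep} and~\ref{thm:sst-sti-indep}; they hold because with $q=T^{1/B}$ and $c_r=\lfloor q^r\rfloor$ the cumulative number of comparisons reaches $T$ within the stated number of rounds). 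In every case this is at most $3B$ for any integer $B\ge 1$, so the total number of estimates across all batches is at most $3BK^2$.

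Combining this with Lemma~\ref{lem:confidence} via the union bound, the probability that some estimate (in some batch) is incorrect is at most $3BK^2\cdot 2\delta = 6BK^2\delta$; substituting $\delta=\frac{1}{6TK^2B}$ yields exactly $1/T$, hence $\pr(G)\ge 1-1/T$, which is the claim. There is no genuine obstacle here: the only points needing care are (i) the per-algorithm bound on the number of batches, which rests on the separate and routine argument that each while-loop exhausts the budget $T$ within the claimed number of rounds, and (ii) checking that the constant $6$ in the definition of $\delta$ simultaneously absorbs the factor $2$ from Hoeffding and the factor $3$ bounding the batch count — both of which are valid as soon as $B\ge 1$.
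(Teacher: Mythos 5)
Your proof is correct and follows the same route as the paper's: a union bound over at most $K^2$ pairs and at most $3B$ batches (using $2B+1\le 3B$), with each estimate failing with probability $2\delta$ by Lemma~\ref{lem:confidence}, so that $\delta=\frac{1}{6TK^2B}$ gives failure probability exactly $1/T$. The extra remarks on why the batch counts hold are fine but not needed for the lemma itself.
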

\begin{proof}
Applying \Cref{lem:confidence} and taking a union bound over all pairs and batches (note \scomp2 has at most $2B+1 \leq 3B$ batches), we get 
that the probability that some estimate is incorrect is at most $K^2 \times 3B \times 2\delta = \frac{1}{T}$ where $\delta = 1/6K^2BT$. Thus, $\pr(\overline{G}) \leq \frac{1}{T}$. %We formalize this in the following lemma.
\end{proof}

Using \Cref{lem:good-event}, the expected regret (of \emph{any} algorithm) can be written as follows:
\begin{align}
    \E[R(T)] &= \E[R(T) \mid G] \cdot \pr(G) + \E[R(T) \mid \overline{G}] \cdot \pr(\overline{G}) \notag \\
    &\leq \E[R(T) \mid G] + T \cdot \frac{1}{T} = \E[R(T) \mid G] + 1 \label{eq:regret-1}
\end{align}

\begin{proof}[Proof of \Cref{thm:condorcet}]
First, recall that in each batch of \pcomp \, every pair of active arms is compared $c_r$ times where $c_r = \lfloor q^r \rfloor$ with $q = T^{1/B}$. Since, $q^B = T$,  \pcomp \ uses at most $B$ batches.

Following \Cref{lem:good-event} and \eqref{eq:regret-1}, we only need to bound $\E[R(T) \mid G]$. 
Given $G$, whenever $P_{i, j} > \frac{1}{2} + 2\gamma_r$ (that is $\epsilon_{i, j} > 2\gamma_r$), we have $\widehat{P}_{i, j} > \frac{1}{2} + \gamma_r$: so  bandit $b_j$ will be eliminated by $b_i$. Furthermore, given bandits $b_i$ and $b_j$ such that $b_i \succeq b_j$, $b_i$ will never be eliminated by $b_j$ under event $G$. 
This implies that $b_1$ is never eliminated: this is crucial as we use $b_1$ as an anchor to eliminate sub-optimal bandits.
Recall that 
the regret can be written as follows:
$$R(T) = \frac{1}{2}\sum_{j=1}^{K} T_j \epsilon_{1, j}$$
where $T_j$ is the number of comparisons that $b_j$ partakes in.
We proceed by bounding $T_j$. 
Towards this end, let $T_{1,j}$ be a random variable denoting the number of comparisons performed between $b_1$ and $b_j$.
As $b_1$ is never eliminated, $T_j \leq K \cdot T_{1, j}$. 
Let $r$ denote the last round such that $b_j$ survives round $r$, i.e., $b_j\in \A$ at the end of round $r$. We can then conclude that $\epsilon_j := \epsilon_{1,j} \leq 2\gamma_r$ (else $b_1$ would eliminate $b_j$ in round $r$). We get $$ \epsilon_{j} \leq2\cdot \sqrt{\frac{\log(\frac{1}{\delta})}{2c_r}} $$ which on squaring and re-arranging gives: 
\begin{equation}\label{eq:cr-bound}
   c_r  \leq \frac{2\log\left(\frac{1}{\delta}\right)}{\epsilon_{j}^2}  
\end{equation}
Now, note that $b_j$ could have been played for at most one more round. 
Thus, we have $$ T_{1, j} = \sum_{\tau=1}^{r+1} c_{\tau} \leq  q \sum_{\tau=0}^{r} c_{\tau} \leq 2q \cdot c_r$$ 
where the final inequality follows from summing up $\sum_{\tau=1}^{r-1}c_{\tau}$, and using $B \leq \log(T)$.
Then, we have $T_j \leq 2Kq \cdot c_r$. Using \ref{eq:cr-bound}, and plugging in $q = T^{1/B}$ and $\delta = 1/6TK^2B$ we have 
\begin{align*}
    E[R(T) \mid G] &\leq \frac{1}{2} \sum_{j}\left(2KT^{1/B} \cdot \frac{2\log\left(6TK^2B\right)}{\epsilon_{j}^2} \right) \cdot \epsilon_{j} \\
    &= \sum_{j: \epsilon_j > 0} \frac{KT^{1/B} \log\left(6TK^2B\right)}{\epsilon_{j}} \\
    &= 2KT^{1/B} \log\left(6TK^2B\right) \sum_{j: \epsilon_{j} > 0} \frac{1}{\epsilon_{j}}.\\
\end{align*}
{Note that when $\epsilon_j = 0$ for $b_j \in \B$, we exclude the corresponding term in the regret bound.} 
Combining this with \eqref{eq:regret-1} gives the first bound of \Cref{thm:condorcet}. Plugging in  $\epsilon_{\min} = \min_{j: \epsilon_{j} > 0} \epsilon_{j}$ completes the proof.
\end{proof}

\subsection{Proofs of Theorems~\ref{thm:sst-sti-dep} and \ref{thm:sst-sti-indep}}

In this section, we provide the proofs of \Cref{thm:sst-sti-dep} and \Cref{thm:sst-sti-indep}. Henceforth, we assume the SST and STI properties. We need the following definition.
For a bandit $b_j$, let $E_j = \{b_i \in \B: \epsilon_{i, j} > 0 \}$; that is, the set of bandits superior to bandit $b_j$. We define $rank(b_j) = |E_j|$. \footnote{{ Note that SST and STI imposes a linear ordering on the bandits. So, we can assume $b_1 \succeq b_2 \succeq \cdots \succeq b_K$. Thus, $rank(b_j) < j$; that is, it is at most the number of bandits strictly preferred over $b_j$.}} 

As before, 
we analyze the regret of \scomp \ and \scomp2 under event $G$. By Lemma~\ref{lem:good-event} and \eqref{eq:regret-1}, we only need to bound the expected regret under $G$; that is, we need to bound $\E[R(T) \mid G]$. Conditioned on event $G$, the following Lemmas~\ref{lem:a1},\ref{lem:a2} and \ref{lem:exp-rank} hold for both \scomp \ and \scomp2.

\begin{lemma}\label{lem:a1}
The best bandit $b_1$ is never deleted from $\A$ in the elimination step of phase I.
\end{lemma}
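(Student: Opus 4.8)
\textbf{Proof proposal for Lemma~\ref{lem:a1}.}

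The plan is to show that, under the good event $G$, the best bandit $b_1$ never satisfies the elimination criterion in the first phase of either \scomp{} or \scomp2. Recall that conditioned on $G$, every estimate $\widehat{P}_{i,j}$ computed at the end of batch $r$ satisfies $|\widehat{P}_{i,j} - P_{i,j}| \le \gamma_r$. Since $b_1$ is the best bandit under the SST/STI ordering, we have $P_{i,1} \le \frac12$ (equivalently $\epsilon_{i,1} \le 0$) for every other bandit $b_i$. Thus $\widehat{P}_{i,1} \le P_{i,1} + \gamma_r \le \frac12 + \gamma_r$ for every $b_i$ and every batch $r$.

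Now I would simply check this against the elimination threshold in each algorithm. In \scomp{}, a bandit $b_j$ is eliminated only if there exists $b_i$ with $\widehat{P}_{i,j} > \frac12 + 3\gamma_r$; taking $j = 1$, we have just shown $\widehat{P}_{i,1} \le \frac12 + \gamma_r < \frac12 + 3\gamma_r$ for all $b_i$, so $b_1$ is never eliminated. In \scomp2, a bandit $b_j$ is eliminated in round $r^{(2)}$ only if $\widehat{P}_{i^*_r, j} > \frac12 + 5\gamma_r$ for the current candidate $b_{i^*_r}$; again with $j=1$ we have $\widehat{P}_{i^*_r, 1} \le \frac12 + \gamma_r < \frac12 + 5\gamma_r$, so $b_1$ survives the elimination step. (Note that the slack of $3\gamma_r$ and $5\gamma_r$ in the thresholds, versus the $1\gamma_r$ confidence width, is precisely what gives room for this argument; the extra slack is needed elsewhere for the seeded-comparison analysis, but here even the width $\gamma_r$ would suffice.)

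There is essentially no serious obstacle here — this is the easy ``sanity'' lemma underlying the whole analysis, analogous to the observation in the proof of \Cref{thm:condorcet} that $b_1$ is never eliminated by any $b_j$ with $b_1 \succeq b_j$. The only things to be careful about are: (i) being explicit that the claim is conditional on $G$ (which is already the standing assumption in this subsection), and (ii) handling both algorithms, since the elimination thresholds differ ($3\gamma_r$ in \scomp{}, $5\gamma_r$ in \scomp2). I would write the proof in two short sentences covering these two cases.
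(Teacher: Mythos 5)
Your proof is correct and is essentially the same as the paper's: both arguments combine the good event $G$ (so $|\widehat{P}_{i,1}-P_{i,1}|\le\gamma_r$) with $P_{i,1}\le\frac12$ to show the elimination thresholds $\frac12+3\gamma_r$ (\scomp) and $\frac12+5\gamma_r$ (\scomp2) can never be met for $b_1$; the paper merely phrases it as a contradiction while you argue it directly. No gaps.
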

\begin{proof}
In \scomp, $b_i$ deletes $b_j$ in batch $r$ if $\widehat{P}_{i, j} > \frac{1}{2} + 3\gamma_r$, and in \scomp2 if $\widehat{P}_{i, j} > \frac{1}{2} + 5\gamma_r$. If $b_1$ is deleted due to some bandit $b_j$, then by applying \Cref{lem:confidence} (in either case), we get $P_{j, 1} > \frac{1}{2} + 2\gamma_r$, a contradiction.
\end{proof}

\begin{lemma}\label{lem:a2}
When the algorithm switches to \pcomp \ on set $\A^*$, we have $b_1 \in \A^*$ and $|\A^*| \leq rank(b_{i^*_{\cS}})$ where $b_{i^*_{\cS}}$ is the best bandit in $\cS$.
\end{lemma}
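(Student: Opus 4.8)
The plan is to analyze the switching step in both algorithms and establish the two claims separately. First I would prove $b_1 \in \A^*$. In \scomp, the switch happens in round $r$ when some bandit $b_j$ has $\widehat{P}_{j,i} > \frac12 + 3\gamma_r$ for all $b_i \in \cS$, and then $\A^* = \{b_j \in \A : \widehat{P}_{j,i} > \frac12 + \gamma_r \text{ for all } b_i \in \cS\}$. Since $b_1$ survived phase~I (by \Cref{lem:a1}, $b_1 \in \A$), it suffices to show $\widehat{P}_{1,i} > \frac12 + \gamma_r$ for every $b_i \in \cS$. For any $b_i \in \cS$ at round $r$, the switching bandit $b_j$ satisfies $\widehat{P}_{j,i} > \frac12 + 3\gamma_r$, so conditioned on $G$ (using \Cref{lem:confidence}) $P_{j,i} > \frac12 + 2\gamma_r$, i.e.\ $\epsilon_{j,i} > 2\gamma_r$; in particular $b_j \succ b_i$, so by SST $\epsilon_{1,i} \geq \epsilon_{j,i} > 2\gamma_r$, hence $P_{1,i} > \frac12 + 2\gamma_r$ and under $G$ we get $\widehat{P}_{1,i} > \frac12 + \gamma_r$. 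Thus $b_1 \in \A^*$. For \scomp2 the argument is the same with the candidate $b_{i^*_r}$ playing the role of the $b_i$'s: the switching bandit $b_j$ has $\widehat{P}_{j,i^*_r} > \frac12 + 5\gamma_r$, so under $G$, $\epsilon_{j,i^*_r} > 4\gamma_r$, and we must check $\widehat{P}_{1,i^*_r} > \frac12 + 3\gamma_r$; by SST $\epsilon_{1,i^*_r} \geq \epsilon_{j,i^*_r} > 4\gamma_r$, so under $G$, $\widehat{P}_{1,i^*_r} > \frac12 + 3\gamma_r$, giving $b_1 \in \A^*$.

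Next I would bound $|\A^*|$ by $\mathrm{rank}(b_{i^*_\cS})$, where $b_{i^*_\cS}$ is the best (lowest-rank) bandit that was ever in $\cS$ — I should be careful here, since $\cS$ shrinks over rounds, so I would take $b_{i^*_\cS}$ to be the best element of $\cS$ at the switching round (or argue it is the best over all rounds; I'd reconcile this with the exact statement). The key point is that every $b_j \in \A^*$ satisfies $\widehat{P}_{j,i} > \frac12 + \gamma_r$ for all $b_i \in \cS$ (for \scomp; resp.\ $\widehat{P}_{j, i^*_r} > \frac12 + 3\gamma_r$ for \scomp2), so in particular $b_j$ beats $b_{i^*_\cS}$ in estimate, hence under $G$ we get $P_{j, i^*_\cS} > \frac12$, i.e.\ $\epsilon_{j, i^*_\cS} > 0$, i.e.\ $b_j \succ b_{i^*_\cS}$. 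Therefore $\A^* \subseteq E_{b_{i^*_\cS}} = \{b \in \B : b \succ b_{i^*_\cS}\}$ (together with the fact that $b_{i^*_\cS}\notin \A^*$), and so $|\A^*| \leq |E_{b_{i^*_\cS}}| = \mathrm{rank}(b_{i^*_\cS})$.

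The steps are all routine applications of \Cref{lem:confidence} (the event $G$) combined with SST; there is no serious computational obstacle. The one subtle point I expect to need care with is the bookkeeping around $\cS$ changing across rounds: I must make sure that "the best bandit in $\cS$" in the statement refers to the correct snapshot of $\cS$ (and that $b_{i^*_\cS}$ has indeed not been eliminated, so that the rank comparison is against a genuine surviving bandit), and in the \scomp2 case that the candidate $b_{i^*_r}$ is at least as good as (has rank no larger than) $b_{i^*_\cS}$ — this follows because under $G$ the all-pairs comparisons within $\cS$ in round $r^{(1)}$ cannot let a worse bandit be undefeated while the best survives, so the chosen candidate satisfies $\epsilon_{i^*_r} \le \epsilon_{i^*_\cS}$, which suffices. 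Once these identifications are pinned down, the inclusion $\A^* \subseteq E_{b_{i^*_\cS}}$ and hence the rank bound is immediate.
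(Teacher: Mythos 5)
Your argument for $b_1 \in \A^*$ (in both algorithms) and for the rank bound in \scomp\ coincides with the paper's proof. The genuine gap is in the \scomp2 rank bound. You reduce it to the claim that the undefeated candidate satisfies $\epsilon_{i^*_r} \le \epsilon_{i^*_{\cS}}$, i.e.\ that $b_{i^*_r}$ is at least as good as the best bandit in $\cS$, ``because under $G$ the round-$r^{(1)}$ comparisons cannot let a worse bandit be undefeated while the best survives.'' That claim is false: being undefeated only means $\widehat{P}_{i,i^*_r} \le \frac12 + \gamma_r$ for all $b_i \in \cS$, which under $G$ gives merely $P_{i^*_{\cS}, i^*_r} \le \frac12 + 2\gamma_r$. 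A bandit that is strictly worse than $b_{i^*_{\cS}}$ by up to $2\gamma_r$ can perfectly well be undefeated. (Indeed, since $b_{i^*_r} \in \cS$, the definition of $b_{i^*_{\cS}}$ already gives $\epsilon_{i^*_r} \ge \epsilon_{i^*_{\cS}}$, so your claimed inequality would force equality, which certainly need not hold.) Consequently the step ``$b_j \succ b_{i^*_r} \succeq b_{i^*_{\cS}}$, hence $b_j \succ b_{i^*_{\cS}}$'' does not go through as written.

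The conclusion is still true, and the paper closes exactly this hole quantitatively rather than qualitatively: under $G$, undefeatedness gives $\epsilon_{i^*_{\cS}, i^*_r} \le 2\gamma_r$, while $b_j \in \A^*$ gives $\widehat{P}_{j,i^*_r} > \frac12 + 3\gamma_r$ and hence $\epsilon_{j, i^*_r} > 2\gamma_r$. If $b_{i^*_{\cS}} \succeq b_j$, then since $b_j \succ b_{i^*_r}$, SST applied to the ordered triple $b_{i^*_{\cS}} \succeq b_j \succeq b_{i^*_r}$ yields $\epsilon_{i^*_{\cS}, i^*_r} \ge \epsilon_{j, i^*_r} > 2\gamma_r$, a contradiction; hence $b_j \succ b_{i^*_{\cS}}$ and the rank bound follows. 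Your other worry --- which snapshot of $\cS$ defines $b_{i^*_{\cS}}$ --- is benign: under $G$, an elimination with threshold $\frac12 + 3\gamma_r$ (resp.\ $\frac12+5\gamma_r$) implies $P_{i,j} > \frac12$, so only strictly inferior bandits are ever removed from $\cS$ and its best element persists throughout phase I.
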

\begin{proof}
We first consider algorithm  \scomp. Here, the switching occurs when, in some batch $r$, there exists $b_{j^*} \in \A$ such that $\widehat{P}_{j^*, i} > \frac{1}{2} + 3\gamma_r$ for all $b_i \in \cS$, Moreover,  $\A^* = \{b_j \in \A \mid \widehat{P}_{j, i} > \frac{1}{2} + \gamma_r \text{ for all } b_i \in \cS\}$. Consider any $b_i \in \cS$. Given $G$, $\widehat{P}_{j^*, i} > \frac{1}{2} + 3\gamma_r$ implies that $P_{j^*, i} >\frac{1}{2} + 2\gamma_r$. By SST, $P_{1, i} \geq P_{j^*, i}$, and again using event $G$,   $\widehat{P}_{1, i} > \frac{1}{2} + \gamma_r$. Thus, $b_1 \in \A^*$. We now bound $|\A^*|$.  Let $b_{i^*_{\cS}}$ be the best bandit in $\cS$, i.e., the bandit of smallest rank. Consider any bandit $b_j \in \A^*$. We have  $\widehat{P}_{j, i^*_{\cS}} > \frac{1}{2} + \gamma_r$,  which implies (by event $G$) that $P_{j, i^*_{\cS}} > \frac{1}{2}$. So, we must have $b_j \succ b_{i^*_{\cS}}$.  Consequently, $\A^*\sse \{b_j \in \B : b_j \succ b_{i^*_{\cS}} \}$,   which implies $|\A^*| \leq rank(b_{i^*_{\cS}})$.

We now consider  \scomp2. Here, we select an \emph{undefeated} candidate bandit $b_{i^*_r}$ in batch $r$, and the algorithm switches   if there exists $b_{j^*} \in \A$ such that $\widehat{P}_{j^*, i^*_r} > \frac{1}{2} + 5\gamma_r$. Moreover,   $\A^* = \{b_j \in \A \mid \widehat{P}_{j, i_r^*} > \frac{1}{2} + 3\gamma_r \}$. Given $G$, we have $P_{j^*, i^*_r} > \frac{1}{2} + 4\gamma_r$. By SST and again applying $G$, we obtain $\widehat{P}_{1, i^*_r} > \frac{1}{2} + 3\gamma_r$. So, $b_1\in \A^*$. We now argue that $|\A^*| \le rank(b_{i^*_{\cS}})$. Again, let $b_{i^*_{\cS}}$ be the best bandit in $\cS$. As $b_{i^*_r}$ is undefeated after round $r^{(1)}$, we have $\widehat{P}_{i^*_{\cS}, i^*_r} \le \frac{1}{2} + \gamma_r$, which implies ${P}_{i^*_{\cS}, i^*_r} \le \frac{1}{2} + 2\gamma_r$  (by event $G$). Now, consider any bandit $b_j \in \A^*$. We have  $\widehat{P}_{j, i^*_{\cS}} > \frac{1}{2} + 3\gamma_r$,  which implies (by event $G$) that $P_{j, i^*_{\cS}} > \frac{1}{2}+ 2\gamma_r$. It follows that $b_j \succ   b_{i^*_{\cS}}$ for all  $b_j \in \A^*$. Hence,  $|\A^*| \leq rank(b_{i^*_{\cS}})$.\end{proof}

\begin{lemma}\label{lem:exp-rank}
We have $\E[rank(b_{i^*_{\cS}})] \le  \sqrt{K}$  and $\E[rank(b_{i^*_{\cS}})^2] \le  2K$.
\end{lemma}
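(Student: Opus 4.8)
The plan is to recognize $b_{i^*_{\cS}}$ as the $\succeq$-maximal bandit of the random seed set $\cS$ and to bound its rank through a geometric tail estimate. By the footnote following the definition of $rank$, the SST and STI assumptions induce a total order on the bandits, so we may write $b_1 \succeq b_2 \succeq \cdots \succeq b_K$ and then $rank(b_j) \le j-1$ for every $j$. Recall that each bandit is placed in $\cS$ independently with probability $p := 1/\sqrt{K}$. Let $M := \min\{\, j : b_j \in \cS \,\}$; then the best bandit in $\cS$ is $b_M$ and $rank(b_{i^*_{\cS}}) \le M-1$. (In the degenerate event $\cS = \emptyset$ we use the convention $rank(b_{i^*_{\cS}}) := K$; this is consistent with \Cref{lem:a2}, where the algorithm sets $\A^* = \B$ and $|\A^*| = K$ in that case.)

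First I would prove the tail bound $\pr[\, rank(b_{i^*_{\cS}}) \ge t \,] \le (1-p)^t$ for every integer $t \ge 1$. Indeed, if some $b_k$ with $k \le t$ lies in $\cS$, then $b_{i^*_{\cS}} \succeq b_k$, hence $rank(b_{i^*_{\cS}}) \le rank(b_k) \le k-1 < t$; so the event $\{rank(b_{i^*_{\cS}}) \ge t\}$ forces $b_1, \dots, b_t \notin \cS$, an event of probability $(1-p)^t$ by independence (the convention covers $\cS = \emptyset$).

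Then I would simply sum. Using the identities $\E[X] = \sum_{t \ge 1}\pr[X \ge t]$ and $\E[X^2] = \sum_{s \ge 1}(2s-1)\pr[X \ge s]$, valid for any nonnegative integer-valued $X$, together with $\sum_{s \ge 1}(1-p)^s = \frac{1-p}{p}$ and $\sum_{s \ge 1}s(1-p)^s = \frac{1-p}{p^2}$, we obtain
\[
\E\big[rank(b_{i^*_{\cS}})\big] \;\le\; \frac{1-p}{p} \;\le\; \frac1p \;=\; \sqrt{K},
\]
\[
\E\big[rank(b_{i^*_{\cS}})^2\big] \;\le\; \frac{(1-p)(2-p)}{p^2} \;\le\; \frac{2}{p^2} \;=\; 2K ,
\]
the last inequality using $(1-p)(2-p) = 2-3p+p^2 \le 2$ for $p \in [0,1]$. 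There is no genuinely hard step here: the only places requiring care are the appeal to $rank(b_j) \le j-1$ (which is exactly where the SST/STI structure enters) and the bookkeeping for the empty-seed-set case, which the stated convention absorbs; the remainder is a routine geometric-series calculation.
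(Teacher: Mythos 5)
Your proof is correct and follows essentially the same route as the paper's: both arguments observe that $rank(b_{i^*_{\cS}})$ is stochastically dominated by a geometric random variable with success probability $1/\sqrt{K}$ and bound its first two moments. Your version is slightly more careful than the paper's (handling the off-by-one between the index of the first sampled bandit and its rank, and the empty-seed-set case explicitly, via tail sums rather than by quoting the geometric mean and variance), but the underlying idea is identical.
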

\begin{proof}
The $R$ be a random variable denoting $rank(b_{i^*_{\cS}})$. Note that $R=k$ if, and only if, the first $k-1$ bandits are not sampled into $\cS$, and the $k^{th}$ bandit is sampled into $\cS$. Thus, $R$ is a geometric random variable with success probability $p:=\frac{1}{\sqrt{K}}$.\footnote{Strictly speaking, $R$ is truncated at $K$.} Recall that the mean and variance of a geometric random variable are $\frac1p$ and $\frac1{p^2}- \frac1p$ respectively. So, $\E[R] \le \frac{1}{p}= \sqrt{K}$. Moreover, $\E[R^2] \le   \frac{2}{p^2}=2K$.
\end{proof}

Using Lemmas~\ref{lem:a1}, \ref{lem:a2} and \ref{lem:exp-rank}, we complete the proofs of Theorems~\ref{thm:sst-sti-dep} and \ref{thm:sst-sti-indep}.

\begin{proof}[Proof of \Cref{thm:sst-sti-dep}]
We bound the expected regret of \scomp\ conditioned on $G$. Let $R_1$ and $R_2$ denote the regret incurred in phase I and II %before switching and after switching
respectively. 

\paragraph{Bounding $R_1$.} Fix a bandit $b_j$. 
Let $r$ denote the last round such that $b_j\in \A$  {\em and} switching does not  occur (at the end of round $r$). Let $b_{i^*_{\cS}}$ be the best bandit in $\cS$. As $b_j$ is not eliminated by $b_{i^*_{\cS}}$, we have $\widehat{P}_{i^*_{\cS},j}\le \frac12 + 3\gamma_r$, which implies (by event $G$)  ${P}_{i^*_{\cS},j}\le \frac12 + 4\gamma_r$.  Moreover, as switching doesn't occur, we  have
 $\min_{i\in \cS} \widehat{P}_{1,i} \le \frac12 + 3\gamma_r$ (by  \Cref{lem:a1}, $b_1$ is never deleted from $\A$). 
 We now claim that ${P}_{1,i^*_{\cS}}\le \frac12 + 4\gamma_r$. Otherwise, by SST we have $\min_{i\in \cS} P_{1,i} = {P}_{1,i^*_{\cS}} > \frac12 + 4\gamma_r$, which (by event $G$) implies  $\min_{i\in \cS} \widehat{P}_{1,i} > \frac12 + 3\gamma_r$, a contradiction! It now follows that $\epsilon_{i^*_{\cS},j}\le 4\gamma_r$ and $\epsilon_{1,i^*_{\cS}}\le 4\gamma_r$. Consider now two cases: \begin{enumerate}
     \item  $b_1 \succeq b_{i^*_{\cS}} \succeq b_j$. Then, by STI, $\epsilon_{1, j} \leq 8\gamma_r$, and
     \item  $b_1 \succeq b_j \succeq b_{i^*_{\cS}}$. Then, by SST $\epsilon_{1, j} \leq \epsilon_{i^*_{\cS}, j} \leq 4\gamma_r$. 
 \end{enumerate}
 In either case, we have $\epsilon_j=\epsilon_{1, j} \leq 8\gamma_r$, which implies $c_r\le \frac{\log(1/\delta)}{2 \gamma_r^2}\le \frac{32 \log(1/\delta)}{\epsilon_j^2}$. 

Now, let 
$T_j$ be a random variable denoting  the number of comparisons of $b_j$ with other bandits before switching. By definition of round $r$, bandit $b_j$ will participate in at most one round after $r$ (in phase I). So, we have 
$$T_j \le\left\{ \begin{array}{ll}
    |\cS| \cdot \sum_{\tau=1}^{r+1}c_{\tau}  & \mbox{ if } b_j\not\in \cS\\
    K\cdot  \sum_{\tau=1}^{r+1}c_{\tau}  & \mbox{ if } b_j\in \cS\\
    \end{array}
\right.$$

Taking expectation over $\cS$, we get 
\begin{align}
\E\left[T_j\right] &\leq \E\left[K \sum_{\tau=1}^{r+1} c_{\tau} \,|\, b_j\in \cS\right] \cdot \pr(b_j \in \cS) +  \E\left[|\cS| \sum_{\tau=1}^{r+1}c_{\tau}\,|\, b_j \not\in \cS\right] \cdot \pr(b_j \notin \cS) \notag  \\
    &\leq \left(K \sum_{\tau=1}^{r+1} c_{\tau}\right) \cdot \frac{1}{\sqrt{K}} + \E[|\cS|\,|\, b_j \not\in \cS]\cdot \sum_{\tau=1}^{r+1}c_{\tau} \,\, \le \,\, 2\sqrt{K} \sum_{\tau=1}^{r+1}c_{\tau}
    \notag , \end{align}
where the third inequality uses $\E[|\cS|\,|\, b_j \not\in \cS] \le \sqrt{K}$. 
Moreover, 
$$\sum_{\tau=1}^{r+1}c_{\tau}  \le 2T^{1/B}\cdot c_r = O\left(\frac{T^{1/B} \log(1/\delta)}{\epsilon_j^2}\right).$$

Thus, 
\begin{equation} \label{eq:dep-r1}
    \E[R_1] = \sum_{j} \E\left[T_j\right] \cdot \epsilon_{ j} =\sum_{j: \epsilon_j > 0} O\left(\frac{T^{1/B}\sqrt{K}\log(6K^2TB)}{\epsilon_{j}}\right)
\end{equation}

\paragraph{Bounding $R_2$.}
We now bound the regret after switching. From Lemmas~\ref{lem:a1} and \ref{lem:a2}, we know that $b_1$ is never deleted, $b_1 \in \A^*$, and $|\A^*| \le rank(b_{i^*_{\cS}})$. For any $\A^*$, 
applying 
Theorem~\ref{thm:condorcet} we get, 
\begin{equation*} 
R_2  \leq 3|\A^*| T^{1/B}\log(6T|\A^*|^2B) \sum_{j\in \A^*:\epsilon_j > 0} \frac{1}{\epsilon_j}  \le 3|\A^*| T^{1/B}\log(6TK^2B) \sum_{j\in \B:\epsilon_j > 0} \frac{1}{\epsilon_j}  
\end{equation*}

By \Cref{lem:exp-rank}, $\E[|\A^*|] \le \sqrt{K}$, hence 
\begin{equation}\label{eq:dep-r2}
\E[R_2] \leq 3\sqrt{K} T^{1/B}\log(6TK^2B) \sum_{j:\epsilon_j > 0} \frac{1}{\epsilon_j}
\end{equation}

Combining \eqref{eq:dep-r1} and \eqref{eq:dep-r2}, we get $$ \E[R(T)|G] = \sum_{j: \epsilon_j > 0}O\left(\frac{T^{1/B}\sqrt{K}\log(6K^2TB)}{\epsilon_{j}}\right), $$ and by \eqref{eq:regret-1}, this concludes the proof.
\end{proof}

\begin{proof}[Proof of \Cref{thm:sst-sti-indep}]
We bound the expected regret  conditioned on $G$. Let $R_1$ and $R_2$ denote the regret incurred in phase I and II 
respectively. 

\paragraph{Bounding $R_1$.} Fix a bandit $b_j$. 
Let $r$ denote any round such that $b_j\in \A$  {\em and} switching does not  occur (at the end of round $r$). As in the proof of Theorem~\ref{thm:sst-sti-dep}, we first show that $c_r= O\left( \frac{  \log(1/\delta)}{\epsilon_j^2}\right)$. Recall that  
 $b_{i^*_r}$ is the candidate in round $r$. 
  As $b_j$ is not eliminated by $b_{i^*_r}$, we have $\widehat{P}_{i^*_{r},j}\le \frac12 + 5\gamma_r$, which implies (by event $G$)  ${P}_{i^*_{\cS},j}\le \frac12 + 6\gamma_r$.  Moreover, as switching doesn't occur, we  have
 $\widehat{P}_{1,i^*_r} \le \frac12 + 5\gamma_r$ (by  \Cref{lem:a1}, $b_1$ is never deleted from $\A$). By event $G$, we get ${P}_{1,i^*_r} \le \frac12 + 6\gamma_r$.
 It now follows that $\epsilon_{i^*_{r},j}\le 6\gamma_r$ and $\epsilon_{1,i^*_{r}}\le 6\gamma_r$. Consider now two cases: \begin{enumerate}
     \item  $b_1 \succeq b_{i^*_{r}} \succeq b_j$. Then, by STI, $\epsilon_{1, j} \leq 12\gamma_r$, and
     \item  $b_1 \succeq b_j \succeq b_{i^*_{\cS}}$. Then, by SST $\epsilon_{1, j} \leq \epsilon_{i^*_{r}, j} \leq 6\gamma_r$. 
 \end{enumerate}
 In either case, we have $\epsilon_j=\epsilon_{1, j} \leq 12\gamma_r$, which implies $c_r\le \frac{\log(1/\delta)}{2 \gamma_r^2}= O\left( \frac{  \log(1/\delta)}{\epsilon_j^2}\right)$.

We further divide $R_1$ into two kinds of regret: $R_{1}^{(c)}$ and $R_{1}^{(n)}$ where $R_{1}^{(c)}$ refers to the regret incurred by candidate arms and $R_{1}^{(n)}$ is the regret incurred by non-candidate arms. 

\paragraph{Bounding   $R_{1}^{(n)}$.}
For any bandit $b_j$, let $T_j$ be a random variable denoting the number of comparisons of $b_j$ (in phase I) when $b_j$ is not a candidate. Also, let $r$ be the last round such that $b_j\in \A$ and switching doesn't occur. So, $b_j$ will participate in at most one round after $r$, and 
$$T_j \le\left\{ \begin{array}{ll}
 \sum_{\tau=1}^{r+1}c_{\tau}  & \mbox{ if } b_j\not\in \cS\\
    |\cS| \cdot  \sum_{\tau=1}^{r+1}c_{\tau}  & \mbox{ if } b_j\in \cS
    \end{array}
\right.$$

Taking expectation over $\cS$, we get 
\begin{align}
\E\left[T_j\right] &\leq \E\left[|\cS| \sum_{\tau=1}^{r+1} c_{\tau} \,|\, b_j\in \cS\right] \cdot \pr(b_j \in \cS) +  \E\left[ \sum_{\tau=1}^{r+1}c_{\tau}\,|\, b_j \not\in \cS\right] \cdot \pr(b_j \notin \cS) \notag  \\
    &\leq \sum_{\tau=1}^{r+1} c_{\tau} \cdot \left( \frac{1}{\sqrt{K}} \cdot \E[|\cS|\,|\, b_j \in \cS] +  1\right)  \,\, \le \,\, (2+\frac{1}{\sqrt{K}})\cdot  \sum_{\tau=1}^{r+1}c_{\tau}
    \notag , \end{align}
where the third inequality uses $\E[|\cS|\,|\, b_j \in \cS] \le 1+\sqrt{K}$.

Moreover, using $c_r=O\left( \frac{  \log(1/\delta)}{\epsilon_j^2}\right)$, we have 
$\sum_{\tau=1}^{r+1}c_{\tau}  = O\left(\frac{T^{1/B} \log(1/\delta)}{\epsilon_j^2}\right)$. Thus,
\begin{align} 
    \E[R_{1}^{(n)}] &= \sum_{j} \E\left[T_j\right] \cdot \epsilon_{j} \leq \sum_{j: \epsilon_j > 0} O\left(\frac{T^{1/B}\log\left(\frac{1}{\delta}\right)}{\epsilon_{1, j}}\right) \leq O\left(\frac{T^{1/B}K\log\left(\frac{1}{\delta}\right)}{\epsilon_{\min}}\right)\label{eq:indep-r1-1}
\end{align}

\paragraph{Bounding   ${R}_{1}^{(c)}$.}
Observe that if $b_j$ is a candidate in round $r$, then the regret incurred by $b_j$ in round $r$ is at most $K c_r\cdot\epsilon_{1, j}$. Also,  $c_{r-1} \leq O\left(\frac{\log\left(\frac{1}{\delta}\right)}{\epsilon_{ j}^2}\right)$ because  $b_j\in \A$ and switching hasn't occurred at end of  round $r-1$. 
Thus, we have $c_r = T^{1/B} c_{r-1} \leq O\left(\frac{T^{1/B}\log\left(\frac{1}{\delta}\right)}{\epsilon_{ j}^2}\right).$ 
We can thus write $$ R_1^{(c)} = \sum_{r=1}^B \sum_{j}  K c_r\cdot\epsilon_{ j}\cdot  \mathbb{I}\left[i_r^* = j\right], $$ where $\mathbb{I}\left[i_r^* = j\right]$ is an indicator random variable denoting whether $b_j$ was the candidate bandit in round $r$. Observe that there is exactly one candidate bandit, $b_{i^*_r}$, in each round. So, 
\begin{align}
    R_1^{(c)} &= K \sum_{r=1}^B c_r \epsilon_{ i^*_r} \leq K \sum_{r=1}^B O\left(\frac{T^{1/B}\log\left(\frac{1}{\delta}\right)}{\epsilon_{i_r^*}^2}\right) \cdot \epsilon_{  i^*_r}  \notag \\
    &=K \sum_{r=1}^B O\left(\frac{T^{1/B}\log\left(\frac{1}{\delta}\right)}{\epsilon_{ i_r^*}}\right) \leq O\left(\frac{T^{1/B}KB\log\left(\frac{1}{\delta}\right)}{\epsilon_{\min}}\right)\label{eq:indep-r1c}
\end{align}
Combining \eqref{eq:indep-r1-1} and \eqref{eq:indep-r1c}, we get 
\begin{equation}\label{eq:indep-r1}
    \E[R_1] \leq O\left(\frac{T^{1/B}KB\log\left(\frac{1}{\delta}\right)}{\epsilon_{\min}}\right)
\end{equation}

\paragraph{Bounding   ${R}_2$.} Finally, we bound the regret in phase II where we only have bandits $\A^*$. From Lemmas~\ref{lem:a1} and \ref{lem:a2}, we know that $b_1 \in \A^*$, and $|\A^*| \le rank(b_{i^*_{\cS}})$. For any $\A^*$, 
applying 
Theorem~\ref{thm:condorcet} we get, 
\begin{equation*} 
R_2  \leq 3|\A^*| T^{1/B}\log(6T|\A^*|^2B) \sum_{j\in \A^*:\epsilon_j > 0} \frac{1}{\epsilon_j}  \le 3|\A^*|^2\cdot  T^{1/B}\log(6TK^2B) \cdot   \frac{1}{\epsilon_{\min}}  
\end{equation*}

By \Cref{lem:exp-rank}, $\E[|\A^*|^2] \le 2K$, and so: 
\begin{equation}\label{eq:indep-r2}
\E[R_2] \leq \frac{6T^{1/B}K\log(6TK^2B)}{\epsilon_{\min}}
\end{equation}
Finally, combining \eqref{eq:indep-r1} and \eqref{eq:indep-r2} completes the proof.
\end{proof}

\section{A Recursive Algorithm for Batched Dueling Bandits}

% intro + high-level idea of algorithm
In this section, we describe a recursive algorithm, termed \rscomp\, for batched dueling bandits. 
The prior algorithms (\scomp \ and \scomp2) rely both on the seed set eliminating sub-optimal arms and on the fact that if the seed set is found to be sub-optimal, we can substantially \emph{shrink the active set} and \emph{switch} to the pairwise comparisons policy. 
we generalize \scomp \ to \rscomp \ by requiring an input $m \geq 1$ and $\param \in (0,1)$.
At a high level, \rscomp \ recurses $m$ times before switching to \pcomp. We maintain the property that each time it recurses, the active set shrinks by a factor of $K^{\param}$.
Note that when $m=1$ and $\param=1/2$, we recover \scomp.

% set-up notation + describe the algorithm
The algorithm takes as input the set of bandits $\B$, time-horizon $T$, comparison parameters $q$ and $\tau$, integers $B$ and $m$, and an accuracy parameter $\param \in (0,1)$. Initially, we set $q = T^{1/B}$, $\tau = 1$ and $\delta = \frac{1}{2TK^2(B+m)}$. If $m=0$, the algorithm executes \pcomp; else, the algorithm works in two phases:
\begin{itemize}
    \item In the first phase, the algorithm samples a \emph{seed set} $\cS$ by including  each bandit from $\B$ \emph{independently}  with probability $1/K^{1-\param}$. 
    This seed set is used to
    eliminate bandits from the active set $\A$ (like in \scomp). 
    \item Under a certain switching criteria, the algorithm recurses on the active set $\A$ with $m = m-1$.
\end{itemize}

If $m \geq 1$, \rscomp\ performs all pairwise comparisons between $\cS$ (seed set) and $\A$ (active bandits). Specifically, 
in round $r$, every active bandit is compared with every bandit  in $\cS$ for $c_r$ times. If, for some bandit $b_j$, there exists bandit $b_i$ 
such that $\widehat{P}_{i,j} > \frac{1}{2} + 3\gamma_r$, then bandit $b_j$ is eliminated (from $\A$ as well as $\cS$).
If, in some round $r$, there exists bandit $b_j$ such that $b_j$ eliminates \emph{all} bandits $b_i \in \cS$, then the algorithm constructs a set $\A^* = \{b_j \in \A \mid \widehat{P}_{j, i} > \frac{1}{2} + \gamma_r \text{ for all } b_i \in \cS \}$, and recurses \rscomp\ on bandits $\A^*$ with parameter $\tau = r$, and $q$ and $\delta$ as set before. 
Additionally, the number of recursive calls is set to $m-1$.
Observe that the elimination and the switching criteria are the same as in \scomp.
Note that comparisons at round $r$ are repeated when a recursive call is invoked, and since there are at most $m$ recursive calls, \rscomp\ uses at most  $B+m$ adaptive rounds.
We describe the algorithm formally in \Cref{alg:scomp-recursive}.

\begin{algorithm}
\caption{\rscomp (\textsc{Recursive Seeded Comparisons Algorithm})}
\label{alg:scomp-recursive}
\begin{algorithmic}[1]
\State \textbf{Input:} Bandits $\B$, time-horizon $T$, comparison parameters $q$, $\tau$, and $\delta$,  $\#$recursive calls $m$, accuracy $\param$
\State active bandits $\A \gets \B$, $c_r \gets  \lfloor q^{r} \rfloor$, $\gamma_r \gets\sqrt{{\log(1/\delta)}/{2c_r}}$, $r \gets \tau$
\If{$m = 0$} \Comment{base case}
\State run \pcomp$(\B, T, q, \tau)$
\EndIf
\State $\cS \gets $ add elements from $\B$ into $\cS$ w.p. $1/K^{1-\param}$
\While{number of comparisons $\leq T$}
\State for all $(b_i, b_j) \in \cS \times \A$, compare $b_i$ and $b_j$ for $c_r$ times and compute $\widehat{P}_{i,j}$
\If{$\exists b_i \in \cS$, $b_j \in \A$, $\widehat{P}_{i, j} > \frac{1}{2} + 3\gamma_r$} \Comment{elimination}
\State $\A \gets \A \setminus \{b_j\}$, $\cS \gets \cS \setminus \{b_j\}$ 
\EndIf
\If{$\exists b_j$ such that $\widehat{P}_{j, i} > \frac{1}{2} + 3\gamma_r$ for all $b_i \in \cS$} \Comment{switching}
\State construct set $\A^* = \{ b_j \in \A \mid \widehat{P}_{j, i} > \frac{1}{2} + \gamma_r \text{ for all } b_i \in \cS \}$
\State $r^* \gets r$, $T^* \gets \text{\# comparisons until round } r^*$, \textbf{break}
\EndIf
\State $r \gets r+1$
\EndWhile
\State run \rscomp$(\A^*, T - T^*,  q, r^*, \delta, m-1, \param)$ \Comment{recursive call}
\end{algorithmic}
\end{algorithm}

% result + consequence
The following theorem, which formalizes \Cref{thm:sst-sti-general-init}, is the main result of this section.
\begin{theorem}\label{thm:sst-sti-general}
Given any set $\B$ of $K$ bandits, time-horizon $T$, integers $B$ and $m$, parameters $q = T^{1/B}$, $r=1$ and $\delta = \frac{1}{2TK^2(B+m)}$, and an accuracy parameter $\param > 0$,
\rscomp \ uses at most $B + m$ batches, and has expected regret bounded by 
$$ \E[R(T)] = \sum_{j:\epsilon_j > 0} O\left(\left(m \cdot K^{\param} + K^{(1-\param)^{m}} \right) \,\, \cdot \frac{ T^{1/B}\log(6K^2TB)}{\epsilon_j}\right) $$
under strong stochastic transitivity and stochastic triangle inequality. 
\end{theorem}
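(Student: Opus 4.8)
The plan is to mirror the two-phase analysis of \Cref{thm:sst-sti-dep}, but to iterate it across the (at most) $m$ recursive invocations of the seeded-comparison subroutine, tracking how fast the active set shrinks with recursion depth. Index the recursive calls by depth $i = 0, 1, \ldots, m$, where depth $0$ is the outermost call; a call at depth $i < m$ runs one phase-I elimination loop on an active set of size $n_i$ (with $n_0 = K$) and then recurses on $\A^*$ at depth $i+1$, while the depth-$m$ call runs \pcomp. Here the seed set $\cS_i$ at depth $i$ is formed by including each of the $n_i$ currently-active bandits independently with probability $n_i^{-(1-\param)}$ (i.e., the probability $1/K^{1-\param}$ in \Cref{alg:scomp-recursive} is read with $K$ the size of the active set of the current call). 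First I would redo the bookkeeping: since each recursion reuses the current round index and there are at most $m$ recursions, the total number of batches is at most $B+m$; and with $\delta = \frac{1}{2TK^2(B+m)}$, a union bound over at most $B+m$ batches and $K^2$ pairs (exactly as in \Cref{lem:good-event}) shows the good event $G$ --- every estimate in every batch is correct --- holds with probability at least $1-1/T$, so by \eqref{eq:regret-1} it suffices to bound $\E[R(T)\mid G]$.

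Next I would establish the structural facts at every recursion depth. Since the elimination rule ($\widehat P_{i,j} > \tfrac12 + 3\gamma_r$) and the switching rule are identical to those of \scomp, \Cref{lem:a1} and \Cref{lem:a2} apply verbatim at each depth $i$: conditioned on $G$, $b_1$ is never eliminated, and when the depth-$i$ call recurses we have $b_1 \in \A^*$ and $|\A^*| \le rank(b_{i^*_{\cS}})$, where $b_{i^*_{\cS}}$ denotes the best bandit in $\cS_i$. The key new ingredient is a recursion on the active-set size: conditioned on the history up to depth $i$ (which fixes $n_i$ and the current active set), $rank(b_{i^*_{\cS}})$ is a geometric random variable truncated at $n_i$ with success probability $n_i^{-(1-\param)}$, so $\E[\,n_{i+1}\mid \text{history}\,] \le \E[\,rank(b_{i^*_{\cS}})\mid n_i\,] \le n_i^{1-\param}$; taking expectations and using concavity of $x \mapsto x^{1-\param}$ (Jensen) gives, by induction, $\E[n_i] \le K^{(1-\param)^i}$, and hence $\E[n_i^{\param}] \le (\E[n_i])^{\param} \le K^{\param(1-\param)^i}$.

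Then I would bound the phase-I regret incurred at each depth, re-running the ``Bounding $R_1$'' argument of \Cref{thm:sst-sti-dep} (the elimination/switching thresholds being unchanged): for any $b_j$ still active and not-yet-switched at the end of round $r$ of the depth-$i$ call, the SST/STI case analysis yields $\epsilon_j \le 8\gamma_r$, hence $c_r = O(\log(1/\delta)/\epsilon_j^2)$ and $\sum_{\tau \le r+1} c_\tau = O(T^{1/B}\log(1/\delta)/\epsilon_j^2)$. Since in the depth-$i$ call $b_j$ participates in at most $|\cS_i|$ comparisons per round if $b_j \notin \cS_i$ and at most $n_i$ per round if $b_j \in \cS_i$, taking expectation over the fresh seed set $\cS_i$ (using $\Pr[b_j\in\cS_i]=n_i^{-(1-\param)}$ and $\E[|\cS_i|]=n_i^{\param}$) gives $\E[\,\#\text{comparisons of } b_j \text{ at depth } i \mid \text{history}\,] = O(n_i^{\param})\cdot O(T^{1/B}\log(1/\delta)/\epsilon_j^2)$ whenever $b_j$ is active there. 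Multiplying by $\epsilon_j$, summing over $j$ with $\epsilon_j>0$, taking outer expectations with $\E[n_i^{\param}] \le K^{\param(1-\param)^i}$, and summing $\sum_{i=0}^{m-1} K^{\param(1-\param)^i} \le mK^{\param}$, the total phase-I regret is $O(mK^{\param})\,T^{1/B}\log(1/\delta)\sum_{j:\epsilon_j>0}\tfrac1{\epsilon_j}$. Finally, the depth-$m$ call runs \pcomp\ on $\A^*$ of size $n_m$; by \Cref{thm:condorcet} its regret is at most $3 n_m T^{1/B}\log(6TK^2B)\sum_{j:\epsilon_j>0}\tfrac1{\epsilon_j}$, and since $\E[n_m]\le K^{(1-\param)^m}$ this contributes $O(K^{(1-\param)^m})\,T^{1/B}\log(6TK^2B)\sum_{j:\epsilon_j>0}\tfrac1{\epsilon_j}$. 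Adding the two contributions and the $+1$ from $\overline G$, with $\log(1/\delta)=O(\log(6TK^2B))$ absorbing the $(B+m)$ factor, gives the claimed bound.

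I expect the main obstacle to be the careful conditioning in the last two steps: one must argue that, given the history up to a recursive call, its seed set is fresh randomness independent of the (random) active set it operates on, so that the per-depth bounds on $\E[n_i]$, $\E[n_i^{\param}]$ and the per-bandit comparison counts compose correctly through the recursion (in particular that $\E[n_i^{\param}\cdot\mathbb{I}[b_j \text{ active at depth } i]]\le \E[n_i^{\param}]$). Everything else is a direct re-run of the \Cref{thm:sst-sti-dep} calculation with $1/\sqrt K$ replaced by $n_i^{-(1-\param)}$.
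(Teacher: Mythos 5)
Your proposal is correct and is essentially the paper's argument: the same decomposition into per-level phase-I regret (bounded via the SST/STI case analysis giving $\epsilon_j \le 8\gamma_r$) plus a terminal \pcomp\ call, the same geometric-rank bound on $|\A^*|$, and the same use of Jensen's inequality to control $\E\bigl[|\A^*|^{(1-\param)^{m-1}}\bigr]$. The only difference is presentational --- the paper packages the recursion as an induction on $m$ (which quietly handles the conditioning issue you flag at the end), whereas you unroll it explicitly across depths and track $\E[n_i]$ and $\E[n_i^{\param}]$ directly.
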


\subsection{The Analysis}
We now provide the regret analysis of \rscomp, and prove \Cref{thm:sst-sti-general}. 
To keep the exposition in this section self-contained, we restate \Cref{lem:confidence} which will be used to define a good event for \rscomp.

\begin{lemma}\label{lem:confidence-gen}
For any $r \in [B]$, and for any pair $b_i, b_j$ that are compared $c_r$ times, we have $$ \pr\left(|P_{i, j} - \widehat{P}_{i, j}| > \gamma_r \right) \leq 2\delta, $$ where $\gamma_r = \sqrt{\log(\frac{1}{\delta})/2c_r}$.
\end{lemma}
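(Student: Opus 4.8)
The plan is to recognize $\widehat{P}_{i,j}$ as an empirical average of $c_r$ independent, identically distributed, bounded random variables and then invoke Hoeffding's inequality. Concretely, for each of the $c_r$ comparisons of $b_i$ against $b_j$ performed in round $r$, let $X_t \in \{0,1\}$ be the indicator that $b_i$ wins the $t$-th such comparison. By the problem model each comparison outcome is an independent draw with $\pr(X_t = 1) = P_{i,j}$, so the $X_t$ are i.i.d.\ Bernoulli$(P_{i,j})$ variables taking values in $[0,1]$. By the definition in \eqref{eq:pair_est}, $\widehat{P}_{i,j} = \frac{1}{c_r}\sum_{t=1}^{c_r} X_t$, and hence $\E[\widehat{P}_{i,j}] = P_{i,j}$.

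First I would apply Hoeffding's inequality to the average of these $c_r$ variables, each supported in an interval of length $1$, obtaining the two-sided deviation bound
$$\pr\left(\left|\widehat{P}_{i,j} - P_{i,j}\right| > \gamma_r\right) \le 2\exp\left(-2 c_r \gamma_r^2\right).$$
Here the factor $2$ arises from combining the upper- and lower-tail estimates through a union bound, and the exponent uses that each $X_t$ has range $1$.

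Finally I would substitute the chosen confidence width $\gamma_r = \sqrt{\log(1/\delta)/(2 c_r)}$, so that $2 c_r \gamma_r^2 = \log(1/\delta)$ and therefore $2\exp(-2 c_r \gamma_r^2) = 2\exp(-\log(1/\delta)) = 2\delta$, which is exactly the claimed bound. There is essentially no obstacle in this argument: the only points meriting care are confirming that the comparison outcomes are mutually independent (guaranteed by the model, in which each comparison is an independent random variable) and that each indicator lies in $[0,1]$ so that Hoeffding's inequality applies with range $1$; the remainder is the one-line algebraic simplification of the exponent. Since \Cref{lem:confidence-gen} is identical in content to \Cref{lem:confidence}, the same short argument suffices.
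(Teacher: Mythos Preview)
Your proof is correct and follows exactly the same approach as the paper: observe that $\widehat{P}_{i,j}$ is the empirical mean of $c_r$ i.i.d.\ $[0,1]$-valued variables with mean $P_{i,j}$, apply the two-sided Hoeffding bound, and substitute $\gamma_r$ to obtain $2\delta$. The paper's own proof is the same one-line application of Hoeffding's inequality (indeed, \Cref{lem:confidence-gen} is just a restatement of \Cref{lem:confidence}), so there is nothing to add.
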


As before, 
we analyze the regret of \rscomp \ under event a \emph{good} event, $G$. 
We show that $G$ occurs with high probability; in the event that $G$ does not occur (denoted $\overline{G}$), we incur a regret of $T$.
Towards defining $G$, we say that an estimate $\widehat{P}_{i,j}$ at the end of batch $r$ is \emph{correct} if $|\widehat{P}_{i, j} - P_{i, j}| \leq \gamma_r$. 
We say that $G$ occurs if every estimate in every batch is correct.
\begin{lemma}\label{lem:good-event-gen}
The probability that every estimate in the execution of \rscomp \ is correct is at least $1 - 1/T$.
\end{lemma}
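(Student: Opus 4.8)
The plan is to prove \Cref{lem:good-event-gen} exactly as \Cref{lem:good-event} was proved, but being careful about the recursive structure and the slightly different value of $\delta$. First I would observe that the total number of batches used by \rscomp\ is at most $B+m$: the base case invokes \pcomp\ with starting batch $r^* \le B$, and each of the $m$ recursive calls reuses one round before proceeding, so the exponents $r$ of the comparison counts $c_r = \lfloor q^r\rfloor$ never exceed $B$, while the number of \emph{distinct} adaptive rounds is at most $B+m$. In particular, at most $B+m$ rounds are ever executed, and in each such round at most $K^2$ distinct pairs are compared (since $|\A|,|\cS|\le K$ throughout).

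Next I would apply \Cref{lem:confidence-gen} to a fixed pair $(b_i,b_j)$ compared $c_r$ times in a fixed round $r$: the probability that the estimate $\widehat{P}_{i,j}$ is incorrect, i.e.\ $|\widehat{P}_{i,j}-P_{i,j}|>\gamma_r$, is at most $2\delta$. Then I take a union bound over all (at most $K^2$) pairs and all (at most $B+m$) rounds, giving a failure probability of at most $K^2\cdot(B+m)\cdot 2\delta$. Plugging in $\delta = \frac{1}{2TK^2(B+m)}$ yields $K^2(B+m)\cdot 2\cdot\frac{1}{2TK^2(B+m)} = \frac1T$, so $\pr(\overline G)\le 1/T$ and hence $\pr(G)\ge 1-1/T$, as claimed.

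One subtlety worth spelling out: because \rscomp\ recurses, the estimates $\widehat{P}_{i,j}$ in a recursive call are computed on a \emph{fresh} batch of comparisons (the comparisons at round $r$ are ``repeated when a recursive call is invoked,'' per the algorithm description), so there is no issue of reusing stale randomness or of the union bound double-counting; each physical batch of comparisons is counted once, and there are at most $B+m$ of them. The main (minor) obstacle is just being precise that the number of physical rounds — not the number of recursive levels times $B$ — is what enters the union bound; once that is pinned down, the proof is a one-line Hoeffding-plus-union-bound argument identical to \Cref{lem:good-event}. I would write it as: apply \Cref{lem:confidence-gen}, union bound over at most $K^2(B+m)$ (pair, round) combinations, substitute $\delta$, and conclude.

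\begin{proof}
The algorithm \rscomp\ executes at most $B+m$ distinct batches of comparisons: the exponents $r$ used in $c_r=\lfloor q^r\rfloor$ never exceed $B$ (since comparisons stop once their number reaches $T=q^B$), and each of the at most $m$ recursive calls adds at most one extra adaptive round. In every batch, at most $K^2$ distinct pairs are compared, as $|\A|,|\cS|\le K$ at all times. For a fixed pair $(b_i,b_j)$ compared $c_r$ times in a fixed batch $r$, \Cref{lem:confidence-gen} gives $\pr(|\widehat P_{i,j}-P_{i,j}|>\gamma_r)\le 2\delta$. Taking a union bound over all at most $K^2(B+m)$ (pair, batch) combinations, the probability that some estimate is incorrect is at most $K^2(B+m)\cdot 2\delta = K^2(B+m)\cdot \frac{1}{TK^2(B+m)} = \frac1T$, using $\delta = \frac{1}{2TK^2(B+m)}$. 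Hence $\pr(\overline G)\le 1/T$, i.e.\ $\pr(G)\ge 1-1/T$.
\end{proof}
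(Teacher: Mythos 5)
Your proof is correct and follows exactly the paper's own argument: apply the Hoeffding bound of \Cref{lem:confidence-gen} to each estimate, union bound over at most $K^2(B+m)$ (pair, batch) combinations, and substitute $\delta = \frac{1}{2TK^2(B+m)}$ to get failure probability $\frac{1}{T}$. The extra care you take in justifying the $B+m$ bound on the number of batches is a reasonable elaboration of what the paper states parenthetically, not a different approach.
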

\begin{proof}
Applying \Cref{lem:confidence-gen} and taking a union bound over all pairs and batches (note \rscomp \ has at most $B+m$ batches), we get 
that the probability that some estimate is incorrect is at most $K^2 \times (B+m) \times 2\delta = \frac{1}{T}$ where $\delta = \frac{1}{2K^2T(B+m)}$. Thus, $\pr(\overline{G}) \leq \frac{1}{T}$. 
\end{proof}

Using \Cref{lem:good-event-gen}, the expected regret \rscomp \ can be written as follows:
\begin{align}
    \E[R(T)] &= \E[R(T) \mid G] \cdot \pr(G) + \E[R(T) \mid \overline{G}] \cdot \pr(\overline{G}) \notag \\
    &\leq \E[R(T) \mid G] + T \cdot \frac{1}{T} = \E[R(T) \mid G] + 1 \label{eq:regret-1-gen}
\end{align}

By Lemma~\ref{lem:good-event-gen} and \eqref{eq:regret-1-gen}, we only need to bound the expected regret under $G$; that is, we need to bound $\E[R(T) \mid G]$. 
Henceforth, we assume the SST and STI properties. 
Recall that for a bandit $b_j$, we define $E_j = \{b_i \in \B: \epsilon_{i, j} > 0 \}$; that is, the set of bandits superior to bandit $b_j$, and $rank(b_j) = |E_j|$.
Conditioned on event $G$, the following Lemmas~\ref{lem:b1},\ref{lem:b2} and \ref{lem:b3} hold for \rscomp.

\begin{lemma}\label{lem:b1}
The best bandit $b_1$ is never deleted.
\end{lemma}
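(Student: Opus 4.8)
The plan is to mirror the proof of Lemma~\ref{lem:a1}, but now accounting for the recursive structure of \rscomp. The statement asserts that $b_1$ is never deleted across \emph{all} levels of recursion, including the final \pcomp\ call when $m$ reaches $0$. Since the deletion rules at every recursive level of \rscomp\ are identical to those of \scomp\ (a bandit $b_j$ is removed only when some $b_i$ has $\widehat{P}_{i,j} > \frac12 + 3\gamma_r$), and the base case invokes \pcomp\ (whose deletion rule is $\widehat{P}_{i,j} > \frac12 + \gamma_r$, already handled in the proof of Theorem~\ref{thm:condorcet}), the argument is essentially a union over recursion levels.

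Concretely, I would argue as follows. Condition on the good event $G$ from Lemma~\ref{lem:good-event-gen}, so every estimate satisfies $|\widehat{P}_{i,j} - P_{i,j}| \le \gamma_r$ in every batch. Suppose, for contradiction, that $b_1$ is deleted at some batch $r$. There are two cases. If the deletion happens inside a recursive \rscomp\ call (with $m \ge 1$), it must be because some bandit $b_j$ in the current seed set $\cS$ satisfies $\widehat{P}_{j,1} > \frac12 + 3\gamma_r$; then by $G$ we get $P_{j,1} > \frac12 + 2\gamma_r > \frac12$, i.e.\ $b_j \succ b_1$, contradicting that $b_1$ is the best bandit. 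If the deletion happens in the base-case \pcomp\ call, it must be because some $b_j$ satisfies $\widehat{P}_{j,1} > \frac12 + \gamma_r$, whence by $G$ we get $P_{j,1} > \frac12$, again contradicting optimality of $b_1$. Either way we reach a contradiction, so $b_1$ is never deleted.

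One subtlety to address carefully is that $b_1$ must actually remain in the active set $\A$ that is passed down through the recursion, so that it is still present (and still never deleted) by the time the base case runs. This requires checking that whenever \rscomp\ switches and constructs $\A^* = \{b_j \in \A \mid \widehat{P}_{j,i} > \frac12 + \gamma_r \text{ for all } b_i \in \cS\}$, we have $b_1 \in \A^*$. This is exactly the content of the first half of the argument in Lemma~\ref{lem:a2} applied to \scomp, and the same reasoning goes through verbatim for each recursive level of \rscomp: for any $b_i \in \cS$, the switching condition gives $\widehat{P}_{j^*,i} > \frac12 + 3\gamma_r$ for the switching bandit $b_{j^*}$, so by $G$ we have $P_{j^*,i} > \frac12 + 2\gamma_r$, then by SST $P_{1,i} \ge P_{j^*,i} > \frac12 + 2\gamma_r$, and by $G$ again $\widehat{P}_{1,i} > \frac12 + \gamma_r$; since this holds for every $b_i \in \cS$, we conclude $b_1 \in \A^*$. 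Combining this propagation statement with the per-level non-deletion argument above completes the proof.

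I do not expect any real obstacle here; the lemma is a direct lift of Lemma~\ref{lem:a1} plus the $b_1 \in \A^*$ part of Lemma~\ref{lem:a2}, and the only thing that needs care is to phrase the induction over recursion levels cleanly (say, by downward induction on $m$, or equivalently by just taking a union bound over the at most $B+m$ batches and invoking $G$ once). If anything is mildly delicate, it is making sure the base-case \pcomp\ invocation is covered by the same conditioning event $G$ — but Lemma~\ref{lem:good-event-gen} already accounts for all $B+m$ batches, so this is immediate.
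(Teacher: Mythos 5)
Your proposal is correct and uses essentially the same argument as the paper: deletion of $b_1$ by some $b_j$ would force $\widehat{P}_{j,1} > \frac12 + 3\gamma_r$, hence $P_{j,1} > \frac12 + 2\gamma_r$ under the good event, contradicting optimality of $b_1$. The only difference is organizational — the propagation fact that $b_1 \in \A^*$ at each switch, which you fold into your proof, is stated and proved separately by the paper as Lemma~\ref{lem:b2}, so your write-up is simply a bit more self-contained than the paper's terse version.
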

\begin{proof}
In \rscomp, $b_i$ deletes $b_j$ in batch $r$ if $\widehat{P}_{i, j} > \frac{1}{2} + 3\gamma_r$. If $b_1$ is deleted due to some bandit $b_j$, then by applying \Cref{lem:confidence-gen}, we get $P_{j, 1} > \frac{1}{2} + 2\gamma_r$, a contradiction.
\end{proof}

\begin{lemma}\label{lem:b2}
When the algorithm invokes a recursive call on $\A^*$, we have $b_1 \in \A^*$ and $|\A^*| \leq rank(b_{i^*_{\cS}})$ where $b_{i^*_{\cS}}$ is the best bandit in $\cS$.
\end{lemma}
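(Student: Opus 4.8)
The plan is to mirror the proof of Lemma~\ref{lem:a2} for \scomp, since \rscomp\ uses exactly the same elimination rule ($\widehat{P}_{i,j} > \tfrac12 + 3\gamma_r$) and the same switching rule ($b_j$ beats all of $\cS$ with margin $3\gamma_r$, and $\A^* = \{b_j \in \A : \widehat{P}_{j,i} > \tfrac12+\gamma_r \ \forall b_i \in \cS\}$). So the argument is essentially identical; the only subtlety is that we are now inside a recursive call, so ``$\B$'' for the current invocation is really a shrunken active set, and $b_1$ need only be shown to survive within the \emph{current} call's ground set — but by induction on the recursion depth, $b_1$ is in the input set of every recursive call, so there is no circularity.

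First I would fix the round $r$ at which switching occurs, with witness $b_{j^*} \in \A$ satisfying $\widehat{P}_{j^*, i} > \tfrac12 + 3\gamma_r$ for all $b_i \in \cS$. Conditioned on $G$, Lemma~\ref{lem:confidence-gen} gives $P_{j^*, i} > \tfrac12 + 2\gamma_r$ for every $b_i \in \cS$. By SST (and the fact, from the inductive hypothesis, that $b_1$ is in the current ground set and $b_1 \succeq b_{j^*}$), we get $P_{1, i} \ge P_{j^*, i} > \tfrac12 + 2\gamma_r$ for all $b_i \in \cS$; then $G$ yields $\widehat{P}_{1, i} > \tfrac12 + \gamma_r$ for all $b_i \in \cS$, so $b_1 \in \A^*$. (I would also note $b_1$ is never eliminated in phase~I of the current call by Lemma~\ref{lem:b1}, so $b_1 \in \A$ when $\A^*$ is formed.)

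Next I would bound $|\A^*|$. Let $b_{i^*_{\cS}}$ be the best (smallest-rank) bandit in $\cS$. For any $b_j \in \A^*$ we have $\widehat{P}_{j, i^*_{\cS}} > \tfrac12 + \gamma_r$, so by $G$, $P_{j, i^*_{\cS}} > \tfrac12$, i.e.\ $b_j \succ b_{i^*_{\cS}}$. Hence $\A^* \subseteq \{b_j \in \B : b_j \succ b_{i^*_{\cS}}\} = E_{b_{i^*_{\cS}}}$, giving $|\A^*| \le rank(b_{i^*_{\cS}})$. The main (minor) obstacle is bookkeeping around the recursion: making precise that ``$b_1$ is never deleted'' and ``$b_1 \in \cS$'s ground set'' persist down the recursion tree, which I would handle by an outer induction on $m$ — the base case $m=0$ runs \pcomp, where Theorem~\ref{thm:condorcet}'s analysis already guarantees $b_1$ survives; and each recursive call is invoked on $\A^* \ni b_1$, closing the induction. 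Everything else is a verbatim re-run of the \scomp\ argument with $\tfrac12+3\gamma_r$ and $\tfrac12+\gamma_r$ thresholds.
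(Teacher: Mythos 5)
Your proposal is correct and follows essentially the same argument as the paper's proof of Lemma~\ref{lem:b2}: use the switching witness $b_{j^*}$ together with event $G$ and SST to show $\widehat{P}_{1,i} > \tfrac12+\gamma_r$ for all $b_i \in \cS$ (hence $b_1 \in \A^*$), and then observe that any $b_j \in \A^*$ satisfies $P_{j,i^*_{\cS}} > \tfrac12$, so $\A^* \subseteq \{b_j : b_j \succ b_{i^*_{\cS}}\}$. Your additional bookkeeping about the recursion (that $b_1$ persists in the ground set of every recursive call) is a reasonable clarification that the paper leaves implicit via Lemma~\ref{lem:b1}.
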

\begin{proof}
Let $\A$ denote the set of active bandits in the some execution of \rscomp. Note that a recursive call is invoked when, in some batch $r$, there exists $b_{j^*} \in \A$ such that $\widehat{P}_{j^*, i} > \frac{1}{2} + 3\gamma_r$ for all $b_i \in \cS$, Moreover,  $\A^* = \{b_j \in \A \mid \widehat{P}_{j, i} > \frac{1}{2} + \gamma_r \text{ for all } b_i \in \cS\}$. Consider any $b_i \in \cS$. Given $G$, $\widehat{P}_{j^*, i} > \frac{1}{2} + 3\gamma_r$ implies that $P_{j^*, i} >\frac{1}{2} + 2\gamma_r$. By SST, $P_{1, i} \geq P_{j^*, i}$, and again using event $G$,   $\widehat{P}_{1, i} > \frac{1}{2} + \gamma_r$. Thus, $b_1 \in \A^*$. We now bound $|\A^*|$.  Let $b_{i^*_{\cS}}$ be the best bandit in $\cS$, i.e., the bandit of smallest rank. Consider any bandit $b_j \in \A^*$. We have  $\widehat{P}_{j, i^*_{\cS}} > \frac{1}{2} + \gamma_r$,  which implies (by event $G$) that $P_{j, i^*_{\cS}} > \frac{1}{2}$. So, we must have $b_j \succ b_{i^*_{\cS}}$.  Consequently, $\A^*\sse \{b_j \in \B : b_j \succ b_{i^*_{\cS}} \}$,   which implies $|\A^*| \leq rank(b_{i^*_{\cS}})$.
\end{proof}

\begin{lemma}\label{lem:b3}
We have $\E[rank(b_{i^*_{\cS}})] \le  K^{(1-\param)}$.
\end{lemma}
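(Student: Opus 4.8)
The plan is to mimic the argument in \Cref{lem:exp-rank}, now with the sampling probability $p := 1/K^{1-\param}$ instead of $1/\sqrt K$. Recall that $b_{i^*_{\cS}}$ is the best (smallest-rank) bandit that lands in the seed set $\cS$, and by the linear ordering imposed by SST/STI we may assume $b_1 \succeq b_2 \succeq \cdots \succeq b_K$. First I would let $R$ be the random variable $rank(b_{i^*_{\cS}})$ and observe that $R = k$ exactly when none of $b_1, \ldots, b_k$ is sampled into $\cS$ and $b_{k+1}$ is sampled (so that $rank(b_{k+1}) = k$); more simply, $R$ is the number of the highest-ranked bandits $b_1, \ldots$ that fail to be sampled before the first success. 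Since each bandit is included independently with probability $p$, $R$ is a (truncated-at-$K$) geometric random variable counting failures before the first success, hence $\E[R] \le \frac{1-p}{p} \le \frac1p = K^{1-\param}$.

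The key steps, in order, are: (1) state that SST and STI give a linear order, so $rank(b_j) < j$ and the ``best bandit in $\cS$'' is well defined; (2) identify $R = rank(b_{i^*_{\cS}})$ with a geometric variable with success probability $p = K^{-(1-\param)}$, noting the truncation at $K$ only decreases the expectation; (3) apply the standard formula $\E[R] \le 1/p$ to conclude $\E[rank(b_{i^*_{\cS}})] \le K^{1-\param}$. This is essentially the $\E[R] \le \sqrt K$ half of \Cref{lem:exp-rank} with $\sqrt K$ replaced by $K^{1-\param}$; no second-moment bound is needed here (unlike the earlier lemma), since \rscomp\ only recurses on $\A^*$ rather than switching to an all-pairs phase whose cost scales with $|\A^*|^2$.

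I do not expect any real obstacle — the only subtlety worth a sentence is the truncation: strictly speaking $\cS$ could be empty, in which case one can either condition on $\cS \neq \emptyset$ (an event of overwhelming probability) or simply note that $R \le K$ deterministically and the untruncated geometric bound $1/p$ still dominates. Everything else is the verbatim geometric-random-variable computation already used in the proof of \Cref{lem:exp-rank}.
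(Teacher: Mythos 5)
Your proposal is correct and follows essentially the same argument as the paper: identify $rank(b_{i^*_{\cS}})$ with a (truncated) geometric random variable with success probability $p = K^{-(1-\param)}$ and bound its expectation by $1/p$. Your remarks on the failures-vs-trials convention and the truncation/empty-$\cS$ edge case are slightly more careful than the paper's own one-line computation, but they do not change the approach.
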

\begin{proof}
The $R$ be a random variable denoting $rank(b_{i^*_{\cS}})$. Note that $R=k$ if, and only if, the first $k-1$ bandits are not sampled into $\cS$, and the $k^{th}$ bandit is sampled into $\cS$. Thus, $R$ is a geometric random variable with success probability $p:=\frac{1}{K^{1-\param}}$. 
Thus, $\E[R] = \frac{1}{p}= K^{(1-\param)}$.
\end{proof}

Using Lemmas~\ref{lem:b1}, \ref{lem:b2} and \ref{lem:b3}, we complete the proof of Theorem~\ref{thm:sst-sti-general}.

\begin{proof}[Proof of Theorem~\ref{thm:sst-sti-general}]
The proof proceeds by induction on $m$. When $m=0$, \rscomp \ runs \pcomp \ and the result follows by \Cref{thm:condorcet} (proving the base case). 
Now, suppose that $m \geq 1$. We bound the expected regret of \rscomp\ conditioned on $G$. Let $R_1$ and $R_2$ denote the regret incurred before and after the first recursive call.

\paragraph{Bounding $R_1$.} Fix a bandit $b_j$. 
Let $r$ denote the last round such that $b_j\in \A$  {\em and} we do not recurse at the end of round $r$. Let $b_{i^*_{\cS}}$ be the best bandit in $\cS$. As $b_j$ is not eliminated by $b_{i^*_{\cS}}$, we have $\widehat{P}_{i^*_{\cS},j}\le \frac12 + 3\gamma_r$, which implies (by event $G$)  ${P}_{i^*_{\cS},j}\le \frac12 + 4\gamma_r$.  Moreover, as switching doesn't occur, we  have
 $\min_{i\in \cS} \widehat{P}_{1,i} \le \frac12 + 3\gamma_r$ (by  \Cref{lem:b1}, $b_1$ is never deleted from $\A$). 
 By SST, we conclude that ${P}_{1,i^*_{\cS}}\le \frac12 + 4\gamma_r$.
 It now follows that $\epsilon_{i^*_{\cS},j}\le 4\gamma_r$ and $\epsilon_{1,i^*_{\cS}}\le 4\gamma_r$. Consider now two cases: \begin{enumerate}
     \item  $b_1 \succeq b_{i^*_{\cS}} \succeq b_j$. Then, by STI, $\epsilon_{1, j} \leq 8\gamma_r$, and
     \item  $b_1 \succeq b_j \succeq b_{i^*_{\cS}}$. Then, by SST $\epsilon_{1, j} \leq \epsilon_{i^*_{\cS}, j} \leq 4\gamma_r$. 
 \end{enumerate}
 In either case, we have $\epsilon_j=\epsilon_{1, j} \leq 8\gamma_r$, which implies $c_r\le \frac{\log(1/\delta)}{2 \gamma_r^2}\le \frac{32 \log(1/\delta)}{\epsilon_j^2}$. 

Now, let 
$T_j$ be a random variable denoting  the number of comparisons of $b_j$ with other bandits before the recursive call. By definition of round $r$, bandit $b_j$ will participate in at most one round after $r$ (before recursing). So, we have 
$$T_j \le\left\{ \begin{array}{ll}
    |\cS| \cdot \sum_{\tau=1}^{r+1}c_{\tau}  & \mbox{ if } b_j\not\in \cS\\
    K\cdot  \sum_{\tau=1}^{r+1}c_{\tau}  & \mbox{ if } b_j\in \cS\\
    \end{array}
\right.$$

Taking expectation over $\cS$, we get 
\begin{align}
\E\left[T_j\right] &\leq \E\left[K \sum_{\tau=1}^{r+1} c_{\tau} \,|\, b_j\in \cS\right] \cdot \pr(b_j \in \cS) +  \E\left[|\cS| \sum_{\tau=1}^{r+1}c_{\tau}\,|\, b_j \not\in \cS\right] \cdot \pr(b_j \notin \cS) \notag  \\
    &\leq \left(K \sum_{\tau=1}^{r+1} c_{\tau}\right) \cdot \frac{1}{K^{1-\param}} + \E[|\cS|\,|\, b_j \not\in \cS]\cdot \sum_{\tau=1}^{r+1}c_{\tau} \,\, \le \,\, 2K^{\param} \sum_{\tau=1}^{r+1}c_{\tau}
    \notag , \end{align}
where the third inequality uses $\E[|\cS|\,|\, b_j \not\in \cS] \le K^{\param}$. 
Moreover, 
$$\sum_{\tau=1}^{r+1}c_{\tau}  \le 2T^{1/B}\cdot c_r = O\left(\frac{T^{1/B} \log(1/\delta)}{\epsilon_j^2}\right).$$

Thus, 
\begin{equation} \label{eq:gen-r1}
    \E[R_1] = \sum_{j} \E\left[T_j\right] \cdot \epsilon_{ j} =\sum_{j: \epsilon_j > 0} O\left(\frac{K^{\param}T^{1/B}\log(6K^2TB)}{\epsilon_{j}}\right)
\end{equation}

\paragraph{Bounding $R_2$.}
We now bound the regret after a recursive call is invoked. 
From Lemmas~\ref{lem:b1} and \ref{lem:b2}, we know that $b_1$ is never deleted, $b_1 \in \A^*$, and $|\A^*| \le rank(b_{i^*_{\cS}})$. For any $\A^*$, 
on applying the inductive hypothesis with $m-1$ recursive calls, we get
\begin{align*} 
\E[R_2 \mid \A^*] &\leq \sum_{j:\epsilon_j > 0} O\left(\left((m-1) \cdot |\A^*|^{\param}+ |\A^*|^{(1-\param)^{m-1}}\right)\cdot \frac{T^{1/B}\log(6|\A^*|^2TB)}{\epsilon_j}\right) \\
&\leq \sum_{j:\epsilon_j > 0} O\left(\left((m-1) \cdot K^{\param}+ |\A^*|^{(1-\param)^{m-1}}\right)\cdot \frac{T^{1/B}\log(6K^2TB)}{\epsilon_j}\right) 
\end{align*}
since $|\A^*| \leq K$. Taking an expectation over $\A^*$, we obtain
\begin{equation*} 
\E[R_2] \leq \sum_{j:\epsilon_j > 0} O\left(\left((m-1) \cdot K^{\param}+ \E\left[|\A^*|^{(1-\param)^{m-1}}\right]\right)\cdot \frac{T^{1/B}\log(6K^2TB)}{\epsilon_j}\right) 
\end{equation*}
Finally, observe that by Jensen's inequality, we have $\E\left[|\A^*|^{(1-\param)^{m-1}}\right] \leq \E[|\A^*|]^{(1-\param)^{m-1}}$, and by Lemma~\ref{lem:b3} $
\E[|\A^*|] \leq K^{1-\param}$. 
Combining these observations, we get $\E\left[|\A^*|^{(1-\param)^{m-1}}\right] \leq K^{(1-\param)^m}$. Thus, we obtain
\begin{equation} \label{eq:gen-r2}
\E[R_2] \leq \sum_{j:\epsilon_j > 0} O\left(\left((m-1) \cdot K^{\param}+ K^{(1-\param)^{m}}\right)\cdot \frac{T^{1/B}\log(6K^2TB)}{\epsilon_j}\right) 
\end{equation}
Finally, combining \eqref{eq:gen-r1} and \eqref{eq:gen-r2}
completes the induction.
\end{proof}

\section{Lower Bound}\label{sec:lb}

\newcommand{\TV}{D_{\text{TV}}}

In this section, we present a lower bound for the batched dueling bandits problem under the SST and STI setting. Note that this lower bound also applies to the more general Condorcet winner setting. The main result of this section is the following:

\begin{theorem}\label{thm:lb}
Given an integer $B > 1$, and 
any algorithm that uses at most $B$ batches,
there exists an instance of the $K$-armed batched dueling bandit problem
that satisfies the SST and STI conditions
such that the expected regret $$ \E[R_T] = \Omega\left( \frac{K T^{1/B}}{B^2\epsilon_{\min}} \right) \,,$$
where $\epsilon_{\min}$ is defined with respect to the particular  instance. 
\end{theorem}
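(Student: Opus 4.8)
The plan is to adapt the batched lower bound of Gao--Han--Ren--Zhou \cite{Gao+19} to the dueling setting; the one genuinely new ingredient is a family of hard instances, parametrized by the gap, that respects SST and STI. Fix an algorithm $\A$ using at most $B$ batches. For $\Delta\in(0,\tfrac12)$ and a ``special arm'' $a\in\{1,\dots,K\}$, let $\mathcal I^{(a)}_\Delta$ be the instance with $P_{a,j}=\tfrac12+\Delta$ for every $j\neq a$ and $P_{i,j}=\tfrac12$ for all other ordered pairs. Taking the total preorder with $b_a$ on top and the remaining arms tied, one checks in two lines that $\mathcal I^{(a)}_\Delta$ satisfies SST and STI, that $b_a$ is the unique Condorcet winner, and that $\epsilon_{\min}=\Delta$. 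The structural fact we exploit: in $\mathcal I^{(a)}_\Delta$ \emph{every} comparison $(x,y)$ other than the self-comparison $(a,a)$ has instantaneous regret at least $\Delta/2$, while $(a,a)$ is a fair coin carrying no information. Hence until $\A$ can localize $a$ it pays $\Omega(\Delta)$ per comparison, and localizing $a$ among $K$ equally likely candidates requires $\Omega(K/\Delta^2)$ comparisons (each comparison touches at most two arms and adds only $O(\Delta^2)$ to the relevant KL divergence, so a standard Fano/Pinsker argument applies).

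Next, set up the scales. For $\ell=1,\dots,B$ put $h_\ell\asymp K\,T^{\ell/B}$ and $\Delta_\ell:=\sqrt{K/h_\ell}\asymp\sqrt{K}\,T^{-\ell/(2B)}$, so that $\mathcal I^{(a)}_{\Delta_\ell}$ has ``identification horizon'' $\Theta(h_\ell)$ and $\epsilon_{\min}=\Delta_\ell$; the $h_\ell$ are geometric with ratio $T^{1/B}$ and cover $[\,\mathrm{poly}(K),\,T\,]$ (valid as long as $K\lesssim T^{1/B}$, i.e. the gaps $\Delta_\ell$ stay below $\tfrac12$; otherwise one restricts to the admissible scales, losing only a constant in $B$). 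Run $\A$ on the null instance $\mathcal I_0$ (all pairs tied) and let $t_0<t_1<\dots<t_B=T$ be the random, adaptive cumulative batch-endpoint counts. Since these increase from $O(1)$ to $T$ over $B$ batches, on every run some batch makes a multiplicative jump of at least $T^{1/B}$; the delicate combinatorial step --- where I follow the bookkeeping of \cite{Gao+19} --- is to match the fixed scales $\{h_\ell\}$ against this random grid so as to identify, with probability $\Omega(1/B)$ over the $\mathcal I_0$-run, an index $\ell^\star$ for which the horizon $h_{\ell^\star}$ lands just below the left endpoint of a jump-batch: $t_{\ell^\star-1}\in[\tfrac12 h_{\ell^\star},\,h_{\ell^\star})$ and $t_{\ell^\star}\ge T^{1/B}\,t_{\ell^\star-1}$. (A degenerate grid that dumps most comparisons into one huge batch is handled by the same matching argument, since then $\A$ is clueless throughout that batch.)

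Now instantiate the hardness with $\Delta:=\Delta_{\ell^\star}$ and $a$ uniform. On the good event, $\A$ has made fewer than $h_{\ell^\star}$ comparisons through batch $\ell^\star-1$, so the information about $a$ accumulated so far is $O(t_{\ell^\star-1}\Delta^2/K)=O(1)$. This gives two things: (i) the law of $(t_0,\dots,t_{\ell^\star})$ under $\mathcal I^{(a)}_\Delta$ is within a small constant in total variation of its law under $\mathcal I_0$ on this truncated event, so the good event retains probability $\Omega(1/B)$ under $\mathcal I^{(a)}_\Delta$; and (ii) the posterior on $a$ given the batch-$(\le\ell^\star-1)$ history is within a small constant of uniform, so any self-comparison $(x,x)$ scheduled in batch $\ell^\star$ has $\Pr[x=a\mid\mathrm{hist}]\le 2/K$. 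Hence the expected number of comparisons in batch $\ell^\star$ that are \emph{not} $(a,a)$ is at least $(1-2/K)(t_{\ell^\star}-t_{\ell^\star-1})\ge(1-2/K)(T^{1/B}-1)\tfrac12 h_{\ell^\star}$, and each costs $\ge\Delta/2$. Multiplying by the $\Omega(1/B)$ probability and using $\Delta\,h_{\ell^\star}=K/\Delta=K/\epsilon_{\min}$,
\[
\E[R(T)]\;\ge\;\Omega\!\left(\tfrac1B\right)\cdot\tfrac{\Delta}{2}\cdot\Omega\!\left(T^{1/B}h_{\ell^\star}\right)\;=\;\Omega\!\left(\frac{K\,T^{1/B}}{B\,\epsilon_{\min}}\right),
\]
and the scale-matching step of the previous paragraph costs one further factor of $B$, giving the stated $\Omega(KT^{1/B}/(B^2\epsilon_{\min}))$; finally averaging over $a$ fixes a concrete worst-case SST/STI instance. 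The main obstacle is precisely this coupling: because the batch sizes are adaptive one cannot fix the grid in advance, and must argue that up to the identification horizon the algorithm behaves as on $\mathcal I_0$, so that scale-matching performed against the $\mathcal I_0$-grid transfers to the hard instance --- and one must do this uniformly over the very different-looking grids a good algorithm might use.
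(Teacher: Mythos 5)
Your overall strategy is the paper's: the same one-strong-arm instances (the paper's $E_{j,k}$, with the same two-line SST/STI check), the same geometric grid of horizons with a Gao-et-al.-style argument that some scale is crossed within a single batch with probability $\Omega(1/B)$, and a change of measure showing the algorithm cannot concentrate on the zero-regret self-comparison during that batch. Your Bayesian ``near-uniform posterior on $a$'' step is a legitimate substitute for the paper's use of the star-graph Fano inequality applied to the perturbed family $Q_{j,k,l}$.

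There is, however, a quantitative gap that breaks the argument as written. You set $\Delta_\ell=\sqrt{K/h_\ell}$, so the average (over $a$) KL divergence between the null instance and $\mathcal{I}^{(a)}_{\Delta_\ell}$ accumulated over $h_\ell$ comparisons is $\Theta(h_\ell\Delta_\ell^2/K)=\Theta(1)$, hence the total variation distance is only bounded by a constant. A constant TV bound does not transfer an event of probability $\Omega(1/B)$ from the null run to the hard instance: your good event could have probability zero under $\mathcal{I}^{(a)}_{\Delta_\ell}$, and likewise the ``posterior within a small constant of uniform'' claim is too weak to conclude $\Pr[x=a\mid\mathrm{hist}]\le 2/K$. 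The paper avoids this by building an explicit $1/B$ into the gap, $\Delta_j=\Theta\bigl(\tfrac{\sqrt K}{B}T^{-(j-1)/2B}\bigr)$, so that the relevant TV is $O(1/B)$ and can be subtracted from $p_j\ge 1/(2B)$ while leaving $\Omega(1/B)$; your remark that the scale-matching ``costs one further factor of $B$'' gestures at this but locates the loss in the wrong place---the factor must be built into $\Delta$ \emph{before} the change of measure, not charged afterwards. Separately, your good event demands $t_{\ell^\star-1}\in[\tfrac12 h_{\ell^\star},h_{\ell^\star})$, and nothing forces any batch endpoint to land in such a window; the paper's events $A_j=\{t_{j'}<T_{j'}\ \forall j'<j,\ t_j\ge T_j\}$ need only the upper constraint, partition every run (so $\sum_j\Pr(F(A_j))\ge 1$), and still yield at least $T_j-T_{j-1}$ uninformed comparisons. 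Both issues are repairable, and the repaired argument is essentially the paper's.
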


In order to prove this theorem,
we will construct a family of instances 
such that any algorithm for batched dueling bandits cannot simultaneously beat the above regret lower bound over all instances in the family. 
We exploit the fact that the algorithm is unaware of the particular instance chosen from the family at run-time, and hence, is unaware of the gap $\epsilon_{\min}$ under that instance.

\textbox{Family of Instances $\I$:}{
    \begin{itemize}
        \item Let $F$ be an instance where $P_{i,j} = \half$ for all $i,j \in \B$.
        \item For $j \in [B]$, let $\Delta_j = \frac{\sqrt{K}}{24 B} \cdot T^{(j-1)/2B}$. For $j \in [B]$ and $k \in [K]$, let $E_{j, k}$ be an instance where bandit $b_k$ is the Condorcet winner 
        such that $P_{k,l} = \half + \Delta_j$
        for all $l \in [K] \setminus\{k\}$
        and $P_{l,m} = \half$ for all $l,m \in [K]\setminus\{k\}$.
        \item The family of instances  $\I := \{E_{j,k}\}_{j \in [B], k \in [K]} \cup \{F\}$.
    \end{itemize}
}

\subsection{Proof of \Cref{thm:lb}}
Let us fix an algorithm $\A$ for this problem.
Let $T_j = T^{j/B}$ for $j \in [B]$. 
Let $t_j$  be the total (random) number of comparisons until the end of batch $j$ during the  execution of $\A$. 
We will overload notation and denote by $I^t$ 
the distribution of observations seen by the algorithm when the underlying instance is $I$. 
We will sometimes use $P_{i,j}(I)$ for the probability of $i$ beating $j$ under an instance $I$ to emphasize the dependence on $I$.
We will also write $\epsilon_{\min}(I)$ to emphasize 
the dependence on the underlying instance $I$.

We define event $A_j$ as follows: $$ A_j = \{ t_{j'} < T_{j'}, \forall j' < j \text{ and } t_j \geq T_j \}, $$ and denote by $E_{j, k}(A_j)$ the event that $A_j$ occurs given that the instance selected is $E_{j, k}$. Similarly, $F(A_j)$ denotes the event that $A_j$ occurs when the instance selected is $F$. Now, define $$ p_j = \frac{1}{K} \sum_{l=1}^K \pr(E_{j, l}(A_j)). $$ Observe that $p_j$ is the average probability of event $A_j$ conditional on the instance having gap $\Delta_j$.

\begin{lemma}\label{lem:lb-1}
$\sum_{j=1}^B p_j \geq \frac{1}{2}$.
\end{lemma}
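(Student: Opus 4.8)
The plan is to use a standard ``covering'' argument: the batch boundaries $T_1 < T_2 < \cdots < T_B = T$ are fixed, and on any run of the algorithm there must be \emph{some} batch $j$ during which the cumulative count $t_j$ first crosses the threshold $T_j$ (since $t_B \ge T = T_B$ always, as the algorithm must use exactly $T$ comparisons, or at least we may assume so). In other words, the events $A_1, \dots, A_B$ partition the probability space on \emph{every} instance. I would first record this: for each fixed instance $I$, $\sum_{j=1}^B \pr_I(A_j) = 1$, because exactly one $j$ satisfies ``$t_{j'} < T_{j'}$ for all $j' < j$ and $t_j \ge T_j$'' — this $j$ is the first index at which the running total reaches its threshold, and such an index exists and is unique. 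In particular this holds for the instance $F$, giving $\sum_{j=1}^B \pr(F(A_j)) = 1$.

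Next I would relate $p_j$ (the average over $k$ of $\pr(E_{j,k}(A_j))$) to $\pr(F(A_j))$ via a pinch of information theory. The event $A_j$ is determined by the observations in the first $j$ batches, hence by at most $t_j < T_j$ comparisons on the event ``$t_{j'} < T_{j'}\ \forall j' < j$'' — and more usefully, the decision of how many comparisons to make in batches $1, \dots, j$ and which pairs to compare is a function of observations, so the relevant comparisons before batch $j$ completes number fewer than $T_j$. Under instance $E_{j,k}$ versus $F$, every comparison not involving $b_k$ has identical distribution, and a comparison involving $b_k$ differs by a coin bias of $\Delta_j$. Averaging over the hidden winner $k \in [K]$ and using Pinsker / the standard divergence-decomposition bound for bandits (as in Gao et al.), the average total-variation distance between the observation distributions restricted to the first $j$ batches is at most something like $\sqrt{\tfrac{1}{K} \cdot (\text{expected \#comparisons involving }b_k) \cdot \Delta_j^2}$, and the expected number of comparisons involving $b_k$, averaged over $k$, is at most $\tfrac{2}{K} T_j$ (each comparison touches at most $2$ arms, and before the threshold is crossed there are $< T_j$ of them). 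Plugging in $\Delta_j = \tfrac{\sqrt K}{24B} T^{(j-1)/2B}$ and $T_j = T^{j/B}$, the quantity $T_j \Delta_j^2 / K \asymp T^{j/B} \cdot T^{(j-1)/B} / (B^2) = T^{(2j-1)/B}/B^2$, whose square root is $T^{(2j-1)/2B}/B$; but we want this multiplied by $1/\sqrt K$... let me instead track: average TV $\lesssim \sqrt{T_j \Delta_j^2 / K} \asymp \tfrac{1}{B} T^{j/B} \cdot T^{-1/(2B)} \cdot$ wait — the point is the constant $1/24$ is chosen precisely so that this average TV is at most $\tfrac12$ (or $\tfrac{1}{2B}$), giving $p_j \ge \pr(F(A_j)) - \tfrac12 \cdot (\text{something})$.

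Then summing over $j \in [B]$: $\sum_j p_j \ge \sum_j \pr(F(A_j)) - \sum_j (\text{TV}_j)$. The first sum is exactly $1$, and the calibration of $\Delta_j$ ensures $\sum_j \text{TV}_j \le \tfrac12$, yielding $\sum_j p_j \ge \tfrac12$. The honest bookkeeping is: the TV bound for batch $j$ must be shown to be at most $\tfrac{1}{2B}$ (or the $\Delta_j$'s chosen so the geometric-type sum telescopes to $\tfrac12$), which is where the explicit constant $\tfrac{1}{24B}$ and the $T^{(j-1)/2B}$ scaling earn their keep.

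\textbf{Main obstacle.} The delicate point is bounding, under instance $E_{j,k}$ (or $F$), the expected number of comparisons that the algorithm makes \emph{up to and including the batch in which the threshold $T_j$ is first crossed}, and arguing that on the relevant event this is controlled by $T_j$ rather than by the full horizon $T$. One has to be careful that $t_j$ is random and that the algorithm chooses batch sizes adaptively; the clean way is to note that on the event $\{t_{j-1} < T_{j-1}\}$ the observations feeding into batch $j$'s design number fewer than $T_{j-1}$, and that within batch $j$ the distribution of observations is a product measure whose divergence from the $F$-version is controlled by the (possibly large) size of batch $j$ — so one actually needs the divergence bound to ``stop counting'' once $T_j$ comparisons have occurred, i.e. to work with $A_j$ being measurable with respect to a \emph{stopped} filtration. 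Handling this stopping-time subtlety correctly — essentially a truncation argument showing $\mathbb{E}[\min(t_j, T_j) \cdot \mathbbm{1}[b_k \text{ involved}]]$ averaged over $k$ is $O(T_j/K)$ — is the crux; the rest is the calibrated algebra with $\Delta_j$ and Pinsker's inequality.
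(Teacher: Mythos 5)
Your skeleton is the same as the paper's: observe that $A_1,\dots,A_B$ partition the sample space under $F$ (so $\sum_j \pr(F(A_j))=1$ --- a point you make more explicit than the paper, which only uses $\sum_j\pr(F(A_j))\ge 1$ silently in its last line), then control $|\pr(F(A_j))-p_j|$ by Pinsker plus the divergence decomposition, average over the hidden winner $k$ with Jensen, and let the calibrated $\Delta_j$ make each deviation at most $\tfrac{1}{2B}$.

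The genuine gap is in the one step you yourself flag as ``the crux,'' and your proposed resolution is not the right one. You bound the divergence over the first $T_j$ observations (``the relevant comparisons before batch $j$ completes number fewer than $T_j$'') and then suggest a stopped-filtration/truncation argument to control the possibly large batch $j$. With $T_j$ the arithmetic does not close: $\sqrt{\Delta_j^2 T_j/K}\asymp T^{1/(2B)}/B$, which is not $O(1/B)$, so the claim that ``the constant $1/24$ is chosen precisely so that this average TV is at most $\tfrac{1}{2B}$'' is false as you have set it up. The correct observation, which the paper uses and which removes the difficulty entirely, is that $A_j$ is measurable with respect to the first $T_{j-1}$ observations, not $T_j$: on the event $\{t_{j'}<T_{j'}\ \forall j'<j\}$ the algorithm has seen only $t_{j-1}<T_{j-1}$ outcomes when it commits to the size of batch $j$, and whether $t_j\ge T_j$ is a deterministic function of that committed size --- the \emph{outcomes} of the comparisons inside batch $j$ never enter the definition of $A_j$. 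Hence the KL divergence need only be taken between $F^{T_{j-1}}$ and $E_{j,l}^{T_{j-1}}$, the per-arm comparison counts satisfy $\sum_l \tau_l\le 2T_{j-1}$ deterministically (no stopping-time or truncation argument is needed), and the bound becomes $\sqrt{4\Delta_j^2 T_{j-1}/K}\le \tfrac{1}{2B}$ exactly as required. Replacing $T_j$ by $T_{j-1}$ and deleting the stopped-filtration detour turns your sketch into the paper's proof.
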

\begin{proof}
Note that the event $A_j$ is determined by observations until $T_{j-1}$. This is because $t_{j-1} < T_{j-1}$, and once the observations until $t_{j-1}$ are seen: the next batch $j$ determines whether or not $A_j$ occurs. 
Hence, in order to bound the probability of $A_j$ under two different instances $F$ and $E_{j,l}$ we use 
the Pinsker's inequality as $$ |\pr(F(A_j)) - \pr(E_{j, l}(A_j))| \leq \sqrt{\frac{1}{2} D_{\text{KL}}(F^{T_{j-1}} || E^{T_{j-1}}_{j, l})} $$ for $l \in [K]$. Let $\tau_l$ be the random variable for the number of times arm $l$ is played until $T_{j-1}$.
We first bound 
$D_{\text{KL}}(F^{T_{j-1}} || E^{T_{j-1}}_{j, l})$
as 
\begin{align}
D_{\text{KL}}(F^{T_{j-1}} || E^{T_{j-1}}_{j, l})
    & \overset{(a)}{=}
    \sum_{t=1}^{T_{j-1}} D_{\text{KL}}\big(P_{t_1, t_2}(F) ~||~ P_{t_1, t_2}(E_{j,l})\big) \nonumber\\
    & \overset{(b)}{\leq}
    \sum_{t = 1}^{T_{j-1}} \Pr_F(\text{arm } l \text{ is played in trial }t) \cdot D_{\text{KL}}\left(\half ~||~ \half + \Delta_j\right) \nonumber \\
    & \overset{(c)}{\leq}
    \E_F[\tau_l] \cdot 4 \Delta_j^2 \label{eq:kl_bound}
        \,,
\end{align}
where $(a)$ follows from the fact that, given $F$, the outcome of comparisons are independent across trials,  $(b)$ follows from the fact that the KL-divergence between $P_{t_1, t_2}(F)$ and $P_{t_1, t_2}(E_{j,k})$ is non-zero only when arm $l$ is played in trial $t$,
and $(c)$ follows from the fact that 
$D_\text{KL}(p || q) \leq \frac{\paren{p-q}^2}{q \cdot (1-q)} $.
Using the above bounds, we have that
\begin{align*}
    \frac{1}{K} \sum_{l = 1}^K |\pr(F(A_j)) - \pr(E_{j, l}(A_j))| &\leq  \frac{1}{K} \sum_{l = 1}^K \sqrt{\frac{1}{2} D_{\text{KL}}(F^{T_{j-1}} || E^{T_{j-1}}_{j, l})} \\
    &\leq\frac{1}{K}\sum_{l=1}^K \sqrt{\frac{1}{2} \cdot 4\Delta_j^2 \E_F[\tau_{l}] } = \frac{1}{K}\sum_{l=1}^K \sqrt{2\Delta_j^2 \E_F[\tau_{l}] }\\
    & \overset{(a)}{\leq} \sqrt{\frac{2\Delta_j^2\E_F[\sum_{l=1}^K\tau_{l}]}{K}} \\
    &\overset{(b)}{\leq} \sqrt{\frac{2\Delta_j^2 \cdot 2 T_{j-1} }{K} } = \frac{1}{2B}
        \,,
\end{align*}
where $(a)$ follows from the concavity of $\sqrt{x}$ and Jensen's inequality, and $(b)$ follows from the fact that  $\sum_{l=1}^K\tau_{l} \leq T_{j-1}$.
We thus have
\begin{align*}
    |\pr(F(A_j)) - p_j| &= |\pr(F(A_j)) - \frac{1}{K} \sum_{l=1}^K \pr(E_{j, l}(A_j))| \\
        &\leq \frac{1}{K} \sum_{l=1}^K |\pr(F(A_j)) - \pr(E_{j, l}(A_j))| \leq \frac{1}{2B}
        \,.
\end{align*}

Finally, we can write 
\begin{align*}
    \sum_{j=1}^B p_j \geq \sum_{j=1}^B (\pr(F(A_j)) - \frac{1}{2B}) 
        \geq \sum_{j=1}^B\pr(F(A_j)) - \frac{1}{2} 
        \geq \frac{1}{2}
            \,.
\end{align*}
\end{proof}

As a consequence of this lemma, we can conclude that there exists some $j \in [B]$ such that $p_j \geq \frac{1}{2B}$. We focus on the event where gap is $\Delta_j$,
and prove that when $p_j \geq \frac{1}{2B}$, $\A$ must suffer a high regret leading to a contradiction. The next lemma formalizes this.

\begin{lemma}
If, for some $j$, $p_j \geq \frac{1}{2B}$, then $$\sup_{I : \epsilon_{\min}(I) = \Delta_j}\E[R_T(I)] \geq \Omega\left( \frac{K T^{1/B}}{B^2 \Delta_j} \right) $$
\end{lemma}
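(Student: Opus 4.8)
The plan is to restrict attention to the sub-family $\{E_{j,k}\}_{k\in[K]}$: every such instance has $\epsilon_{\min}=\Delta_j$, so $\sup_{I:\epsilon_{\min}(I)=\Delta_j}\E[R_T(I)]\ge \tfrac1K\sum_{k=1}^K\E[R_T(E_{j,k})]$, and it suffices to lower bound this average. The structural fact driving the argument is that on $A_j$ the algorithm is forced to \emph{over-commit} in batch $j$: since $t_{j-1}<T_{j-1}\le T_j\le t_j$, every comparison occupying one of the $T_j-T_{j-1}$ positions in the window $(T_{j-1},T_j]$ lies inside batch $j$, and the entire list of comparisons constituting batch $j$ is a deterministic function of the (fewer than $T_{j-1}$) observations seen before that batch began. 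Thus on $A_j$ the algorithm must schedule at least $T_j-T_{j-1}$ comparisons having seen at most $T_{j-1}$ samples.

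First I would convert this over-commitment into regret. Under $E_{j,k}$ a comparison between two arms other than $b_k$ costs regret exactly $\Delta_j$ (a comparison of $b_k$ with another arm costs $\Delta_j/2$, and $b_k$ against itself costs $0$). Writing $M_{j,k}$ for the number of comparisons in the window $(T_{j-1},T_j]$ that involve $b_k$, this gives $\E[R_T(E_{j,k})]\ge \Delta_j\,\E_{E_{j,k}}\!\big[(T_j-T_{j-1}-M_{j,k})\,\mathbbm{1}[A_j]\big]$. Averaging over $k$ and using $p_j=\tfrac1K\sum_k\Pr_{E_{j,k}}(A_j)$ yields $\tfrac1K\sum_k\E[R_T(E_{j,k})]\ge \Delta_j\big((T_j-T_{j-1})\,p_j-\tfrac1K\sum_k\E_{E_{j,k}}[M_{j,k}\mathbbm{1}[A_j]]\big)$.

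It then remains to show that $\tfrac1K\sum_k\E_{E_{j,k}}[M_{j,k}\mathbbm{1}[A_j]]$ is only a small fraction of $T_j-T_{j-1}$ — morally, with at most $T_{j-1}$ observations the algorithm cannot tell which arm is the Condorcet winner, hence cannot aim batch $j$ at $b_k$. To make this precise I would change measure to $F$: both $A_j$ and $M_{j,k}$ are functions of the first $T_{j-1}$ observations and $M_{j,k}\mathbbm{1}[A_j]\le T_j-T_{j-1}$, so $\E_{E_{j,k}}[M_{j,k}\mathbbm{1}[A_j]]\le \E_F[M_{j,k}\mathbbm{1}[A_j]]+(T_j-T_{j-1})\,\TV\!\big(F^{T_{j-1}},E^{T_{j-1}}_{j,k}\big)$. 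Under $F$ the batch-$j$ design is oblivious to the label of ``the winner'' and $\sum_k M_{j,k}\le 2(T_j-T_{j-1})$ pointwise, so $\tfrac1K\sum_k\E_F[M_{j,k}\mathbbm{1}[A_j]]\le 2(T_j-T_{j-1})/K$; and the averaged total-variation term is controlled by the very same Pinsker/KL estimate already carried out in \Cref{lem:lb-1} (this is exactly the property for which $\Delta_j=\Theta\!\big(\sqrt K\,T^{-(j-1)/2B}/B\big)$ is calibrated). Combining these with $p_j\ge\tfrac1{2B}$ and $T_j-T_{j-1}\ge\tfrac12 T_j$ (using $B\le\log_2 T$) leaves a bound of order $\Delta_j T_j/\mathrm{poly}(B)$; substituting $\Delta_j=\Theta(\sqrt K\,T^{-(j-1)/2B}/B)$ and $T_j=T^{j/B}$, so that $\Delta_j^2 T_j=\Theta(K T^{1/B}/B^2)$ and hence $\Delta_j T_j=\Theta\!\big(K T^{1/B}/(B^2\Delta_j)\big)$, gives the claimed $\Omega\!\big(K T^{1/B}/(B^2\Delta_j)\big)$ once the $B$-factors are tracked carefully.

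The step I expect to be the bottleneck is this last one. The change-of-measure correction $(T_j-T_{j-1})\cdot\overline{\TV}$ is of the \emph{same} order as the main term $(T_j-T_{j-1})p_j$, so a crude union bound cannot simply be pushed through: one has to exploit that $\Delta_j$ sits strictly below the ``resolution at scale $T_{j-1}$'' — equivalently, that identifying the winner among $K$ arms with gap $\Delta_j$ requires $\Omega(K/\Delta_j^2)=\Omega(B^2 T_{j-1})\gg T_{j-1}$ comparisons — and, if a constant-fraction ``wasted'' lower bound is needed rather than a $1-O(1/B)$ one, to replace the pairwise comparison against $F$ by a Fano/Assouad-type argument comparing the $E_{j,k}$'s against one another. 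Controlling this correction tightly, while retaining the right powers of $B$, is the delicate part; everything else is bookkeeping with the parameters of the family $\I$.
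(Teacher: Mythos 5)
Your route is genuinely different from the paper's and is essentially viable, but the proposal stops exactly at the step that needs to be verified, and your diagnosis of that step is more pessimistic than the truth. For comparison: the paper never changes measure to $F$ in the regret-accounting stage. For each $k$ it builds auxiliary instances $Q_{j,k,l}$ (arm $l$ promoted to a Condorcet winner with gap $2\Delta_j$; $Q_{j,k,k}:=E_{j,k}$), notes that under $Q_{j,k,l}$ \emph{every} pair other than $(b_l,b_l)$ costs at least $\Delta_j$, and invokes the multiple-testing inequality of \cite{Gao+19} on the star graph centered at $k$ to obtain, after restricting to $A_j$ and to the $\sigma$-algebra at time $T_{j-1}$, a bound of the form $\Delta_j T_j\bigl(\pr(E_{j,k}(A_j))-\tfrac{3}{2}\cdot\tfrac{1}{6B}\bigr)$; averaging over $k$ and using $p_j\ge\tfrac{1}{2B}$ gives $\Delta_j T_j/(4B)$. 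Your version trades the $Q$-instances and the testing lemma for a direct count of wasted comparisons in the window $(T_{j-1},T_j]$ plus a change of measure to $F$, which is more elementary and self-contained.

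On the two quantitative concerns. First, the total-variation correction is \emph{not} of the same order as $p_j$ once constants are tracked: the Pinsker/Jensen computation in \Cref{lem:lb-1} evaluates to $\frac{1}{K}\sum_k D_{\mathrm{TV}}\bigl(F^{T_{j-1}},E_{j,k}^{T_{j-1}}\bigr)\le\sqrt{4\Delta_j^2T_{j-1}/K}=\tfrac{1}{12B}$ with $\Delta_j=\tfrac{\sqrt K}{24B}T^{-(j-1)/2B}$ (the paper loosely reports this as $\tfrac{1}{2B}$), which sits strictly below $p_j\ge\tfrac{1}{2B}$. So the ``crude'' union-of-corrections argument does push through, no Fano/Assouad refinement is needed, and the paper's own proof lives with exactly the same same-order-but-smaller-constant slack ($p_j-\tfrac{1}{4B}$). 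Second, the gap you did not flag is the term $\tfrac{1}{K}\sum_k\E_F[M_{j,k}\mathbbm{1}[A_j]]\le 2(T_j-T_{j-1})/K$: when $K=O(B)$ the $2/K$ swamps $p_j-\tfrac{1}{12B}$ and your bound degenerates. This is an artifact of giving a comparison that touches $b_k$ zero credit, when it in fact costs $\Delta_j/2$ unless both arms equal $b_k$; averaging the exact per-comparison cost over a uniform choice of $k$ shows every comparison in the window costs at least $\Delta_j(1-\tfrac{1}{K})\ge\Delta_j/2$ in this average, which repairs the argument for all $K\ge2$. (The paper's $Q_{j,k,l}$ device avoids the issue entirely, since there any pair other than $(b_l,b_l)$ already costs $\Delta_j$.) With these two repairs your derivation closes, and the final substitution $\Delta_j(T_j-T_{j-1})/B=\Omega\bigl(KT^{1/B}/(B^2\Delta_j)\bigr)$ is correct.
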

\begin{proof}
Fix $k \in [K]$. 
We will construct a family of instances
$\{Q_{j,k,l}\}_{l \neq k}$ 
where $Q_{j,k,l}$ is defined as:
\textbox{
Instance $Q_{j,k,l}$:}{
    Arm $l$ is the Condorcet winner and the pairwise preferences are defined as:
    \[
    P_{lm} = \half + 2 \Delta_j, \forall m \in [K] \setminus \{l\}; \qquad
    P_{km} = \half + \Delta_j, \forall m \in [K]\setminus \{l,k\} ;
    \]
    and $P_{m m'} = \half$ for remaining pairs $(m,m')$.
}
We also let $Q_{j,k,k} := E_{j,k}$.
Note that the regret is $\geq \Delta_j$
if the underlying instance is $Q_{j,k,l}$
and the pair played is not $(b_l,b_l)$.
We have that 
\[
\sup_{I : \epsilon_{\min}(I)= \Delta_j} \E[R_T(I)] \geq \Delta_j \sum_{t=1}^T \frac{1}{K} \sum_{l \neq k}Q_{j,k,l}^t \paren{ (b_{t_1}, b_{t_2}) \neq (b_l,b_l)}
    \,,
\]
where $Q_{j,k,l}^t$ denotes the distribution of observations available at time $t$ under instance $Q_{j,k,l}$ and $Q_{j,k,l}^t \paren{ (b_{t_1}, b_{t_2}) \neq (b_l,b_l)}$ is the probability that 
the algorithm does not play arm $(b_l, b_l)$ at time $t$ under $Q_{j,k,l}^t$.
In order to bound the above quantity we will need the 
following lemma from \cite{Gao+19}.
\begin{lemma}[Lemma 3 of \cite{Gao+19}]
Let $Q_1, \cdots Q_K$ be probability measures on some common probability space $(\Omega, \mathcal{F} )$, and $\psi : \Omega \rightarrow [K]$ be any measurable function (i.e., test). Then, for any tree $\mathcal{T} = ([K], E)$ with vertex set $[K]$ and edge set $E$, 
\[
\frac{1}{K} \sum_{i = 1}^K Q_i(\psi \neq i) \geq \frac{1}{K} \sum_{(l,l') \in E} \int \min\{dQ_l, dQ_{l'}\}
    \,.
\]
\end{lemma}
Using the above lemma for the star graph centered at $k$, we have that
\begin{align*}
   \sup_{I : \epsilon_{\min}(I)= \Delta_j} \E[R_T(I)] & \geq \Delta_j \sum_{t=1}^T \frac{1}{K}\sum_{l \neq k} \int \min\{dQ_{j,k,k}^t , dQ_{j,k,l}^t\} \\
        & \overset{(a)}{\geq} \Delta_j \sum_{t=1}^{T_j} \frac{1}{K}\sum_{l \neq k} \int \min\{dQ_{j,k,k}^t , dQ_{j,k,l}^t\} \\
        & \overset{(b)}{\geq}  \Delta_j \sum_{t=1}^{T_j} \frac{1}{K}\sum_{l \neq k} \int \min\{dQ_{j,k,k}^{T_j} , dQ_{j,k,l}^{T_j}\} \\
        & {\geq}   \Delta_j \sum_{t=1}^{T_j} \frac{1}{K}\sum_{l \neq k} \int_{A_j} \min\{dQ_{j,k,k}^{T_j} , dQ_{j,k,l}^{T_j}\} \\
        & \overset{(c)}{\geq}  \Delta_j \sum_{t=1}^{T_j} \frac{1}{K}\sum_{l \neq k} \int_{A_j} \min\{dQ_{j,k,k}^{T_{j-1}} , dQ_{j,k,l}^{T_{j-1}}\} \numberthis \label{eqn:int1}
        \,,
\end{align*}
where $(a)$ follows because $T_j \leq T$, $(b)$
follows due to the fact that $\int \min\{dP,dQ\} = 1-\TV(P,Q)$ and the fact that $\TV(Q_{j,k,k}^{T_j},Q_{j,k,l}^{T_j})$
is at least $\TV(Q_{j,k,k}^{t},Q_{j,k,l}^{t})$
as the sigma algebra $\mathcal{F}_{Q_{j,k,k}^{t}}$ of 
$Q_{j,k,k}^{t}$ is a subset of the sigma algebra  
$\mathcal{F}_{Q_{j,k,k}^{T_j}}$ of $Q_{j,k,k}^{T_j}$, $(c)$ follow
from the fact that the event $A_j$ is determined by observations until $T_{j-1}$ as explained in the proof of Lemma~\ref{lem:lb-1}.
We then have that 
\begin{align*}
    \int_{A_j} \min\{dQ_{j,k,k}^{T_{j-1}} , dQ_{j,k,l}^{T_{j-1}}\} &= 
    \int_{A_j} \frac{dQ_{j,k,k}^{T_{j-1}}+  dQ_{j,k,l}^{T_{j-1}} - |dQ_{j,k,k}^{T_{j-1}}-  dQ_{j,k,l}^{T_{j-1}}|}{2} \\
    & = \frac{Q_{j,k,k}^{T_{j-1}}(A_j) + Q_{j,k,l}^{T_{j-1}}(A_j)}{2} - \int_{A_j} \frac{ |dQ_{j,k,k}^{T_{j-1}}-  dQ_{j,k,l}^{T_{j-1}}|}{2} \\
    & \overset{(a)}{\geq} Q_{j,k,k}^{T_{j-1}}(A_j) - \half \TV\paren{Q_{j,k,k}^{T_{j-1}}, Q_{j,k,l}^{T_{j-1}}} - \TV\paren{Q_{j,k,k}^{T_{j-1}}, Q_{j,k,l}^{T_{j-1}}} \\
    & =Q_{j,k,k}^{T_{j-1}}(A_j) - \frac{3}{2} \TV\paren{Q_{j,k,k}^{T_{j-1}}, Q_{j,k,l}^{T_{j-1}}} \numberthis \label{eqn:int2}
        \,,
\end{align*}
where $(a)$ follows from the fact that $\TV(P,Q) = \sup_{A} |P(A) - Q(A)|$.
Let us define $\tau_l$ to be the random variable for the number of times arm $l$ is played until $T_{j-1}$
We also have that 
\begin{align*}
    \frac{1}{K} \sum_{l \neq k } \TV\paren{Q_{j,k,k}^{T_{j-1}}, Q_{j,k,l}^{T_{j-1}}}  &\leq  \frac{1}{K} \sum_{l \neq k } \sqrt{\frac{1}{2} D_{\text{KL}}(Q_{j,k,k}^{T_{j-1}} || Q_{j,k,l}^{T_{j-1}})} \\
    & \overset{(a)}{\leq} \frac{1}{K} \sum_{l \neq k } \sqrt{\frac{1}{2} \cdot 16\Delta_j^2 \E_{E_{j,k}}[\tau_{l}]} = \frac{1}{K}\sum_{l\neq k} \sqrt{8\Delta_j^2 \E_{E_{j,k}}[\tau_{l}] }\\
    & \overset{(b)}{\leq} \sqrt{\frac{8\Delta_j^2\E_{E_{j,k}}[ \sum_{l \neq k } \tau_{l}]}{K}} \\
    &\overset{(c)}{\leq} \sqrt{\frac{8\Delta_j^2}{K} 2T_{j-1}} =\frac{1}{6B} \numberthis \label{eqn:int3}
        \,,
\end{align*}
where $(a)$ follows from a similar calculation as \cref{eq:kl_bound} in the proof of Lemma~\ref{lem:lb-1}, 
$(b)$ follows from the concavity of $\sqrt{x}$ and Jensen's inequality, and $(c)$ follows from the fact that  $\sum_{l=1}^K\tau_{l} \leq T_{j-1}$.

Combining \cref{eqn:int1,eqn:int2,eqn:int3} we have that 
\begin{align*}
   \sup_{I : \epsilon_{\min}(I)= \Delta_j} \E[R_T(I)] & \geq \Delta_j T_j \paren{\pr(E_{j,k}(A_j)) - \frac{1}{4B}}
        \,.
\end{align*}
Since the above inequality holds for all $k \in [K]$,
by averaging we get
\begin{align*}
   \sup_{I : \epsilon_{\min}(I)= \Delta_j} \E[R_T(I)] & \geq \Delta_j T_j \paren{ \frac{1}{K}\sum_{k=1}^K\pr(E_{j,k}(A_j)) - \frac{1}{4B}} \\
    & = \Delta_j T_j \paren{p_j - \frac{1}{4B}} \\
    & \geq \Delta_j T_j \frac{1}{4B}
        \,.
\end{align*}
Substituting the value of $\Delta_j T_j$ we get
\begin{align*}
    \sup_{I : \epsilon_{\min}(I)= \Delta_j} \E[R_T(I)] & \geq \Delta_j T_j \frac{1}{4B}
        = \frac{\sqrt{K}}{24 B} T^{-(j-1)/2B} T^{j/B} \frac{1}{4B} \\
        & = \frac{\sqrt{K}}{24 B} T^{(j-1)/2B} T^{1/B} \frac{1}{4B}
        =  \Omega\paren{\frac{K T^{1/B}}{B^2 \Delta_j}}
            \,.
\end{align*}

\end{proof}

Finally,
$\sum_{j=1}^B p_j \geq \half$ implies that there exists
$j \in [B]$ with $p_j \geq 1/2B$.
Combining the two lemmas above, we get that 
there exists $j \in [B]$ with $p_j \geq 1/2B$ such that the algorithm 
incurs a regret of $\Omega\paren{\frac{K T^{1/B}}{B^2 \Delta_j}}$.
In this case, there must exist an instance $E_{j,k}$
with gap $\epsilon_{\min}(E_{j,k}) = \Delta_j$
such that the regret of the algorithm 
under $E_{j,k}$ is $\Omega\paren{\frac{K T^{1/B}}{B^2 \Delta_j}}$.
This completes the proof of our lower bound.

\section{Experimental Results} \label{sec:comp-results}

We provide a summary of computational results of our algorithms for the batched  dueling bandits problem. We conducted our computations using C++ and Python 2.7 with a $2.3$ Ghz Intel Core $i5$ processor and $16$ GB $2133$ MHz LPDDR3 memory.

\ifConfVersion
\begin{figure*}[t]
     \centering
     \begin{subfigure}[b]{0.33\textwidth}
         \centering
         \includegraphics[width=\textwidth]{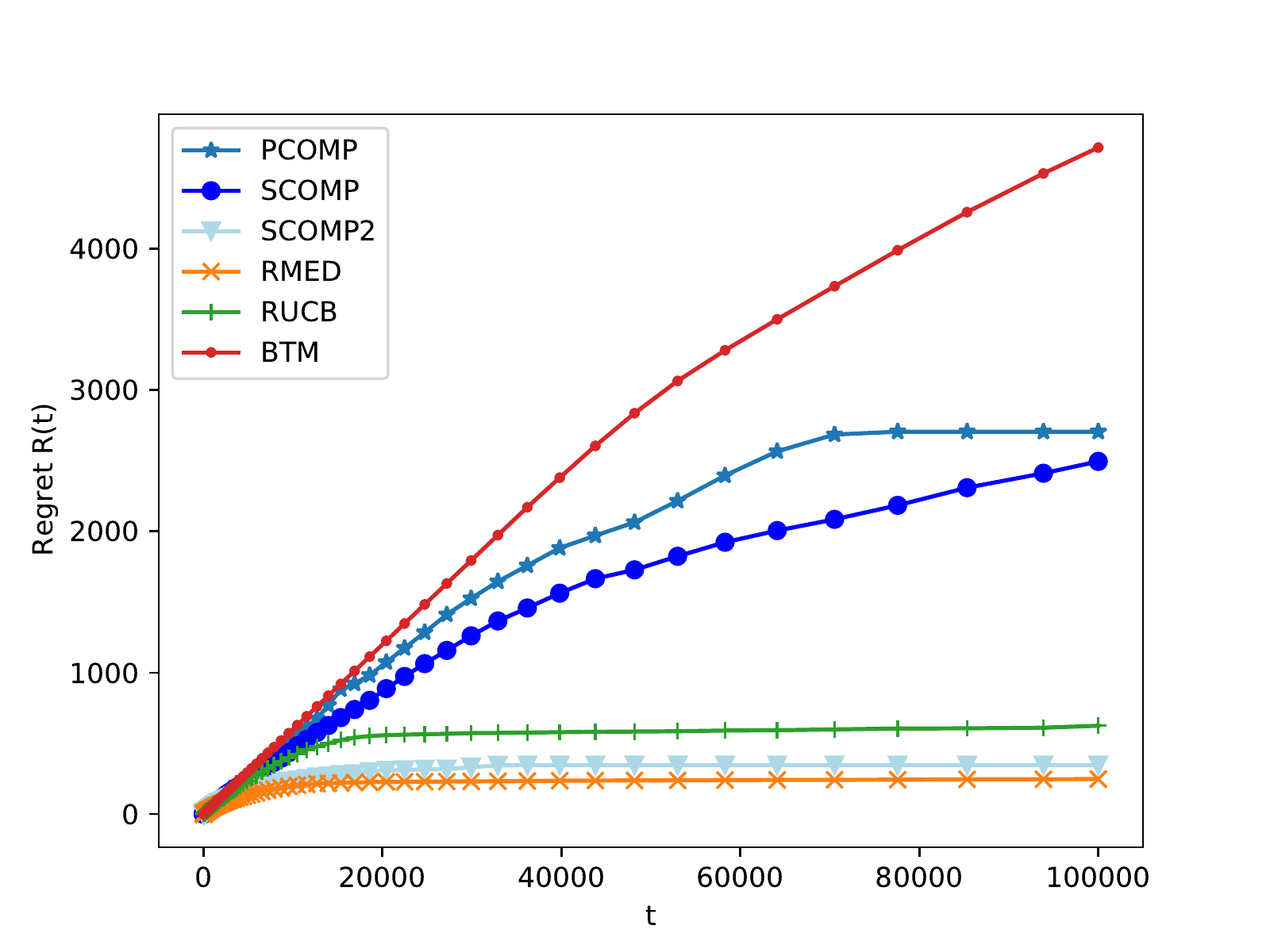}
         \caption{Six rankers}
     \end{subfigure}
     \begin{subfigure}[b]{0.33\textwidth}
         \centering
         \includegraphics[width=\textwidth]{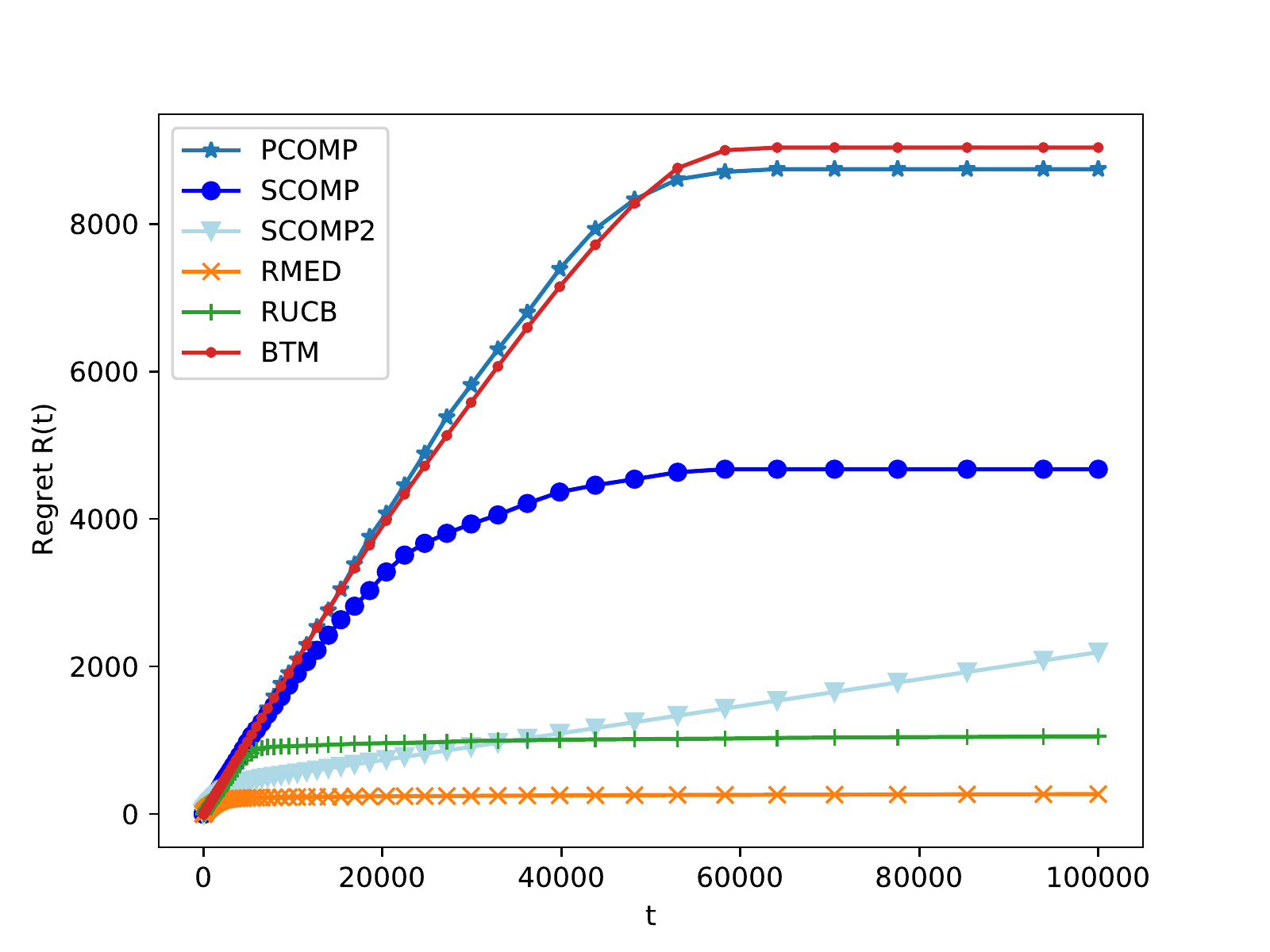}
         \caption{Sushi}
     \end{subfigure} \\
     \begin{subfigure}[b]{0.33\textwidth}
         \centering
         \includegraphics[width=\textwidth]{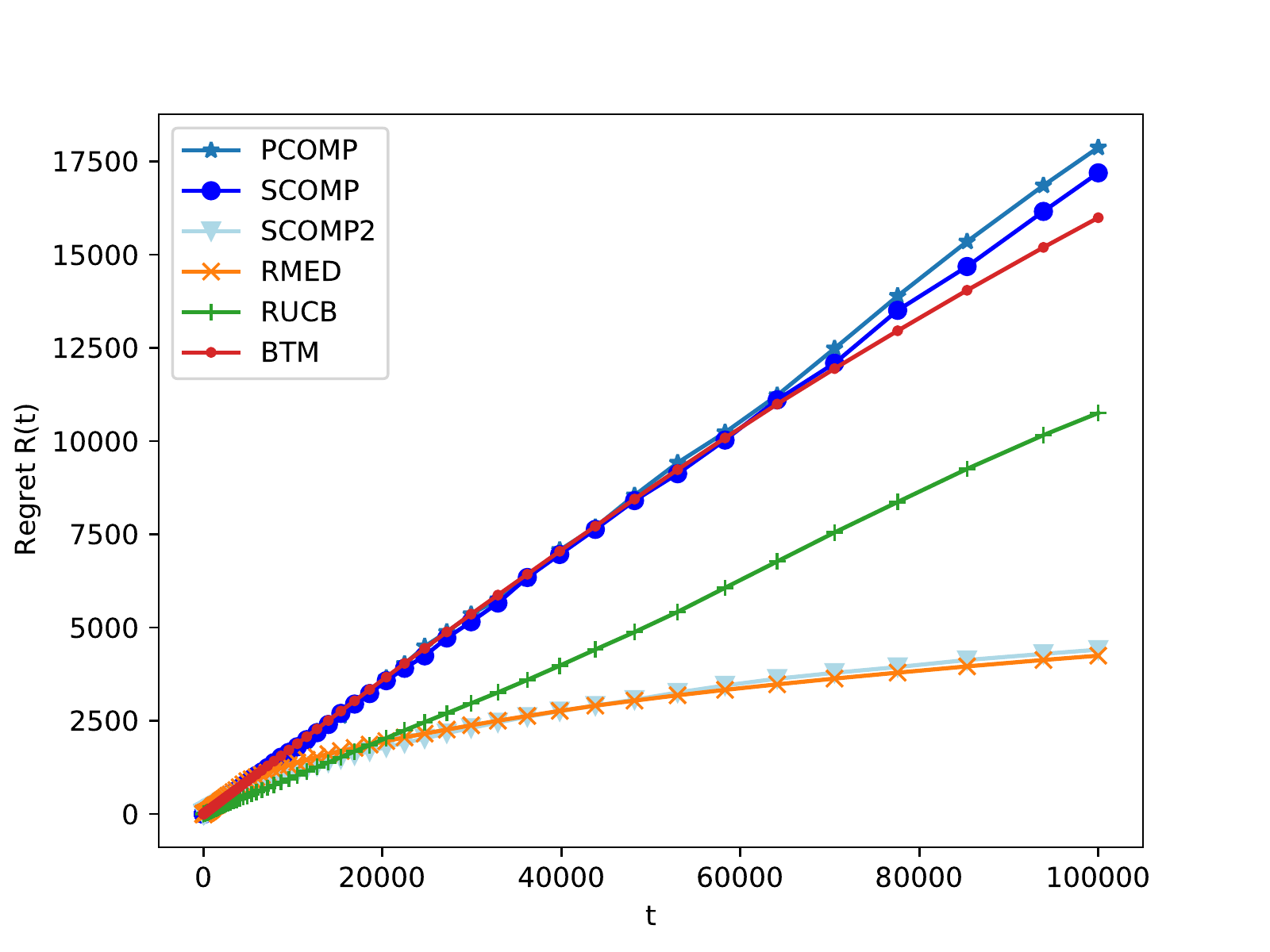}
         \caption{$\mathtt{SYN}$-$\mathtt{BTL}$}
     \end{subfigure}
     \begin{subfigure}[b]{0.33\textwidth}
         \centering
         \includegraphics[width=\textwidth]{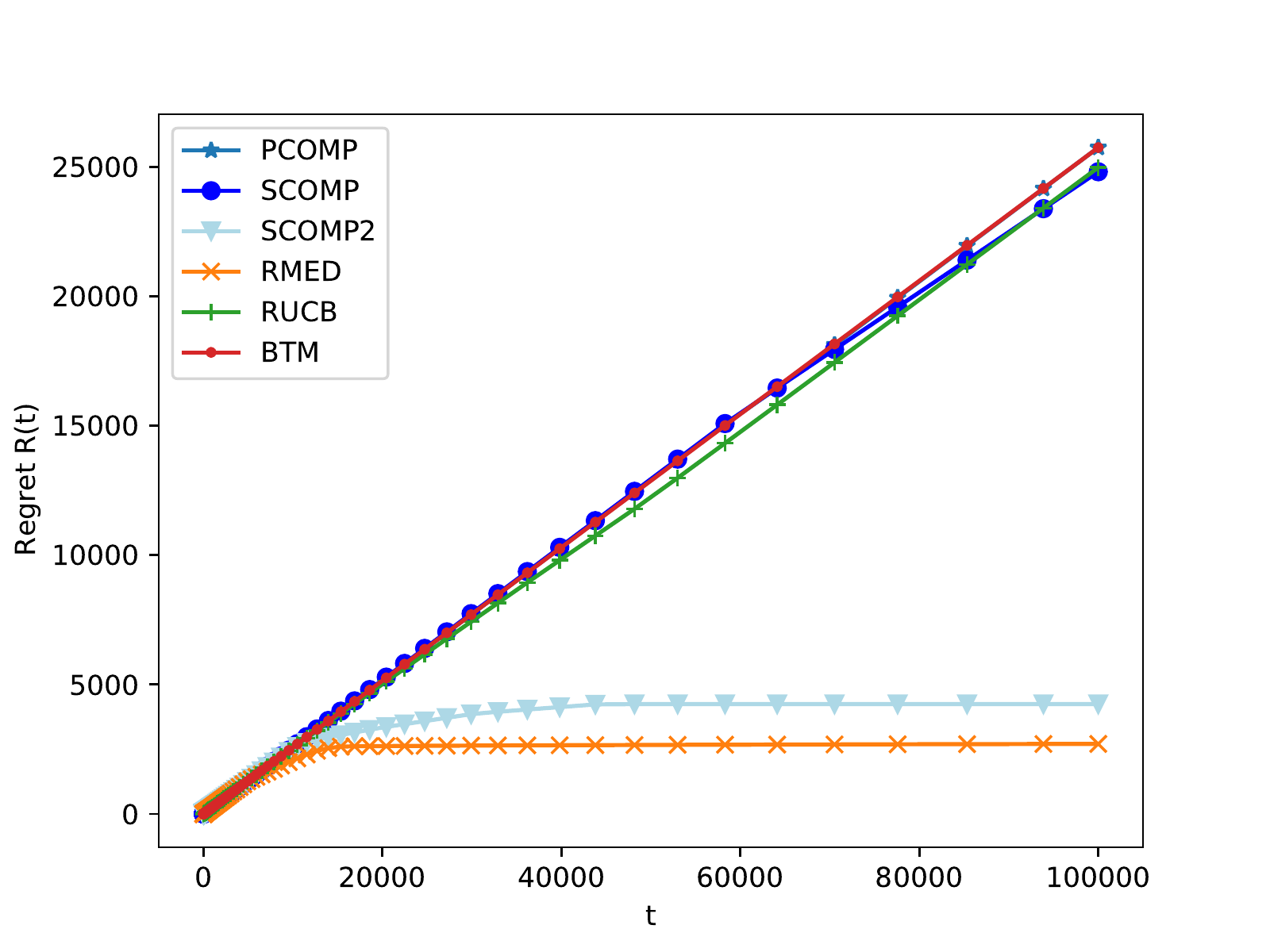}
         \caption{$\mathtt{SYN}$-$\mathtt{CD}$}
     \end{subfigure}
        \caption{Regret v/s t plots of algorithms}
        \label{fig:comparisons}
\end{figure*}
\else
\fi

\textbf{Experimental Setup.}  
We compare all our algorithms, namely \pcomp, \scomp, and \scomp2 to a representative set of sequential algorithms for dueling bandits. Specifically, we use the 
dueling bandit library due to \cite{Komiyama+15a}, 
and compare our algorithms to RUCB~\cite{Zoghi+14}, RMED1~\cite{Komiyama+15a}, and \textsc{Beat-the-Mean}~\cite{YueJo11}. Henceforth, we refer to \textsc{Beat-the-Mean} as BTM.
We plot the cumulative regret $R(t)$ incurred by the algorithms against time $t$.
Furthermore, to illustrate the dependence on $B$, we run another set of experiments on \scomp2 and plot the cumulative regret $R(t)$ incurred by \scomp2 against time $t$ for varying values of $B$.\footnote{We also conducted these experiment for \pcomp \ and \scomp \ and the conclusions were similar.}
We perform these experiments 
using both real-world
and synthetic data.
We use the following datasets:

\textbf{Six rankers.} This real-world dataset is based on the $6$ retrieval functions used in the engine of ArXiv.org.

\textbf{Sushi.} The Sushi dataset is based on the Sushi preference dataset~\cite{Kamishima03} that contains the preference data regarding $100$ types of Sushi. A preference dataset using the top-$16$ most popular types of sushi is obtained.

\textbf{BTL-Uniform.} We generate synthetic data using the Bradley-Terry-Luce (BTL) model. Under this model, each arm $b_i\in \B$ is associated 
with a weight $w_i > 0$ (sampled uniformly in the interval $(0,1]$), and we set $P_{i,j} = w_{i}/(w_i+w_j)$. We set the number of arms $K = 100$. Note that the data generated in this way satisfies SST and STI~\cite{YueBK+12}. We refer to this data as $\mathtt{SYN}$-$\mathtt{BTL}$.

\textbf{Hard-Instance.} The last dataset is a synthetic dataset inspired by the hard instances that we construct for proving our lower bound~(see \Cref{thm:lb-init}). Again, we set $K=100$, and pick $\ell \in [K]$ uniformly at random as the Condorcet winner. We select $\Delta$ uniformly in $(0, 0.5)$, and set $P_{\ell, i} = \frac{1}{2} + \Delta$ for $i \neq \ell$. Furthermore, for all $i, j \neq \ell$, we set $P_{i, j} = 1/2$. We refer to this data as $\mathtt{SYN}$-$\mathtt{CD}$.

Note that there exists a Condorcet winner in all datasets. Moreover, the $\mathtt{SYN}$-$\mathtt{BTL}$ dataset satisfies SST and STI. We repeat each experiment $10$ times and report the average regret. 
In our algorithms, we {use the KL-divergence based confidence bound (as in RMED1) for elimination as it performs much better empirically (and our  theoretical bounds continue to hold). In particular, we replace lines $5$, $6$ and $8$ in \pcomp, \scomp\ and \scomp2, respectively, with KL-divergence based 
elimination criterion that eliminates an arm $i$ if there exists another arm $j$ if $\hat{P}_{ij} < \half$ and $N_{ij} \cdot D_{\text{KL}}(\hat{P}_{ij}, \half) > \log(T\delta)$ where $N_{ij}$ is the number of times arm $i$ and $j$ are played together.
} We report the average cumulative regret at each time step.

\textbf{Comparison with sequential dueling bandit algorithms.}
\ifConfVersion
\else
\begin{figure}[t]
     \centering
     \begin{subfigure}[b]{0.45\textwidth}
         \centering
         \includegraphics[width=\textwidth]{plots/compare_arxiv.pdf}
         \caption{Six rankers}
     \end{subfigure}
     \begin{subfigure}[b]{0.45\textwidth}
         \centering
         \includegraphics[width=\textwidth]{plots/compare_sushi.pdf}
         \caption{Sushi}
     \end{subfigure}
     \begin{subfigure}[b]{0.45\textwidth}
         \centering
         \includegraphics[width=\textwidth]{plots/compare_btl_100.pdf}
         \caption{$\mathtt{SYN}$-$\mathtt{BTL}$}
     \end{subfigure}
     \begin{subfigure}[b]{0.45\textwidth}
         \centering
         \includegraphics[width=\textwidth]{plots/compare_cd_100.pdf}
         \caption{$\mathtt{SYN}$-$\mathtt{CD}$}
     \end{subfigure}
        \caption{Regret v/s t plots of algorithms}
        \label{fig:comparisons}
\end{figure}
\fi
As mentioned earlier, 
we compare our algorithms against a representative set of sequential dueling bandits algorithms
(RUCB \cite{Zoghi+14}, RMED1 \cite{Komiyama+15a}, and BTM \cite{YueJo11}). 
Note that the purpose of these experiments is to perform a sanity check to ensure that our batched algorithms, using a small number of batches,
perform well when compared with sequential 
algorithms.
We set $\alpha = 0.51$ for RUCB, and $f(K) = 0.3 K^{1.01}$ for RMED1, and $\gamma = 1.3$ for BTM. 
We chose these parameters as they are known to perform well both theoretically and empirically \cite{Komiyama+15a}.
We set $T = 10^5$, $\delta = 1/TK^2$ and $B = \lfloor\log(T)\rfloor = 16$.
We plot the results in \Cref{fig:comparisons}.
We observe that \scomp2 performs comparably to RMED1 in all datasets, even outperforms RUCB in $3$ out of the $4$ datasets, and always beats BTM. Notice that both \pcomp \ and \scomp \ considerably outperform BTM on the six rankers and sushi data; however their performance degrades on the synthetic data demonstrating the dependence on $K$.

\textbf{Trade-off with number of batches $B$.}
\ifConfVersion
\else
\begin{figure}[t]
     \centering
     \begin{subfigure}[b]{0.45\textwidth}
         \centering
         \includegraphics[width=\textwidth]{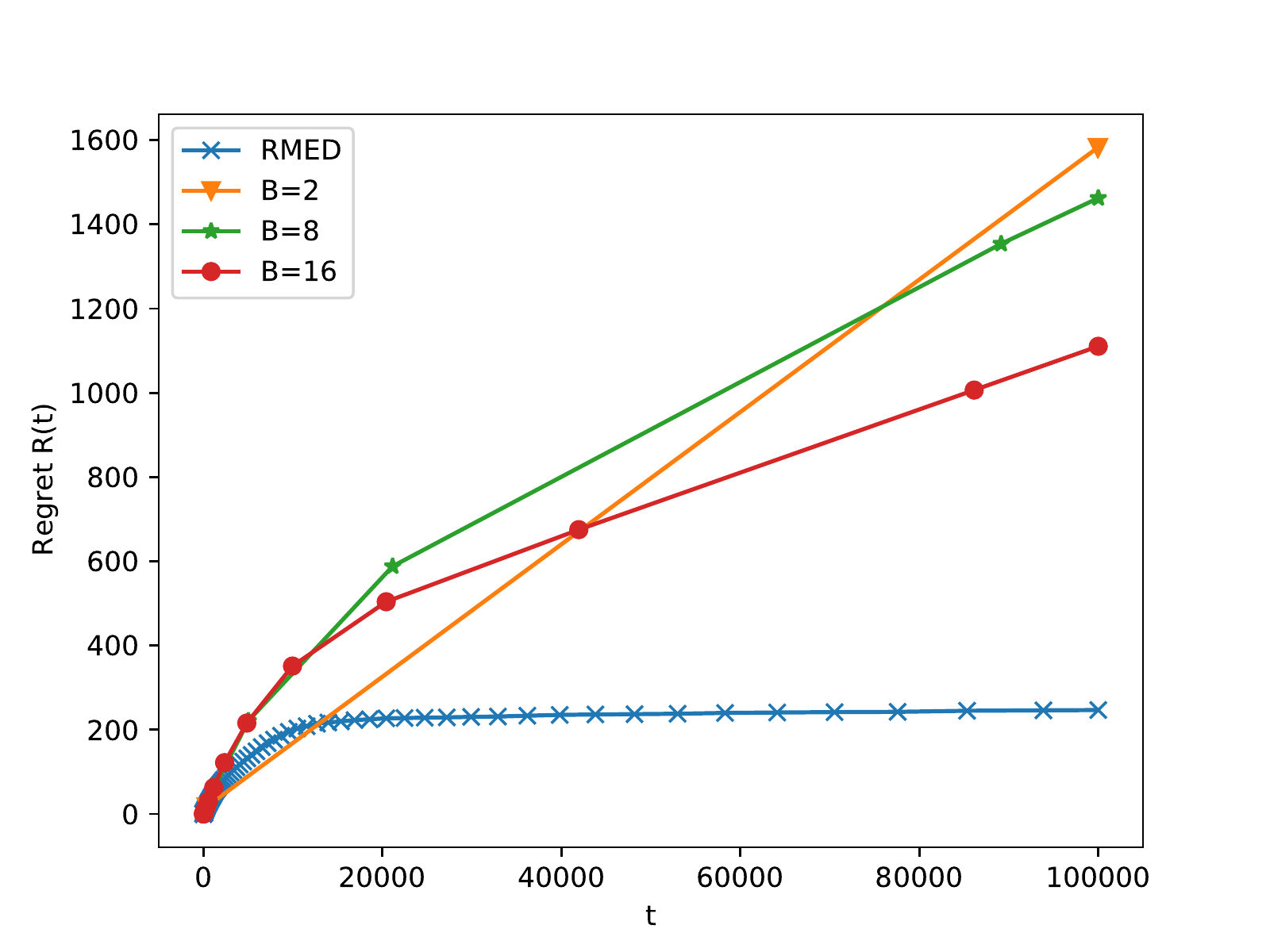}
         \caption{Six rankers}
     \end{subfigure}
     \begin{subfigure}[b]{0.45\textwidth}
         \centering
         \includegraphics[width=\textwidth]{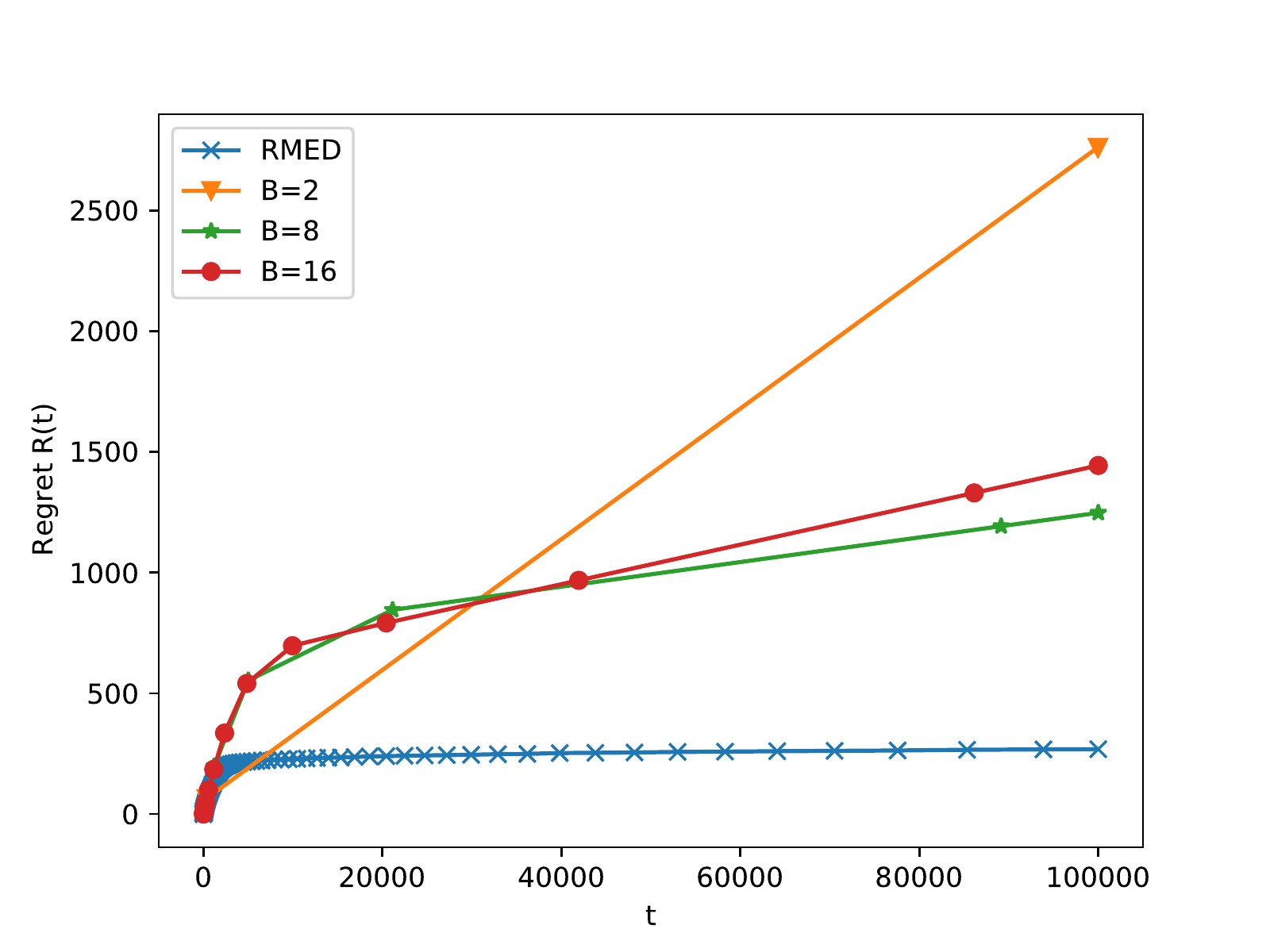}
         \caption{Sushi}
     \end{subfigure}
     \begin{subfigure}[b]{0.45\textwidth}
         \centering
         \includegraphics[width=\textwidth]{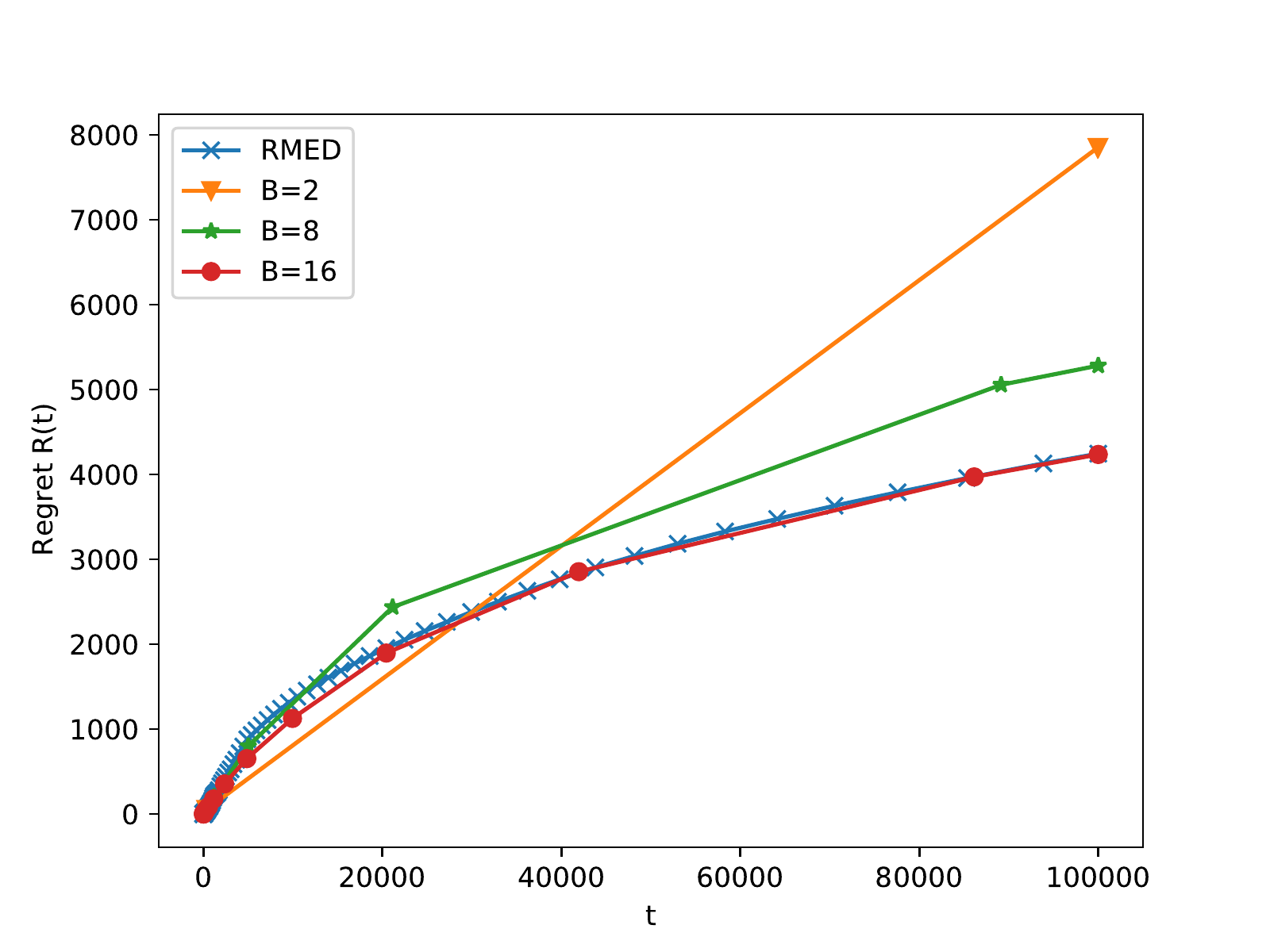}
         \caption{$\mathtt{SYN}$-$\mathtt{BTL}$}
     \end{subfigure}
     \begin{subfigure}[b]{0.45\textwidth}
         \centering
         \includegraphics[width=\textwidth]{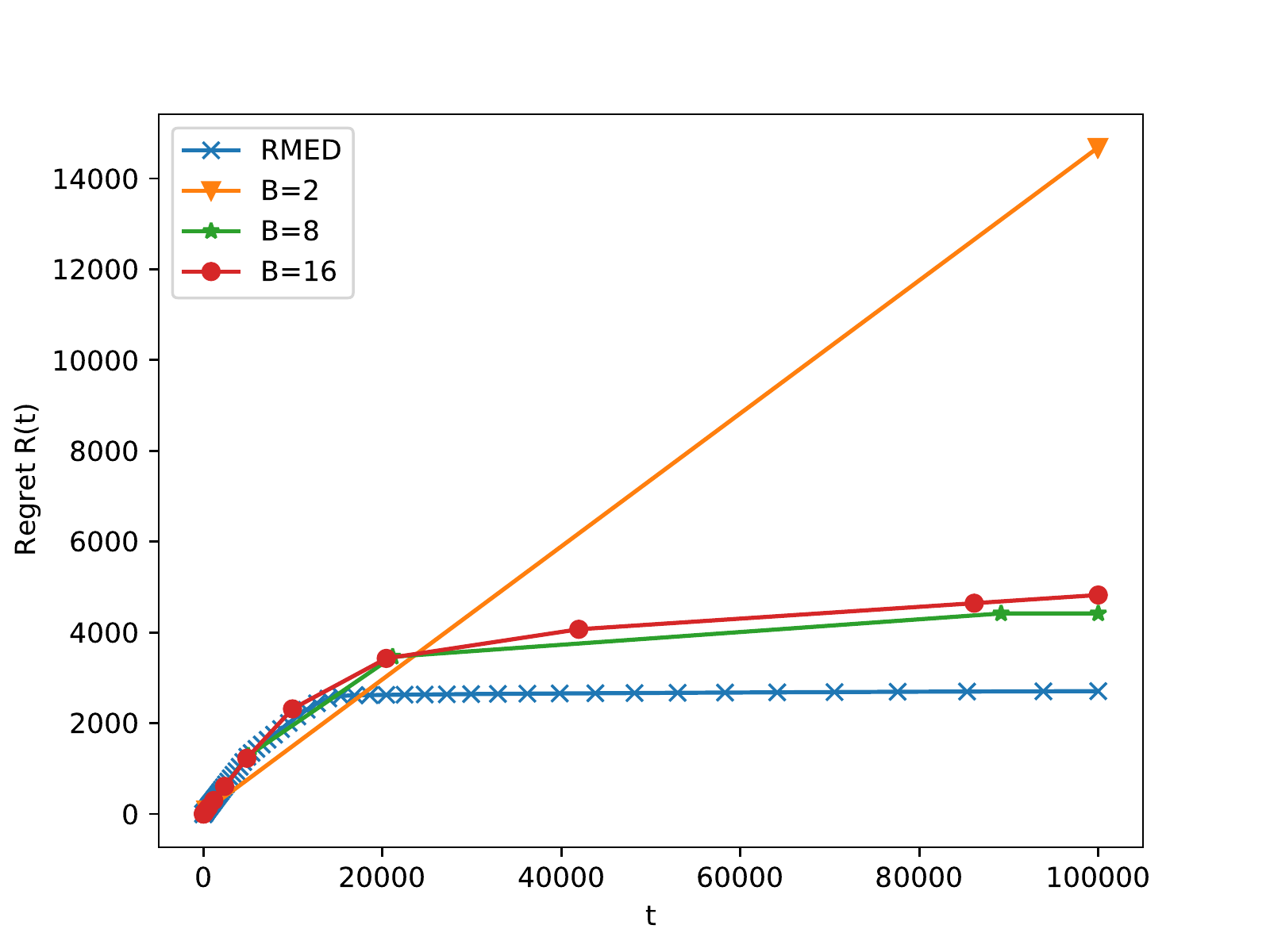}
         \caption{$\mathtt{SYN}$-$\mathtt{CD}$}
     \end{subfigure}
        \caption{Regret v/s B for \scomp2.}
        \label{fig:tradeoff}
\end{figure}
\fi
\ifConfVersion
We study the trade-off of cumulative regret against the number of batches using \scomp2.  We set $T=10^5$, and vary $B \in \{2, 8, 16\}$. We also plot the regret incurred by RMED1 as it performs the best amongst all sequential algorithms (and thus serves as a good benchmark). We plot the results in \Cref{fig:tradeoff} in \Cref{app:plots}.
We observe that as we increase the number of batches, the (expected) cumulative regret decreases. Furthermore, we observe that on the synthetic datasets (where $K = 100$), the regret of \scomp2 approaches that of RMED1; in fact, the regret incurred is almost identical for $\mathtt{SYN}$-$\mathtt{BTL}$ dataset.
\else
We study the trade-off of cumulative regret against the number of batches using \scomp2. We set $T=10^5$, and vary $B \in \{2, 8, 16\}$. We also plot the regret incurred by RMED1 as it performs the best amongst all sequential algorithms (and thus serves as a good benchmark). We plot the results in \Cref{fig:tradeoff}.
We observe that as we increase the number of batches, the (expected) cumulative regret decreases. Furthermore, we observe that on the synthetic datasets (where $K = 100$), the regret of \scomp2 approaches that of RMED1; in fact, the regret incurred is almost identical for the $\mathtt{SYN}$-$\mathtt{BTL}$ dataset.
\fi

{
\bibliographystyle{abbrv}
\bibliography{main}
}

\end{document}